\documentclass{article}

\usepackage[main, final]{neurips_2026}
\usepackage{amsmath,amsfonts,bm}

\def\eqref#1{equation~\ref{#1}}
\def\1{\bm{1}}

\DeclareMathAlphabet{\mathsfit}{\encodingdefault}{\sfdefault}{m}{sl}
\SetMathAlphabet{\mathsfit}{bold}{\encodingdefault}{\sfdefault}{bx}{n}

\usepackage{xcolor}         % colors
\definecolor{mygreen}{HTML}{05A615}
\definecolor{myblue}{HTML}{598BE7}
\usepackage[utf8]{inputenc} % allow utf-8 input
\usepackage[T1]{fontenc}    % use 8-bit T1 fonts
\usepackage[colorlinks,
			linkcolor=myblue,
			anchorcolor=myblue,
			citecolor=myblue
			]{hyperref}      % hyperlinks
            
\usepackage{url}            % simple URL typesetting
\usepackage{booktabs}       % professional-quality tables
\usepackage{amsfonts}       % blackboard math symbols
\usepackage{nicefrac}       % compact symbols for 1/2, etc.
\usepackage{microtype}      % microtypography
\usepackage{colortbl}
\usepackage{caption}
\usepackage{wrapfig}
\usepackage{float} 
\usepackage{chngpage}
\usepackage{amsmath}
\usepackage{amssymb}
\usepackage{amsthm}
\usepackage{bm}
\usepackage{mathtools}
\usepackage{algorithm}
\usepackage{algorithmic}
\usepackage[font=small]{caption}
\usepackage{multirow}
\usepackage{adjustbox}
\usepackage{graphicx}
\usepackage{enumitem}
\usepackage[capitalize,noabbrev]{cleveref}

\usepackage{wrapfig}
\usepackage{dsfont}
\theoremstyle{plain}
\newtheorem{assumption}{Assumption}
\newtheorem{lemma}{Lemma}
\newtheorem{theorem}{Theorem}

\newtheorem*{theoremonefull}{Theorem 1}
\newtheorem*{theoremtwofull}{Theorem 2}

\theoremstyle{definition}

\theoremstyle{remark}

\usepackage[textsize=tiny]{todonotes}

\title{Higher-Order Action Supervision Makes A Strong Policy Class}

\author{
\textbf{Peng Cheng}$^{1,4}$\thanks{Equal contribution.},
Yunxian Hou$^{2}$\footnotemark[1],
Zhi Zhou$^{1,4}$,
\textbf{Qian Zhang}$^{3}$,
\textbf{Chang Huang}$^{3}$,
\textbf{Xianyuan Zhan}$^{1}$\thanks{Corresponding author.}\\
$^1$ Institute for AI Industry Research (AIR), Tsinghua University\\
$^2$ University of Electronic Science and Technology of China\\
$^3$ Horizon Robotics \quad $^4$ Zenfex AI Lab
}

\begin{document}

\maketitle
\setlength{\textfloatsep}{6pt}
\begin{abstract}
Modern data-driven decision-making methods, such as imitation learning (IL) and reinforcement learning (RL), have achieved great success in solving many complex tasks. However, these methods often suffer from serious control instability and robustness issues when applied in real-world applications such as robotics and autonomous driving, posing notable challenges for their practical deployment. We argue that this instability issue stems largely from their limitations in solely supervising and optimizing zeroth-order actions (i.e., the action labels), failing to account for higher-order action dynamics and temporal consistency. In this paper, we show that simultaneously supervising both zeroth- and first-order actions can dramatically enhance policies' performance and control robustness. To achieve this, we introduce a novel and elegant loss scheme supported by formal theoretical guarantees that can equip any off-the-shelf policy model (e.g., deterministic, stochastic, or flow policies) with the capability for higher-order action supervision, without requiring any structural modifications. Moreover, our proposed method can serve as a lightweight plug-and-play module that seamlessly integrates with a broad spectrum of existing offline RL frameworks. Extensive evaluations on OGBench and D4RL demonstrate that our approach yields substantial performance and robustness improvements across a wide range of continuous control environments. Notably, our method can also enhance policies' out-of-distribution (OOD) generalization capability in the challenging low-data regime, making it an ideal tool in tackling many real-world control problems.

% While offline reinforcement learning (RL) has achieved significant progress in solving complex decision-making problems, current methods suffer from control instability in realistic applications such as robotics and autonomous driving. We argue that this instability stems from the standard policy optimization process, which relies on a purely zeroth-order action matching paradigm. This absence of higher-order supervision restricts the learning process to point-wise action matching, thereby neglecting the local structural consistency of the policy. 
% In this study, we argue that relying solely on zeroth-order constraints is insufficient and that explicitly modeling first-order action variations is essential to ensure control stability. We therefore propose a simple yet effective policy class designed to simultaneously fit both zeroth-order action signals and first-order action variations, supported by formal theoretical guarantees. Furthermore, the proposed method functions as a lightweight plug-in module that readily integrates with mainstream offline RL frameworks. Extensive empirical evaluations demonstrate that our approach yields substantial performance improvements across various continuous control environments. Notably, in challenging limited-data regimes, this dual-constraint formulation provides reliable support for out-of-distribution (OOD) generalization.

\end{abstract}

\section{Introduction}
\label{introducion}
% Offline reinforcement learning (RL) aims to learn effective policies from static datasets without online interaction, making it crucial for real-world applications where data collection is expensive or risky \citep{levine2020offline, zhan2021deepthermal}. Existing policy optimization paradigms in offline RL encompass several technical frameworks, including deterministic modeling approaches \citep{fujimoto2021minimalist}, stochastic methods to extract high-reward actions \citep{kostrikov2021offline}, and score-based generative models designed to capture multimodal action distributions \citep{wang2022diffusion}.

Learning-based decision-making methods, such as imitation learning (IL) and reinforcement learning (RL), have achieved notable success in many complex tasks, such as games~\citep{mnih2015human, silver2017mastering}, robotics~\citep{kalashnikov2018qt,chi2023diffusion}, and autonomous driving~\citep{kendall2019learning, zheng2025diffusionbased}. However, when these learning-based approaches are deployed in real-world control problems, they are frequently criticized for their control instability and poor robustness. Examples of such issues include lack of smoothness in action outputs, severe jittering across consecutive steps, and poor generalization in offline learning under data sparsity~\citep{dulac2021challenges, song2023lipsnet,cheng2024look,cheng2026sample}.

In contrast to conventional control theories~\citep{kirk2004optimal} that place heavy emphasis on control stability and higher-order behavior modeling, contemporary learning-based decision-making methods in the AI community predominantly only focus on supervising and optimizing zeroth-order actions (i.e., the actual action labels).
In realistic control scenarios, the neglect of higher-order temporal consistency in actions frequently leads to control instability and non-smooth action execution, which constitutes a critical bottleneck for real-world deployment \citep{dulac2021challenges}. Under physical hardware constraints, such high-frequency action fluctuations can cause excessive mechanical wear and energy consumption, drastically shortening the lifespan of both the system and its constituent parts~\citep{song2023lipsnet,song2025lipsnet++,mysore2021regularizing,de2023oscillating,zhan2022deepthermal,zhan2025data}. These vulnerabilities pose severe safety risks and degrade operational reliability across various domains, including drone control \citep{mysore2021regularizing,shi2019neural}, robotic manipulation \citep{yu2021taac,zhang2024robust}, and autonomous driving \citep{cai2020high,chen2021addressing,wasala2020trajectory,lee2024gradient}. Fundamentally, this issue arises because the policy becomes excessively sensitive to minor state-action perturbations, yielding disproportionately large action variations \citep{song2023lipsnet,chen2021addressing}. We empirically illustrate this vulnerability using a maze navigation task in Figure \ref{fig:toy_case}. As shown in Figure \ref{fig:toy_case}(a), the policy learned via standard behavior cloning (BC), which lacks higher-order action supervision, produces large and highly oscillatory first-order actions, eventually producing a suboptimal trajectory. By contrast, jointly supervising both zeroth- and first-order actions enforces much stronger behavioral regularization, resulting in significantly smoother and optimal execution (Figure \ref{fig:toy_case}(b)). 
% However, a fundamental limitation shared by these approaches is their exclusive reliance on zeroth-order action supervision. Specifically, these methods focus on point-wise supervision to align the policy with the action labels in the dataset, while overlooking the higher-order structural information of the policy manifold. 
% irregular first-order actions, leading to highly unstable zero-order trajectories. Conversely, introducing higher-order supervision enforces consistent first-order actions, resulting in significantly smoother execution. Despite its practical significance, the challenge of ensuring stable execution through higher-order supervision remains under-explored in current offline RL literature.
\begin{figure}[t]
    \centering
    \includegraphics[width=\linewidth]{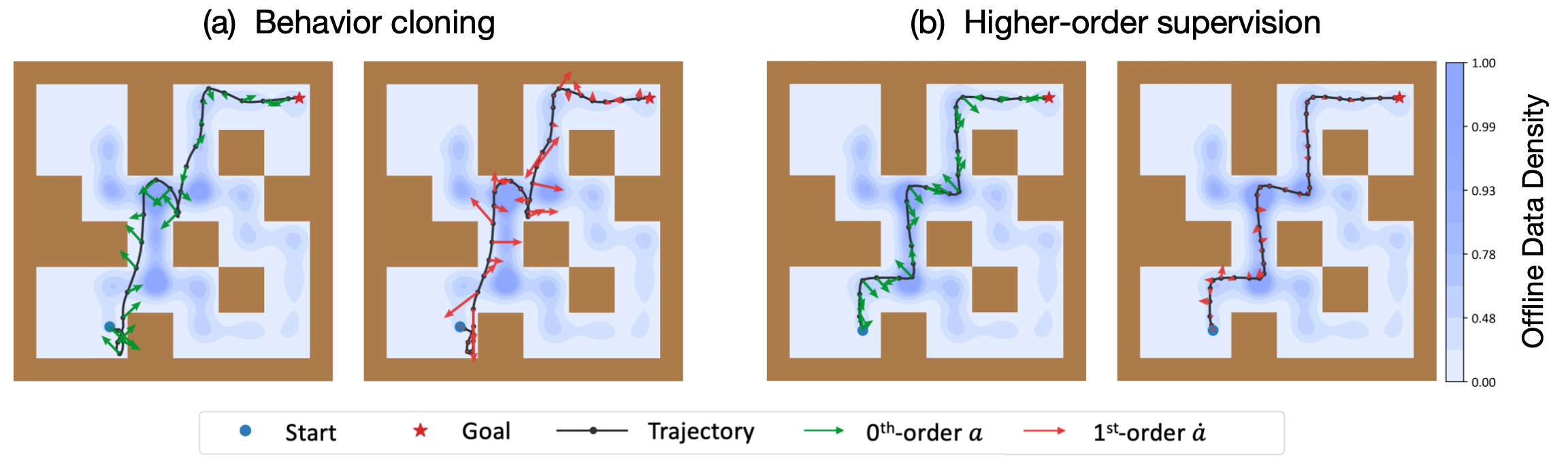}
    \caption{\small Trajectories and zeroth-/first-order actions of policies learned on the \texttt{maze2d} task with 20 successful trajectories and 500k training steps.
    Green arrows denote the zeroth-order action $a$ (joint torque), while red arrows denote the first-order action $\dot{a}$ (time derivative of joint torque, computed as $a'-a$).
    (a) Standard behavior cloning (BC) without $\dot{a}$ supervision, which yields a suboptimal trajectory with highly noisy and temporally inconsistent first-order actions.
    (b) Our method with higher-order action supervision, which yields an optimal navigation trajectory with much smoother and more stable zeroth- and first-order actions.
    }
    \label{fig:toy_case}
\end{figure}

However, incorporating higher-order action information in policy modeling is not an easy task. As the $0^\text{th}$-order action $a$ and the $1^\text{st}$-order action $\dot{a}$ live in two completely different action spaces, na\"ively predicting these two sets of actions not only doubles the action space, causing learning difficulties, but also faces potential conflicting gradient issues that are commonly encountered in multi-task learning~\citep{yu2020gradient}. Therefore, existing works on action smoothness improvement primarily focus on introducing action fluctuation penalty regularizations~\citep{mysore2021regularizing,kobayashi2022l2c2,shen2020deep,zhao2022adversarially}, or designing special neural network architectures to constrain the network's Lipschitz constant, e.g., via spectral normalization or specialized modules like LipsNet \citep{song2023lipsnet,song2025lipsnet++}. However, both streams of work suffer from a number of limitations. Adding smoothness penalties relies on task-specific ad-hoc design and cumbersome tuning efforts, whereas using specialized, heavily regularized neural network architectures loses generality and restricts the model's expressivity to capture complex data distributions.

In this work, we introduce a novel and elegant learning scheme that can achieve simultaneous $0^\text{th}$- and $1^\text{st}$-order action supervision. The best part is, we can easily transform any given policy model into a more powerful policy class without making any structural modification. The secret recipe is to leverage the intrinsic property of the policy model's time derivative for supervision, i.e., $\dot{\pi}(s)=\nabla_s \pi(s)\cdot \dot{s}$. We provide practical instantiations of loss schemes for deterministic, stochastic, and generative flow matching policy classes, supported by formal theoretical guarantees for the IL setting. Moreover, we show that our enhanced policy class is naturally compatible with the continuous-time RL paradigm, like the Hamilton-Jacobi-Bellman (HJB) framework. This leads to a plug-and-play enhancement module that can be seamlessly integrated with a wide spectrum of existing offline RL algorithms with minimal code modifications. We evaluate our method through extensive experiments on diverse tasks from the large-scale OGBench~\citep{ogbench_park2025} and widely used D4RL~\citep{fu2020d4rl} benchmarks. Despite its simplicity, our approach achieves consistent and significant performance improvements when added to different offline policy optimization paradigms. Notably, in challenging small-dataset settings, our proposed modification excels by providing robust OOD generalization support, enabling the agent to maintain stable and high performance even on tasks where the base algorithms fail or severely degrade. This makes our method an ideal tool for tackling many real-world control problems.

\section{Related work}
\label{sec:related_work}
\textbf{Smoothness and robustness enhancement for policy learning.}\quad Mitigating action fluctuation and trajectory drift remains a critical challenge for deploying policies in physical systems \citep{dulac2021challenges, song2023lipsnet}. To enforce action smoothness and robustness, prior studies in continuous control have primarily explored two directions: penalty-based regularizations~\citep{mysore2021regularizing,kobayashi2022l2c2,shen2020deep,zhao2022adversarially} and architectural enhancements~\citep{song2023lipsnet,song2025lipsnet++}. Penalty-based regularization approaches, such as CAPS \citep{mysore2021regularizing}, L2C2 \citep{kobayashi2022l2c2}, and SR$^2$L \citep{shen2020deep,zhao2022adversarially}, introduce action fluctuation penalty regularizations to their policy optimization objectives. However, they inherently rely on querying or sampling synthetic states to enforce local action consistency. In the offline setting, where interaction with the environment is not permitted, such state-dependent sampling may 
introduce significant OOD extrapolation errors, and often suffer from significant performance
variability across environments~\citep{christmann2024benchmarking}. In contrast, architectural enhancement methods explicitly constrain the network's Lipschitz constant, e.g., via spectral normalization or specialized modules like LipsNet \citep{song2023lipsnet,song2025lipsnet++}. However, imposing extra structural priors can also restrict the model's expressivity. This introduces a fundamental trade-off: while the policy becomes smoother, the restricted capacity commonly leads to underfitting, degraded control performance, and additional computational overhead during inference.
Concurrently, there are also a few attempts to incorporate higher-order action derivatives to enhance control smoothness, such as TaSIL~\citep{pfrommer2022tasil}, which minimizes the gap between the Taylor series expansion of the learned and expert policies. Although promising, it remains more theoretical rather than practical, due to the complex evaluation procedure involved.

% {\color{blue}
% In contrast, architectural methods that explicitly constrain the network's Lipschitz constant, e.g., via Spectral Normalization or specialized modules like LipsNet \citep{song2023lipsnet,song2025lipsnet++}, impose structural priors that can restrict the model's expressivity. Offline policies are learned entirely from static datasets, which often contain complex or multi-modal behavioral distributions; accurately modeling such data requires high-capacity networks. Furthermore, training these structurally constrained models effectively typically depends on extensive, high-quality data coverage, yet practical offline datasets are frequently limited in scale and consist predominantly of suboptimal transitions. This introduces a fundamental trade-off: while the policy becomes smoother, the restricted capacity commonly leads to underfitting, degraded control performance, and additional computational overhead during inference.}

\textbf{Offline RL and continuous-time RL.}\quad 
In this study, we primarily focus on offline policy learning, such as offline IL (e.g., behavior cloning, BC) and offline RL, which directly imitate or optimize over a fixed offline dataset without environment interaction. Compared with IL, offline RL is more complex, which optimizes the cumulative return while being regularized towards the behavior data to avoid extrapolation error accumulation~\citep{levine2020offline,kumar2019stabilizing}. Depending on how the learned policy is parameterized, existing offline RL algorithms can be classified into three categories. The first category employs deterministic policies and explicitly constrains the learned actions toward the dataset using distance-based behavior cloning penalties~\citep{fujimoto2021minimalist,li2022data, cheng2024look}. The second and also the more popular category adopts stochastic policies (e.g., Gaussian policies)~\citep{wu2019behavior, kumar2019stabilizing, kumar2020conservative, iql, lyu2022mildly,xuoffline,wang2023offline,xu2022a,xu2022constraints,niu2022when,li2023mind,maoodice,xu2026uni}, which maximizes the likelihood of high-return actions under some distribution-based data regularizations. Recently, the third category of offline RL methods has emerged, which leverages expressive generative models, such as diffusion or flow matching policies, to capture the complex, multimodal action distributions in the datasets~\citep{wang2022diffusion,janner2022planning,hansen2023idql,li2024diffstitch, maodiffusion,zhengsafe,Park2025FlowQ,zheng2025towards,liang2026dipole,zheng2026horizon}. However, all of the above methods are restricted to learning policies in the $0^\text{th}$-order action spaces, which suffer from notable control stability and robustness issues. Incorporating higher-order action supervision in offline RL remains entirely unexplored.
% to perform advantage-weighted imitation learning , aiming to maximize the likelihood of high-return actions through weighted Maximum Likelihood Estimation (MLE). The third category leverages highly expressive generative models, such as diffusion models \citep{wang2022diffusion,janner2022planning,hansen2023idql,li2024diffstitch, maodiffusion,zhengsafe,liu2025skill} and flow matching \citep{Park2025FlowQ}, to capture complex, multimodal action distributions inherent in suboptimal datasets. However, the potential benefits and critical impacts of first-order action variations on policy generalization and control stability remain entirely unexplored within these conventional offline RL frameworks.

Another research avenue closely related to our work is continuous-time RL, which transcends the structural limitations of discrete-time Markov decision processes (MDPs) by modeling state and action evolution continuously over time~\citep{doya2000reinforcement, wang2020reinforcement, munos2006policy}. The Hamilton-Jacobi-Bellman (HJB) equation serves as the core theoretical foundation for this domain \citep{lee2021policy,vamvoudakis2010online, bhasin2013novel, vamvoudakis2014online}. Recent advancements have successfully utilized HJB-based frameworks to demonstrate how continuous-time structural priors can directly guide stable policy evaluation and control \citep{kim2021hamilton, jia2022policy,lien2024enhancing}. Although the HJB-based framework naturally requires $1^\text{st}$-order action $\dot{a}$ information to capture $Q$-value's variation in continuous time, due to the lack of a proper strategy to evaluate $\dot{a}$ from policy $\pi$, existing methods often resort to a crude approximation or replace it with a fixed bounding value. This drastically degrades the value estimation quality under the HJB framework and leads to suboptimal performance.

\section{Extending Standard Policy Classes for Higher-Order Action Supervision}\label{sec:policy_class}
In this paper, unlike prior methods that mitigate action prediction instability by using specialized network architectures or introducing task-specific ad-hoc designs, we propose an elegant, lightweight, and plug-and-play learning scheme to transform a standard policy model (regardless of deterministic, stochastic, or generative flow policies) into a more powerful policy class. Our proposed method enables simultaneous $0^\text{th}$ and $1^\text{st}$-order action supervision for a policy model, maximally utilizing every bit of information in the offline data.

% unlike prior methods that mitigate action fluctuations by designing specialized network architectures or higher-order imitation techniques burdened by the complex computational overhead of Taylor expansions, we propose an extremely minimalist policy class. By fundamentally analyzing error propagation dynamics, we mathematically demonstrate that relying exclusively on zero-order action matching is insufficient to maintain stable control. Motivated by this rigorous analysis, our proposed policy class simultaneously optimizes zero-order and first-order losses during training. Crucially, this yields a highly efficient, plug-and-play module that can be seamlessly integrated into any existing offline reinforcement learning algorithm without introducing additional structural complexity.

% In this section, we propose \emph{higher-order imitation learning}, a simple and novel plug-in for imitation learning that augments standard action matching with supervision on the temporal evolution of expert behavior. Our formulation encourages the learned policy to match the local trajectory geometry induced by the expert. We instantiate this idea across deterministic policies, stochastic Gaussian policies, and flow-based generative policies. This yields a unified and lightweight framework that improves trajectory-level consistency and mitigates compounding errors in closed-loop execution.

\subsection{Notations}
\label{notations}
We consider the sequential decision-making problem modeled as an MDP, defined by the tuple $\mathcal{M} = \{ \mathcal{S}, \mathcal{A}, \mathcal{P}, r, \rho_0, \gamma \}$. $\mathcal{S}$ represents the state space, $\mathcal{A} \subset \mathbb{R}^d$ is the $d$-dimensional continuous action space, $\mathcal{P}(s'|s,a)$ denotes the stochastic transition dynamics, $r: \mathcal{S} \times \mathcal{A} \to \mathbb{R}$ is the reward function, $\rho_0(s)$ specifies the initial state distribution, and $\gamma \in [0, 1)$ is the discount factor. In the offline setting, the agent cannot interact with the environment and is instead provided with a static dataset $\mathcal{D}$ consisting of offline trajectories $\tau = \{s_0, a_0, r_0, s_1, a_1, \dots\}$. In the offline IL setting, we aim to learn an imitative policy $\pi$ to clone the expert behaviors specified in the data trajectories; whereas in offline RL, we aim to learn a policy $\pi$ exclusively from the dataset $\mathcal{D}$ that maximizes the expected cumulative discounted return: $J(\pi) = \mathbb{E}_{\tau \sim \pi} \left[ \sum_{t=0}^{\infty} \gamma^t r(s_t, a_t) \right]$. In this paper, we use $\pi(s)$ to denote the deterministic policy and $\pi(\cdot|s)$ to denote the stochastic policy. Unless told otherwise, we use $a$ to denote the $0^\text{th}$-order action and $\dot{a}=\frac{d a}{d t}$ to denote the $1^\text{st}$-order action. 
In the discrete-time MDP setting, we approximate the time derivatives of the state and actions using finite differences, i.e., $\dot{s}\approx (s'-s)/\Delta t$ and $\dot{a}\approx (a'-a)/\Delta t$, with the time interval set to $\Delta t=1$ by default.

% The objective of offline RL is to learn a parameterized policy $\pi_\theta(a|s)$ exclusively from the dataset $\mathcal{D}$ that maximizes the expected cumulative discounted return: $J(\pi_\theta) = \mathbb{E}_{\tau \sim \pi_\theta} \left[ \sum_{t=0}^{\infty} \gamma^t r(s_t, a_t) \right]$.

% \section{Higher-order imitation learning}
% \label{Higher-order imitation learning}
% \label{methodology}

% with theoretical analysis showing that higher-order supervision limits the evolution of policy mismatch along expert trajectories before it is amplified into closed-loop rollout deviation.
% \subsection{Problem setup}
% \label{setup}
\subsection{Supervising Both Zeroth and First-Order Action for Imitation Learning}
\label{imiation learning}
The core difficulty to achieve simultaneous $0^\text{th}$ and $1^\text{st}$-order action supervision lies in how to properly inject $\dot{a}$ information within the original model structure of the policy, since conventional policy models, regardless of deterministic policy $\pi(s)$ or stochastic policy $\pi(\cdot|s)$ are designed for the straightforward mapping between state $s$ and the $0^\text{th}$-order action $a$. However, if we are allowed to take the time derivative of policy $\pi$, then we will notice the following property (using the simplest deterministic-policy case as an example):
\begin{equation}\label{eq:policy_derivative}
    \dot{a}=\dot{\pi}(s)=\frac{d \pi(s)}{d t}=\frac{d \pi(s)}{d s}\cdot \frac{d s}{d t}=\frac{d \pi(s)}{d s}\dot{s}=\nabla_s\pi(s)\dot{s}.
\end{equation}
The above equation suggests that we can express the time derivative of policy $\pi$ as the product of the gradient of policy $\nabla_s\pi(s)$ and $1^\text{st}$-order state $\dot{s}$. As $\dot{s}$ is directly obtainable from the offline dataset during training, this offers us a new perspective to supervise $1^\text{st}$-order action signal directly on the original policy model. In the following, we describe the detailed higher-order action supervision strategies for deterministic, stochastic, and generative flow-matching policies.

% Behavior cloning is a classical approach in imitation learning.
% However, policies obtained through behavior cloning often exhibit poor performance in closed loop, since small errors in action prediction can drive the system away from the demonstrated state distribution~\citep{ross2011reduction}.
% To mitigate this issue, we incorporate supervision on temporal derivatives, which regularizes the policy not only at the action level but also along the temporal evolution of demonstrated trajectories.
% In the following section, we denote the time derivative of the policy network as $\dot{\pi}(s)=\frac{d \pi(s)}{d s}\dot{s}$. For stochastic policy, we additionally denote $\dot \mu(s) = \frac{d{\mu(s)}}{d{s}}\dot s$ and $\dot \sigma(s) = \frac{d{\sigma(s)}}{d{s}}\dot s$.
% The derivatives of the state and action are approximated using finite differences, i.e., $\dot{s}\approx (s'-s)/\Delta t$ and $\dot{a}\approx (a'-a)/\Delta t$, with the time interval set to $\Delta t=1$ in our implementation.

\paragraph{Deterministic policy.}
% \label{Deterministic imitation learning}
With Eq. (\ref{eq:policy_derivative}), incorporating $1^\text{st}$-order action supervision for deterministic policies under the IL setting is straightforward. We can simply add an extra $1^\text{st}$-order loss term $\mathcal{L}_{\dot{a}}^{\text{det}}$ to the original BC loss:
\begin{equation}\label{eq:det_p_loss}
\min_{\pi}\mathbb{E}_{(s,a,s',a')\sim \mathcal{D}} 
\Big[ 
\left\| \pi(s) - a \right\|^{2}
+
\underbrace{
\left\|
\nabla_s\pi(s)\dot{s}
- \dot{a}
\right\|^{2}
}_{\mathcal{L}_{\dot{a}}^{\text{det}}}
\Big].
\end{equation}
where $\dot{s}=s'-s$ and $\dot{a}=a'-a$ are directly calculated from dataset samples. The second term $\mathcal{L}_{\dot{a}}^{\text{det}}$ regresses $\dot{\pi}(s)$ to the ground truth $\dot{a}$, thereby achieving first-order temporal consistency regularization without the need to make any changes to the original policy network. In practice, we can directly evaluate $\nabla_s\pi(s)\dot{s}$ using Jacobian-vector product (JVP), which is much cheaper to compute than calculating the Jacobian matrix $\nabla_s\pi(s)$, and is natively supported in modern deep learning frameworks like JAX and PyTorch 2.0. A keen reader might notice that we do not introduce any weighting parameter to the two loss terms in Eq. (\ref{eq:det_p_loss}), as we find empirically that keeping the same supervision strength of $0^\text{th}$ and $1^\text{st}$-order actions normally provides the best performance.

% For deterministic policies, conventional imitation learning encourages the policy to directly match expert actions at the visited states~\citep{ross2011reduction}. However, this pointwise supervision provides only a zero-order constraint on the policy and ignores the temporal structure encoded in expert demonstrations. In continuous-control problems such as robotic control, a policy that matches individual actions may still induce inconsistent or nonsmooth action sequences in closed-loop execution, where coherent control variations are crucial~\citep{chi2023diffusion}. To incorporate this missing temporal structure, we augment the standard imitation objective with a first-order term that aligns the temporal variation of the policy with that of the expert action:

% The first term is the standard behavior cloning, whereas the second term is the higher-order plug-in to regularize the local temporal evolution so that the learned policy is consistent with the expert action dynamics along the trajectory.

\paragraph{Stochastic policy.}
Stochastic policies are often modelled as a Gaussian policy $\pi(a|s)=\mathcal{N}(a;\mu(s),\text{diag}(\sigma^2(s)))$ in practice, with $\mu(s)$, $\sigma^2(s)$ being implemented as parameterized neural networks outputting mean and dimension-independent variances. They have been widely used in many offline IL and RL algorithms due to their flexibility in capturing dataset distributions. To enable $1^\text{st}$-order action supervision on a stochastic policy, one na\"ive approach is to 
supervise the mean policy $\mu(s)$ with an additional loss term $\mathcal{L}_{\dot{a}}^{\text{sto-mean}}=\left\|\nabla_s\mu(s)\dot{s}-\dot{a}\right\|$ as in Eq. (\ref{eq:det_p_loss}). We refer to this version as "\textit{mean supervision}". Below, we also provide a more rigorous version consistent with the stochastic process framework.

Specifically, if we model the stochastic component of the Gaussian policy using an Ornstein--Uhlenbeck (OU) \citep{uhlenbeck1930theory} process, $\mathrm{d}X_t = -\lambda X_t \, \mathrm{d}t + \sqrt{2\lambda} \, \mathrm{d}W_t,  X_0 \sim \mathcal{N}(0,I)$, where $\lambda$ controls the temporal correlation of the process, and $W_t$ denotes a d-dimensional standard Wiener process.
% Following the deterministic case, we extend this objective from action matching to first order imitation by modeling the temporal variation of stochastic actions. Given a Gaussian policy $\pi(a|s)=\mathcal{N}(a;\mu(s),\sigma^2(s))$, in the continuous time setting, we represent the stochastic component of the Gaussian policy using an Ornstein--Uhlenbeck (OU) \citep{uhlenbeck1930theory} process, $\mathrm{d}X_t = -\lambda X_t \, \mathrm{d}t + \sqrt{2\lambda} \, \mathrm{d}W_t,  X_0 \sim \mathcal{N}(0,1)$, where $\lambda$ controls the temporal correlation of the process, and $W_t$ denotes a standard Wiener process. 
By the stationarity of the OU process, for any $t$, we have
$X_t\sim \mathcal{N}(0,I)$. The action can thus be written as
$a = \mu(s) + \sigma(s)\odot X_t$, where $\odot$ denotes element-wise
multiplication. By the multi-dimensional Itô formula
\citep{oksendal2013stochastic}, we obtain the stochastic differential equation
(SDE) of the action process,
\begin{equation}
\label{stochastic_da}
\mathrm{d}a
=
\left[
\dot{\mu}(s)
+
\left(
\dot{\sigma}(s)-\lambda\sigma(s)
\right)
\odot
\left(
\left(a-\mu(s)\right)\oslash\sigma(s)
\right)
\right]\mathrm{d}t
+
\sqrt{2\lambda}\,\sigma(s)\odot \mathrm{d}W_t ,
\end{equation}
where $\oslash$ denotes element-wise division. The resulting SDE induces a
first-order policy distribution over the time derivative of actions (see Appendix \ref{app:sto_policy_derivation} for detailed derivation),
\begin{equation} \label{eq:sto_policy_standard}
\dot{\pi}(\dot a|s,a,\dot s)
=
\mathcal{N}\!\left(
\dot a;\,
\dot{\mu}(s)
+
\left(
\dot{\sigma}(s)-\lambda\sigma(s)
\right)
\odot
\left(
\left(a-\mu(s)\right)\oslash\sigma(s)
\right),
\,
2\lambda\operatorname{diag}(\sigma^2(s))
\right),
\end{equation}
where $\dot{\mu}(s)=\nabla_s \mu(s) \cdot \dot{s}$ and $\dot{\sigma}(s)=\nabla_s \sigma(s) \cdot \dot{s}$ can be evaluated in a similar way as in Eq. (\ref{eq:policy_derivative}). The above distribution specifies the likelihood of the time derivative of stochastic actions conditioned on the current state, action, and state derivative. We can thus jointly maximize the likelihood of $0^\text{th}$ and $1^\text{st}$-order action for stochastic policies under the IL setting as follows:
% Combining this derivative-level likelihood with the standard action likelihood yields the stochastic higher-order imitation objective:
\begin{equation}
\label{obj_sto}
\min_{\pi} \mathbb{E}_{(s,a,s',a')\sim \mathcal{D}} \Big[ 
-\log \pi(a|s) 
- \underbrace{\log \dot{\pi}(\dot a|s,a,\dot s)}_{\mathcal{L}_{\dot{a}}^{\text{sto}}} 
\Big].
\end{equation}
Beyond conventional action-level maximum likelihood in the first term, the derivative likelihood $\mathcal{L}_{\dot{a}}^{\text{sto}}$ in Eq.(\ref{obj_sto}) serves as a probabilistic first-order regularizer that aligns the local temporal evolution of sampled actions with the expert trajectory.
We provide the full derivation in Appendix~\ref{app:sto_policy_derivation}.
% where $ \dot{\mu}(s(t)) $ and $\dot{\sigma}(s(t)) $ denote time derivatives. Considering a small time interval $\Delta t $, we obtain,
% \begin{equation} \label{distribution of dot{a}}
% a(t+\Delta t)-a(t) \mid a(t) \sim 
% \mathcal{N}\!\left(
% \Big[ \dot{\mu}(s(t)) 
% + \big( \dot{\sigma}(s(t)) -\lambda \sigma(s(t)) \big)
% \frac{a(t) - \mu(s(t))}{\sigma(s(t)} \Big] \Delta t,\;
% 2{\lambda}\, \sigma^2(s(t))\, \Delta t
% \right).
% \end{equation}
% Based on this, we obtain a first-order Maximum Likelihood Estimation (MLE) formulation,
% \begin{equation}
% \max_{\pi}  \mathbb{E} _{s\sim \mathcal{D} ,a\sim \pi_{\theta }(s),{s}' \sim P(\cdot|s,a),  {a}'\sim \pi_{\theta }(s)} \left [ \mathrm{log} \dot{\pi}   (\dot a |s,a)  \right ] ,
% \end{equation}
% where $\dot{\pi} $ denotes the distribution defined in Eq. \ref{distribution of dot{a}}, while the computations of $\dot{\mu}(s(t))$ and $\dot{\sigma}(s(t)) $ follow the same approach as in Section \ref{Deterministic imitation learning} for deterministic policies. We now present the complete MLE formulation for the Gaussian policy with first-order terms:
% \begin{equation}
% \max_{\pi} \mathbb{E}_{s\sim \mathcal{D},\, a\sim \pi_{\theta}(s),\, s' \sim P(\cdot|s,a),\, a'\sim \pi_{\theta}(s)}
% \Big[
% \log \pi(a|s)
% +
% \underbrace{
% \log \dot{\pi}(\dot{a}|s,a)
% }_{\mathcal{L}_{\mathrm{sto}}}
% \Big].
% \end{equation}
\paragraph{Flow policy.}
Flow matching provides an effective and powerful framework for generative modeling~\citep{lipman2023flow,liu2023flow}, and has gained increasing popularity in recent RL and embodied AI models. 
The central idea is to learn a time-dependent velocity field 
$v: [0,1]\times \mathbb{R}^{d} \rightarrow \mathbb{R}^{d}$ (note this "time" is different from the physical time, which we denote as $k$)
that induces a continuous flow $\psi: [0,1]\times \mathbb{R}^{d} \rightarrow \mathbb{R}^{d}$ between a base distribution and the target data distribution, where $\frac{d}{dk}\psi(k,x)=v\bigl(k,\psi(k,x)\bigr),\psi(0,x)=x$. 

In this paper, we consider the standard linear interpolation path~\citep{lipman2023flow,liu2023flow}, which defines an interpolation process
for \(k \in [0,1]\) as $x^k = (1-k)x^0 + kx^1, v = x^1 - x^0$ between a noise
sample \(x^0\) and a data sample \(x^1\).
For flow policy \citep{Park2025FlowQ}, let \(a^0 \sim \mathcal{N}(0, I)\) denote a noise sample and \(a^1 \sim \mathcal{D}\) denote an action. Given the input state $s$, the generated action is obtained by integrating the learned velocity field along the interpolation path. Note that under the formalism of flow matching, if we differentiate the generated action $a^1 = a^0 + \int_0^1 v(k,s,a^k)\, dk$ with respect to physical time $t$, we have
% To extend flow policy to first order imitation, we differentiate the generated action with respect to physical time:
% \begin{equation}
%     a = \int_0^1 v_k(k,s,x^k)\, dk
% \end{equation}
\begin{equation}
\label{flow_da}
\dot{a}^1
= \frac{d \int_0^1 v(k,s,a^k)\, dk}{dt}
= \int_0^1 \frac{d v(k,s,a^k)}{dt}\, dk
= \int_0^1 \frac{\partial v(k,s,a^k)}{\partial s}\dot{s}\ + \frac{\partial v(k,s,a^k)}{\partial a^k}\dot{a}^k\ dk.
\end{equation}
% We omit $\dot{a}^0$ as the noise sample \(a^0\) is drawn independently of the physical trajectory, hence calculating the time derivative on noise is meaningless. Under the ideal linear flow-matching target, the velocity field satisfies \(v(k,s,a^k)=a^1-a^0\), which is defined by the endpoints and does not explicitly depend on the intermediate point \(a^k\), hence the second term in the final form of Eq.~(\ref{flow_da}) vanishes. Based on that, we find that $\dot{a}$ can be approximated by integrating a first-order vector field $\dot v(k,s,a^k)$ along the interpolation path, analogous to the original action generation process. Meanwhile, the corresponding first-order vector field can be learned by taking the time derivative of the original velocity field, $\dot v(k,s,a^k)=\frac{\partial v(k,s,a^k)}{\partial s}\dot{s}=\dot{a}^1-\dot{a}^0 = \dot{a}^1$, where $\dot a^1 = a'-a^1$. Therefore, the new objective for flow policies with higher-order supervision can be given as:
We omit $\dot{a}^0$ as the noise sample \(a^0\) is drawn independently of the physical trajectory, hence calculating the time derivative on noise is meaningless. Equation~(\ref{flow_da}) shows that first-order action evolution can be expressed by integrating $\dot v(k,s,a^k)$ over flow time, analogous to the original action generation process. This shared structure motivates supervising $\dot v(k,s,a^k)$ in a flow-matching manner by regressing it toward the pointwise target $\dot a^1=a'-a^1$. In practice, we approximate $\dot v(k,s,a^k)$ with $\frac{\partial v(k,s,a^k)}{\partial s}\dot{s}$. We omit the action-dependent term in Eq.~(\ref{flow_da}) because evaluating it increased computational cost without improving performance, as discussed in Appendix~\ref{Additional Ablation Results}. Therefore, the new objective for flow policies with higher-order supervision can be given as:
\begin{equation}\label{eq:flow_policy}
\min_{\pi } \mathbb{E}_{a^0 \sim \mathcal{N}(0, I),\, (s, a^1, s', a') \sim \mathcal{D},\, k\in[0,1]}
\Big[
\left\|v(k,s,a^k) - (a^1 - a^0)\right\|
+
\underbrace{
\left\|
\frac{\partial v(k,s,a^k)}{\partial s}\dot{s}
-
\dot{a}^1
\right\|
}_{\mathcal{L}_{\dot{a}}^{\text{flow}}}
\Big].
\end{equation}
Standard flow matching mainly supervises transformation along the probability path at time $k$, while leaving the physical-time response of the conditional velocity field largely unconstrained. In Eq. (\ref{eq:flow_policy}), the additional term $\mathcal{L}_{\dot{a}}^{\text{flow}}$ completes this missing piece by aligning the state-directional change of the velocity field with the observed action evolution. We provide further discussion in Appendix~\ref{app:flow_policy_derivation}.

% We argue that this missing direction is crucial for control, since the state evolution should not merely change the final sampled action, but coherently deform the entire action-generation flow. The term $\mathcal{L}_{\dot{a}}^{\text{flow}}$ supplies this missing derivative by aligning the state-directional change of the velocity field with the observed action evolution.

\subsection{Theoretical Analysis}
\label{theoretical analysis}
With the additional first-order supervision, we can show in the following theorems that this scheme can lead to a tighter error bound for imitation learning under a set of commonly used assumptions (see Appendix~\ref{assumptions}). Specifically, consider the learned policy $\pi$ and expert policy $\pi^*$ start at the same state $s(0)$  (we denote $s(t)$ as the state in continuous time $t$), and define the policy error as $e(s)=\pi(s)-\pi^*(s)$. We can show that the policy error can be better bounded with stronger $1^\text{st}$-order supervision (see Appendix~\ref{app:theorem2_proof} for the proof):
% overall state deviation of policy $\pi$ amplified after $t$-step rollout is bounded as follows: 
\begin{theorem}
\label{theorem2}
Under Assumption~\ref{ass:domain} in Appendix~\ref{assumptions}, the policy error $e(s^*(t))$ satisfies:
\[
\|e(s^*(t))\|
\le
\|e(s^*(0))\|
+
t \,
\max_{0 \le \tau \le t}
\bigg\|
\nabla_s\pi(s^*(\tau))\dot{s}^*(\tau)
-
\dot{\pi}^*(s^*(\tau))
\bigg\|.
\]
\end{theorem}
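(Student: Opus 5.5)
The plan is to apply the fundamental theorem of calculus to the error along the expert trajectory. Set $g(t) := e(s^*(t)) = \pi(s^*(t)) - \pi^*(s^*(t))$, where $s^*(\cdot)$ is the state trajectory generated by the expert starting from $s(0)$. Assumption~\ref{ass:domain} should supply the needed regularity: $s^*(\cdot)$ is differentiable (or absolutely continuous) on $[0,t]$ and stays in the domain where $\pi$ and $\pi^*$ are continuously differentiable. Under that regularity, $g$ is $C^1$ in $t$, and the chain rule, exactly as in Eq.~(\ref{eq:policy_derivative}), gives
\[
\dot g(\tau) = \nabla_s\pi(s^*(\tau))\,\dot s^*(\tau) - \dot{\pi}^*(s^*(\tau)),
\]
where $\dot\pi^*(s^*(\tau)) = \nabla_s \pi^*(s^*(\tau))\dot s^*(\tau)$ is the expert's first-order action along its own trajectory.

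The key steps are as follows. First, write $g(t) = g(0) + \int_0^t \dot g(\tau)\,d\tau$. Second, take norms and apply the triangle inequality for vector-valued integrals, $\|\int_0^t \dot g\| \le \int_0^t \|\dot g\|$, which yields $\|g(t)\| \le \|g(0)\| + \int_0^t \|\dot g(\tau)\|\,d\tau$. Third, bound the integrand by its supremum over $[0,t]$. The maximum is attained because $\tau \mapsto \|\dot g(\tau)\|$ is continuous on a compact interval (this again follows from the assumed continuity). The integral is then at most $t\max_{0\le\tau\le t}\|\dot g(\tau)\|$, which is exactly the stated bound. Equivalently, one could use the mean value inequality directly.

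The only real obstacle is justifying the chain rule and the attainment of the maximum. This requires $\pi$ and $\pi^*$ to be differentiable along the whole expert trajectory, and $s^*$ to be $C^1$ so that $\dot g$ is continuous. I would read these conditions off Assumption~\ref{ass:domain}. If only absolute continuity is available, I would replace $\max$ by an essential supremum, or keep the integral form, and the argument is otherwise unchanged.

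Finally, I would remark on the interpretation. The integrand is precisely the quantity the first-order loss $\mathcal{L}^{\text{det}}_{\dot a}$ in Eq.~(\ref{eq:det_p_loss}) drives toward zero on data. The zeroth-order loss controls $\|e(s^*(0))\|$. The bound thus shows that error growth along the expert trajectory is linear in $t$, with a slope governed by the first-order mismatch.
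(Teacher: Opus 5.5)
Your proposal is correct and follows the paper's proof step for step: differentiate $e(s^*(t))$ by the chain rule, integrate from $0$ to $t$, apply the triangle inequality for the integral, and bound the integrand by its maximum over $[0,t]$. Your remarks on regularity are more careful than the paper, which takes for granted that $\dot s^*$ is continuous so the maximum is attained; you note that otherwise one uses the integral form or an essential supremum.
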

Theorem~\ref{theorem2} shows that the growth of policy error $e(s^*)$ on expert trajectory $s^*$ is governed by its initial value and the accumulated $1^\text{st}$-order discrepancy between the learned and expert policies. Note that the standard BC objective does not supervise the second term, hence could lead to substantially increased errors. As Theorem~\ref{theorem2} only provides analysis on the per-state policy error, next, we provide a stronger conclusion to bound the 
state deviation $\delta(t)=s(t)-s^*(t)$ between states from policy rollout $s$ and expert trajectories $s^*$, which captures the closed-loop long-term error accumulation.

% In this sense, $0^\text{th}$-order supervision fixes the policy value on the demonstration manifold, while $1^\text{st}$-order supervision controls how this value drifts along time. Together, the two theorems show that higher-order supervision suppresses imitation error before it is amplified by the closed-loop dynamics.

% The higher-order objectives above are motivated by the temporal nature of closed-loop imitation error. A policy mismatch is not consumed at a single step; it is carried forward by the dynamics, changes future states, and is repeatedly re-queried by the policy. Thus, the key quantity is not only the point-wise action error, but how this error grows along the expert trajectory before being amplified into rollout deviation. To make the temporal propagation explicit, we use $s(t)$ inside the theorem statements, while omitting the time argument in the surrounding discussion.
\begin{theorem}
\label{theorem1}
Under Assumptions~\ref{ass:domain}--\ref{ass:bounded_jacobian} in Appendix~\ref{assumptions},  for $\delta(0)=0$, the state deviation $\delta(t)$ satisfies,
\[
\|\delta(t)\|
\le
\frac{L_a}{\beta}\big(e^{\beta t}-1\big)
\max_{0\le \tau\le t}\|e(s^*(\tau))\|.
\]
% where
% \[
% \beta=L_s+L_aL_{\pi^*}+L_aL_e.
% \]
\end{theorem}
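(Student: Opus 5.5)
The plan is a Grönwall-type comparison between the two closed-loop systems. I take the assumptions to be the standard ones the constant $\beta$ suggests. The dynamics are $\dot s = f(s,a)$, with $f$ being $L_s$-Lipschitz in $s$ and $L_a$-Lipschitz in $a$. The expert policy $\pi^*$ is $L_{\pi^*}$-Lipschitz. The error $e=\pi-\pi^*$ is $L_e$-Lipschitz, which is where the bounded Jacobian assumption enters. Together these give $\beta = L_s + L_a L_{\pi^*} + L_a L_e$.

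First, I write the two trajectories as $\dot s(t) = f(s(t),\pi(s(t)))$ and $\dot s^*(t) = f(s^*(t),\pi^*(s^*(t)))$. I decompose the learned action at the rolled-out state as
\[
\pi(s) = \pi^*(s^*) + \bigl[\pi^*(s)-\pi^*(s^*)\bigr] + \bigl[e(s)-e(s^*)\bigr] + e(s^*).
\]
Subtracting the two dynamics and using the Lipschitz bounds gives
\[
\|\dot\delta(t)\| \le L_s\|\delta\| + L_a\bigl(L_{\pi^*}\|\delta\| + L_e\|\delta\| + \|e(s^*(t))\|\bigr) = \beta\|\delta(t)\| + L_a\|e(s^*(t))\|.
\]

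Second, I convert this into a scalar integral inequality. Since $\delta(0)=0$, I have $\|\delta(t)\| \le \int_0^t \|\dot\delta(\tau)\|\,d\tau$. Writing $E_t=\max_{0\le\tau\le t}\|e(s^*(\tau))\|$, this yields $\|\delta(t)\| \le \int_0^t \bigl(\beta\|\delta(\tau)\| + L_a E_t\bigr)\,d\tau$. Working with the integral form avoids differentiating the norm at points where $\delta = 0$.

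Third, I apply Grönwall's inequality on $[0,t]$ with the forcing term frozen at $L_aE_t$. Equivalently, I compare with the ODE $u' = \beta u + L_aE_t$, $u(0)=0$, whose solution is $u(t) = \frac{L_a}{\beta}\bigl(e^{\beta t}-1\bigr)E_t$. This gives exactly the claimed bound. The maximum in the statement appears naturally because $E_t$ upper bounds the forcing term on the whole interval, and it is nondecreasing in $t$.

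The main obstacle is the middle term $e(s)-e(s^*)$. Bounding it requires $e$, and hence the learned policy $\pi$, to be Lipschitz off the expert manifold, i.e. the bounded-Jacobian assumption, and this needs to hold along the segment between $s$ and $s^*$. So I will use the mean value inequality on a convex domain, which I take to be what Assumption~\ref{ass:domain} provides. Given Theorem~\ref{theorem2}, the result can then be combined with it to express the state deviation in terms of the first-order discrepancy.
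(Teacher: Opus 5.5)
Your proposal is correct and follows essentially the same route as the paper: the mean-value bound $\|e(s)-e(s^*)\|\le L_e\|\delta\|$ along the segment $T(t)$ (the paper's Lemma~\ref{lem:policy_error_bound}), the Lipschitz splitting that gives $\|\dot\delta(t)\|\le\beta\|\delta(t)\|+L_a\|e(s^*(t))\|$, and then Gr\"onwall. The only difference is technical. The paper applies the differential form of Gr\"onwall to $u=\|\delta\|$, using absolute continuity and $\dot u\le\|\dot\delta\|$ a.e., and bounds $\int_0^t e^{\beta(t-\tau)}\|e(s^*(\tau))\|\,d\tau$ by the maximum afterward; you instead freeze the forcing term at $E_t$ first and use the integral form, which is equally valid since $E$ is nondecreasing.
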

The detailed proof and exact forms of constants $L_a$ and $\beta$ are provided in Appendix~\ref{app:theorem1_proof}. Theorem~\ref{theorem1} decomposes closed-loop policy rollout state deviation into two parts: the policy mismatch error $e(s^*)$ measured on the expert trajectory $s^*$, and the amplification gain $\frac{L_a}{\beta}(e^{\beta t}-1)$ contributed by the environment dynamics, capturing the error amplification due to closed-loop policy-environment interaction. Again, we can see that the long-term state deviation can also be better controlled under higher-order supervision, due to the tighter error bound $\|e(s^*(t))\|$ obtained in Theorem~\ref{theorem2}.
% Hence, compounding error is better viewed as on-manifold policy mismatch being injected into the dynamics over time, rather than as an isolated inaccurate action.

\section{Higher-Order Policy Optimization in Offline RL}
\label{sec:higher_order_policy_optimization}
The extended policy in Section~\ref{sec:policy_class} not only enables higher-order action supervision, but also provides a practical way to evaluate action derivatives $\dot{\pi}_{\theta}(s)$, a quantity that existing offline RL methods never explicitly use. We now show that it has a natural place in critic learning through the Hamilton-Jacobi-Bellman based continuous-time RL  framework, where value learning depends not only on scalar Bellman consistency, but also on how the value changes along the local state-action flow.

% The higher-order policy class in Section~\ref{sec:policy_class} gives us direct access to the policy-induced action flow $\dot{\pi}_{\theta}(s)$, a quantity that standard offline RL never explicitly uses. We now show that this missing quantity has a natural place in critic learning through the continuous-time HJB formulation, where value learning depends not only on scalar Bellman consistency, but also on how the value changes along the local state-action flow. To unify different offline backups, we use $\eta$ to index the backup type: $\eta=\mathrm{pol}$ uses the policy action $\bar{a}_{\eta}(s')=\pi_{\theta}(s')$, while $\eta=\mathrm{in}$ uses the observed next action $\bar{a}_{\eta}(s')=a'$. The corresponding Bellman operator is defined as $\mathcal{T}_{\eta}\hat{Q}(s,a)=r+\gamma\hat{Q}(s',\bar{a}_{\eta}(s'))$.
\subsection{HJB Equation and  Value Learning with First-Order Supervision} 
\label{High-Order Q-learning}
In the HJB-based continuous-time RL framework, the Q-function is defined by~\citep{kim2021hamilton}:
\begin{equation}\label{eq:continuous_Q}
    Q^{\text{HJB}}(s,a):=\max_{a\in \mathcal{A}}\int_0^\infty e^{-\gamma t}r(s(t),a(t))dt.
\end{equation}
The dynamic programming equation of the above Q-function is expressed as an \textit{HJB equation}, which is derived by using the Taylor expansion: 
\begin{equation} \label{HJB}
\gamma Q^{\mathrm{HJB} }(s,a)=\max_{a\in\mathcal{A}}\left [  r(s,a) + \nabla_{s} Q^{\mathrm{HJB} }(s,a) \dot{s} + \nabla_{a} Q^{\mathrm{HJB} }(s,a) \dot{a}   \right ] 
\end{equation}
% We start from the HJB characterization of the action-value function in continuous-time control~\citep{kim2021hamilton}:
% where $\gamma'$ denotes the continuous-time discount rate. 
Unlike the typical discrete-time Bellman equation, which compresses value learning into a scalar one-step regression, Eq.~(\ref{HJB}) reveals what is missing in existing offline RL methods. The value function may match $0^\text{th}$-order Bellman targets, yet its gradient is never required to align with the variation direction of states and actions in time. This makes the learned value landscape numerically plausible but structurally under-specified, especially for later policy improvement.
% of the underlying decision-making problem. 
However, with the extended policy class introduced in Section~\ref{sec:policy_class}, we can easily obtain the time derivative of actions through the learned policy, i.e., $\dot{a}=\dot{\pi}_{\theta}(s)=\nabla_s\pi_{\theta}(s)\cdot \dot{s}$. This allows us to jointly optimize the value function with policy's higher-order information during Bellman update, which leads to the following objective:
% This makes the learned value landscape numerically plausible but structurally under-specified, especially for later policy improvement. The HJB form turns this weakness into a useful design principle. 
% It indicates that higher-order action information should enter critic learning through the action-directional derivative $\nabla_a Q(s,a)\dot{a}$. In the discrete-time offline setting, we obtain $\dot{s}=s'-s$ directly from the dataset and use the higher-order policy class from Section~\ref{sec:policy_class} to provide the policy-induced action flow $\dot{a}=\dot{\pi}_{\theta}(s)$. 
\begin{equation} 
\label{eq:high_order_q}
\begin{aligned}
Q = \arg\min_{Q} \mathbb{E}_{(s,a,r,s',a')\sim \mathcal{D}} \bigg[
&\left( \mathcal{T}_{\pi}\hat{Q}(s,a) - Q(s,a) \right)^2 \\
&{\color{blue}- 
\lambda_Q \left(
\frac{\nabla_s Q(s,a) \dot s}
{\|\nabla_s Q(s,a)\| \|\dot s\|} 
+ 
\frac{\nabla_a Q(s,a) \dot{\pi}_{\theta}(s)}
{\|\nabla_a Q(s,a)\| \|\dot{\pi}_{\theta}(s)\|}\right)}
\bigg].
\end{aligned}
\end{equation}
As $\dot{s}=s'-s$ can be directly calculated from the dataset during offline learning, we can actually seamlessly transform the original value learning scheme of an existing offline RL algorithm into a higher-order supervision version by adding the above {\color{blue}blue terms}, with $\mathcal{T}_{\pi}$ denoting the specific Bellman operator used in the offline RL algorithm. Moreover, to improve numerical stability and to prevent overly large gradient magnitudes of the first-order terms, we normalize the two HJB directional terms in Eq. (\ref{eq:high_order_q}), similar to existing HJB-based methods~\cite{kim2021hamilton,lien2024enhancing}.

\vspace{-8pt}
\subsection{Higher-Order Policy Learning}
\vspace{-2pt}
\label{Offline RL with plug-in high-order supervision}
With the $Q$-function learned with $1^\text{st}$-order supervision, we can also transform the policy optimization procedure of an existing offline RL algorithm through a plug-and-play modification. This can be achieved by simply adding an additional $1^\text{st}$-order imitation loss $\mathcal{L}_{\dot a}$ as defined in Section~\ref{sec:policy_class},
\begin{equation}
    \pi =\underset{\pi}{\arg\max}\;\mathbb{E}_{(s,a,s',a')\sim\mathcal{D},\hat{a}\sim\pi}
    Q(s,\hat{a})
    -
    \mathcal{R}_{\pi}(s)
    {\color{blue}-\lambda_{\dot a} \mathcal{L}_{\dot a}}.
\end{equation}
where we use $Q(s,\hat{a})-\mathcal{R}_{\pi}(s)$ to represent the general policy optimization objective of existing offline RL algorithms, which typically optimize a policy by maximizing the Q-function with some behavior regularization term $\mathcal{R}_{\pi}(s)$. $\mathcal{R}_{\pi}(s)$ can either be an explicit policy constraint, such as the BC term $\mathcal{R}_{\pi}(s)=\mathbb{E}_{(s,a)\sim\mathcal{D}}\|\pi(s)-a\|^2$ in TD3+BC~\citep{fujimoto2021minimalist}, or an implicit behavior regularization scheme as adopted in many algorithms~\cite{iql,xuoffline,wang2024offline,zhengsafe,xu2026uni,liang2026dipole}, which results in an advantage-weighted regression type of learning objective. Depending on the actual policy class used in the original offline RL algorithm, one can flexibly choose $\mathcal{L}_{\dot a}$ to be either $\mathcal{L}_{\dot a}^{\mathrm{det}}$, $\mathcal{L}_{\dot a}^{\mathrm{sto}}$, or $\mathcal{L}_{\dot a}^{\mathrm{flow}}$ in Section~\ref{sec:policy_class}. We introduce a hyperparameter $\lambda_{\dot a}$ to balance the regularization strength of the original policy optimization objective and the first-order supervision term. Both the first-order value supervision term and the $\mathcal{L}_{\dot a}$ term in policy optimization introduce minimal modifications to the original offline RL algorithm, but as we show in the experiments, they can lead to substantial performance gains, as the policy is taught not only what to do, but how to evolve reasonably over time.

\vspace{-5pt}
\section{Experiments}
\vspace{-4pt}
\label{sec:experiment}
In this section, we empirically evaluate our proposed policy class. We first integrate it into a range of representative offline RL algorithms and benchmark the resulting variants against their original counterparts. We then conduct ablation studies to isolate the contributions of the first-order policy constraint and the first-order value consistency regularizer.
% In this section, we systematically evaluate the empirical performance of our proposed policy class. We begin by demonstrating the performance improvements achieved when integrating our method into existing offline RL algorithms across a diverse suite of benchmarks. Subsequently, we conduct comprehensive ablation studies to isolate the contributions of the first-order policy constraint during policy optimization and the first-order value consistency for value learning.
\vspace{-3pt}
\subsection{Experimental Setup}
\vspace{-2pt}
\textbf{Benchmarks.} To comprehensively evaluate our proposed framework, we conduct experiments on two widely adopted offline RL benchmarks. We first evaluate on OGBench~\citep{ogbench_park2025}, a recent benchmark featuring high-dimensional state spaces, multimodal behaviors, and sparse rewards. We adopt its reward-based "-singletask" variants so that standard reward-maximizing offline RL algorithms can be directly applied. We further conduct experiments on D4RL~\citep{fu2020d4rl} under a more challenging low-data setting with only 10k transitions per task, which better exposes performance gaps and OOD generalization since recent offline RL methods have largely saturated the D4RL benchmark under the full-dataset setting. The benchmark details and task descriptions are presented in Appendix~\ref{app:benchmark_details}.

% To comprehensively evaluate our proposed framework, we conduct experiments on two widely adopted offline RL benchmarks: D4RL~\citep{fu2020d4rl} and OGBench~\citep{ogbench_park2025}. While D4RL has been instrumental in the field, current offline RL algorithms have largely saturated its performance limits. To rigorously assess our method's capabilities, we place a primary focus on the recently introduced OGBench~\citep{ogbench_park2025} and a more challenging low-data setting in D4RL as proposed in recent works~\citep{cheng2024look}. OGBench features high-dimensional state-action spaces, complex multimodal data distributions, and significantly sparser rewards compared to standard D4RL tasks. We utilize the reward-based single-task variants ("-singletask") of OGBench to ensure compatibility with standard reward-maximizing offline RL algorithms.

\textbf{Comparison methods.}
We integrate our proposed policy class into representative offline RL algorithms spanning three policy families (deterministic, stochastic, and flow-matching), and compare each base algorithm with its first-order-supervised variant.
For deterministic policies, we adopt TD3+BC~\citep{fujimoto2021minimalist} as the base algorithm.
For stochastic policies, we instantiate the two strategies introduced in Section~\ref{imiation learning}: a simplified \textit{mean supervision} variant, which applies first-order supervision only to the Gaussian mean and is built upon ReBRAC~\citep{tarasov2023revisiting} (denoted ``mean sup.'' in Table~\ref{tab:OGbench results}), and a \textit{standard} variant based on the likelihood objective in Eq.~\ref{obj_sto} and built upon IQL~\citep{iql} (denoted ``standard'').
For flow-matching policies, we adopt IFQL, an algorithmic variant of FQL~\citep{Park2025FlowQ} that enables our first-order supervision to be directly applied to the flow policy without any additional policy-distillation step.
Experimental details are provided in Appendix~\ref{app:imp_details}.

\vspace{-3pt}
\subsection{Evaluation Results}
\vspace{-3pt}
\begin{table}[t]
    \caption{\small Average normalized scores on 10 OGBench task categories, each aggregating 5 sub-tasks over 5 random seeds. Values after $\pm$ denote standard deviations. The ${\color{mygreen}\uparrow}$ marks indicate improvements greater than $1.0$, whereas ${\color{gray}\uparrow}$ denotes minor improvements. Full per-task results are presented in Table~\ref{tab:OGbench full results}.}
    \label{tab:OGbench results}
    \begin{center}
        \resizebox{\textwidth}{!}{
            \begin{tabular}{lcccccccc}
                \toprule
                & \multicolumn{2}{c}{Det. Policy} 
                & \multicolumn{2}{c}{Sto. Policy (mean sup.)}
                & \multicolumn{2}{c}{Sto. Policy (standard)}
                & \multicolumn{2}{c}{Flow Policy} \\
                \cmidrule(lr){2-3} \cmidrule(lr){4-5} \cmidrule(lr){6-7} \cmidrule(lr){8-9}
                
                Task Category (Avg. of 5 tasks)
                & TD3+BC & TD3+BC+ours 
                & ReBRAC & ReBRAC+ours & IQL 
                & IQL+ours & IFQL & IFQL+ours \\
                \midrule
                antmaze-large-navigate  &$83{\scriptstyle \pm 4}$ &$98{\scriptstyle \pm 3}({\color{mygreen}\uparrow\! 15})$ &$81{\scriptstyle \pm 5}$ &$98{\scriptstyle \pm 2}({\color{mygreen}\uparrow\! 17})$  &$53{\scriptstyle \pm 3}$  &$68{\scriptstyle \pm 5}({\color{mygreen}\uparrow\! 15})$  &$28{\scriptstyle \pm 13}$  & $36{\scriptstyle \pm 14}({\color{mygreen}\uparrow\! 8})$ \\
                antmaze-giant-navigate &$42{\scriptstyle \pm 8}$ & $65{\scriptstyle \pm 10}({\color{mygreen}\uparrow\! 23})$&$26{\scriptstyle \pm 8}$   &$61{\scriptstyle \pm 12}({\color{mygreen}\uparrow\! 35})$  &$4{\scriptstyle \pm 1}$  &$9{\scriptstyle \pm 1}({\color{mygreen}\uparrow\! 5})$  & $3{\scriptstyle \pm 2}$ & $4{\scriptstyle \pm 3}({\color{mygreen}\uparrow\! 1})$ \\
                humanoidmaze-medium-navigate &$34{\scriptstyle \pm 8}$ &$42{\scriptstyle \pm 5}({\color{mygreen}\uparrow\! 8})$ &$22{\scriptstyle \pm 8}$   &$55{\scriptstyle \pm 10}({\color{mygreen}\uparrow\! 33})$  & $33{\scriptstyle \pm 2}$ &$36{\scriptstyle \pm 3}({\color{mygreen}\uparrow\! 3})$  & $60{\scriptstyle \pm 16}$ & $64{\scriptstyle \pm 13}({\color{mygreen}\uparrow\! 4})$ \\
                humanoidmaze-large-navigate &$2{\scriptstyle \pm 1}$ &$4{\scriptstyle \pm 1}({\color{mygreen}\uparrow\! 2})$ &$2{\scriptstyle \pm 1}$  &$4{\scriptstyle \pm 2}({\color{mygreen}\uparrow\! 2})$  &$2{\scriptstyle \pm 1}$  &$3{\scriptstyle \pm 1}({\color{mygreen}\uparrow\! 1})$  & $11{\scriptstyle \pm 2}$ & $12{\scriptstyle \pm 3}({\color{mygreen}\uparrow\! 1})$ \\
                antsoccer-arena-navigate &$13{\scriptstyle \pm 3}$ &$16{\scriptstyle \pm 3}({\color{mygreen}\uparrow\! 3})$ &$0{\scriptstyle \pm 0}$  &$16{\scriptstyle \pm 4}({\color{mygreen}\uparrow\! 16})$  &$8{\scriptstyle \pm 2}$  &$23{\scriptstyle \pm 3}({\color{mygreen}\uparrow\! 15})$  & $33{\scriptstyle \pm 6}$ & $40{\scriptstyle \pm 13}({\color{mygreen}\uparrow\! 7})$ \\
                cube-single-play &$88{\scriptstyle \pm 5}$ &$91{\scriptstyle \pm 4}({\color{mygreen}\uparrow\! 3})$  &$91{\scriptstyle \pm 2}$  &$93{\scriptstyle \pm 2}({\color{mygreen}\uparrow\! 2})$  &$83{\scriptstyle \pm 3}$  &$92{\scriptstyle \pm 3}({\color{mygreen}\uparrow\! 9})$  & $79{\scriptstyle \pm 2}$ & $95{\scriptstyle \pm 7}({\color{mygreen}\uparrow\! 16})$  \\
                cube-double-play &$7{\scriptstyle \pm 1}$ &$12{\scriptstyle \pm 1}({\color{mygreen}\uparrow\! 5})$ &$12{\scriptstyle \pm 1}$  &$24{\scriptstyle \pm 3}({\color{mygreen}\uparrow\! 12})$  &$7{\scriptstyle \pm 1}$  &$8{\scriptstyle \pm 2}({\color{mygreen}\uparrow\! 1})$  & $14{\scriptstyle \pm 3}$ & $42{\scriptstyle \pm 15}({\color{mygreen}\uparrow\! 28})$ \\
                scene-play &$44{\scriptstyle \pm 3}$ &$47{\scriptstyle \pm 2}({\color{mygreen}\uparrow\! 3})$ &$41{\scriptstyle \pm 3}$  &$47{\scriptstyle \pm 4}({\color{mygreen}\uparrow\! 6})$  &$28{\scriptstyle \pm 1}$  &$33{\scriptstyle \pm 3}({\color{mygreen}\uparrow\! 5})$  & $30{\scriptstyle \pm 3}$ & $56{\scriptstyle \pm 7}({\color{mygreen}\uparrow\! 26})$ \\
                puzzle-3x3-play &$27{\scriptstyle \pm 3}$ &$29{\scriptstyle \pm 1}({\color{mygreen}\uparrow\! 2})$ &$21{\scriptstyle \pm 1}$  &$24{\scriptstyle \pm 2}({\color{mygreen}\uparrow\! 3})$  &$9{\scriptstyle \pm 1}$  &$12{\scriptstyle \pm 4}({\color{mygreen}\uparrow\! 3})$  & $19{\scriptstyle \pm 1}$ & $43{\scriptstyle \pm 15}({\color{mygreen}\uparrow\! 24})$ \\
                puzzle-4x4-play &$6{\scriptstyle \pm 1}$ &$9{\scriptstyle \pm 1}({\color{mygreen}\uparrow\! 3})$ &$14{\scriptstyle \pm 1}$  &$17{\scriptstyle \pm 3}({\color{mygreen}\uparrow\! 3})$  &$7{\scriptstyle \pm 1}$  &$13{\scriptstyle \pm 2}({\color{mygreen}\uparrow\! 6})$   & $25{\scriptstyle \pm 5}$ &$40{\scriptstyle \pm 12}({\color{mygreen}\uparrow\!15})$  \\
                \midrule
                Average Score &35 &41$({\color{mygreen}\uparrow\! 6})$ &31  &44$({\color{mygreen}\uparrow\! 13})$  &23  &30$({\color{mygreen}\uparrow\! 7})$  &30  &43$({\color{mygreen}\uparrow\! 13})$  \\
                \bottomrule
            \end{tabular}
        }
    \end{center}
   \vspace{-8pt} % 控制两个表格之间的垂直间距
    \caption{\small Average normalized scores on D4RL MuJoCo locomotion tasks with reduced-size (10k) datasets. Standard deviations are reported after the $\pm$ sign. The ${\color{mygreen}\uparrow}$ symbol indicates a performance increase greater than $1.0$, whereas ${\color{gray}\uparrow}$ denotes minor improvements. All evaluation results are averaged over 5 random seeds.}
    \label{tab:d4rl_res_10k}
    \begin{center}
        \resizebox{\textwidth}{!}{
            \begin{tabular}{lcccccccc}
            \toprule
            & \multicolumn{2}{c}{Det. Policy} & \multicolumn{2}{c}{Sto. Policy (mean sup.)} & \multicolumn{2}{c}{Sto. Policy (standard)} & \multicolumn{2}{c}{Flow Policy} \\
            \cmidrule(lr){2-3} \cmidrule(lr){4-5} \cmidrule(lr){6-7} \cmidrule(lr){8-9}
            Task Category  & TD3+BC        & TD3+BC+ours          & ReBRAC            & ReBRAC+ours        & IQL            & IQL+ours                  & IFQL & IFQL+ours            \\ 
            \midrule
            Hopper-m & $40.1{\scriptstyle \pm 18.6}$ & $44.1{\scriptstyle \pm 12.7}({\color{mygreen}\uparrow\! 4.0})$ & $9.8{\scriptstyle \pm 3.1}$ & $32.4{\scriptstyle \pm 11.0}({\color{mygreen}\uparrow\! 22.6})$ & $47.7{\scriptstyle \pm 1.0}$ & $48.5{\scriptstyle \pm 2.5}({\color{gray}\uparrow\! 0.8})$ & $50.8{\scriptstyle \pm 2.3}$ & $51.3{\scriptstyle \pm 2.9}({\color{gray}\uparrow\! 0.5})$ \\
            Hopper-mr & $7.3{\scriptstyle \pm 6.1}$ & $22.3{\scriptstyle \pm 3.3}({\color{mygreen}\uparrow\! 15.0})$ & $19.0{\scriptstyle \pm 1.0}$ & $24.9{\scriptstyle \pm 2.8}({\color{mygreen}\uparrow\! 5.9})$ & $19.3{\scriptstyle \pm 3.0}$ & $19.9{\scriptstyle \pm 2.0}({\color{gray}\uparrow\! 0.6})$ & $23.1{\scriptstyle \pm 1.4}$ & $28.2{\scriptstyle \pm 0.4}({\color{mygreen}\uparrow\! 5.1})$ \\
            Hopper-me & $17.8{\scriptstyle \pm 7.9}$ & $42.3{\scriptstyle \pm 10.8}({\color{mygreen}\uparrow\! 24.5})$ & $36.2{\scriptstyle \pm 3.1}$ & $43.0{\scriptstyle \pm 3.3}({\color{mygreen}\uparrow\! 6.8})$ & $34.3{\scriptstyle \pm 8.7}$ & $42.3{\scriptstyle \pm 3.3}({\color{mygreen}\uparrow\! 8.0})$ & $45.7{\scriptstyle \pm 1.9}$ & $47.9{\scriptstyle \pm 1.7}({\color{mygreen}\uparrow\! 2.2})$ \\
            Hopper-e & $23.2{\scriptstyle \pm 18.2}$ & $93.5{\scriptstyle \pm 13.6}({\color{mygreen}\uparrow\! 70.3})$ & $59.2{\scriptstyle \pm 3.5}$ & $97.2{\scriptstyle \pm 5.9}({\color{mygreen}\uparrow\! 38.0})$ & $55.7{\scriptstyle \pm 22.5}$ & $99.5{\scriptstyle \pm 3.2}({\color{mygreen}\uparrow\! 43.8})$ & $78.5{\scriptstyle \pm 13.6}$ & $96.9{\scriptstyle \pm 4.9}({\color{mygreen}\uparrow\! 18.4})$ \\
            \midrule
            Halfcheetah-m & $16.4{\scriptstyle \pm 10.2}$ & $31.5{\scriptstyle \pm 2.4}({\color{mygreen}\uparrow\! 15.1})$ & $37.1{\scriptstyle \pm 1.3}$ & $38.9{\scriptstyle \pm 0.5}({\color{mygreen}\uparrow\! 1.8})$ & $38.2{\scriptstyle \pm 1.4}$ & $39.1{\scriptstyle \pm 0.1}({\color{gray}\uparrow\! 0.9})$ & $38.9{\scriptstyle \pm 1.4}$ & $39.5{\scriptstyle \pm 1.5}({\color{gray}\uparrow\! 0.6})$ \\
            Halfcheetah-mr & $17.9{\scriptstyle \pm 9.5}$ & $21.2{\scriptstyle \pm 7.1}({\color{mygreen}\uparrow\! 3.3})$ & $13.0{\scriptstyle \pm 5.7}$ & $15.1{\scriptstyle \pm 4.1}({\color{mygreen}\uparrow\! 2.1})$ & $28.2{\scriptstyle \pm 1.3}$ & $29.5{\scriptstyle \pm 0.6}({\color{mygreen}\uparrow\! 1.3})$ & $29.6{\scriptstyle \pm 1.0}$ & $31.1{\scriptstyle \pm 1.2}({\color{mygreen}\uparrow\! 1.5})$ \\
            Halfcheetah-me & $15.4{\scriptstyle \pm 10.7}$ & $35.6{\scriptstyle \pm 2.4}({\color{mygreen}\uparrow\! 20.2})$ & $12.2{\scriptstyle \pm 6.7}$ & $19.7{\scriptstyle \pm 1.7}({\color{mygreen}\uparrow\! 7.5})$ & $34.4{\scriptstyle \pm 2.2}$ & $36.8{\scriptstyle \pm 1.5}({\color{mygreen}\uparrow\! 2.4})$ & $40.2{\scriptstyle \pm 0.7}$ & $40.4{\scriptstyle \pm 1.3}({\color{gray}\uparrow\! 0.2})$ \\
            Halfcheetah-e & $1.7{\scriptstyle \pm 3.3}$ & $3.9{\scriptstyle \pm 2.1}({\color{mygreen}\uparrow\! 2.2})$ & $2.3{\scriptstyle \pm 0.9}$ & $4.0{\scriptstyle \pm 2.3}({\color{mygreen}\uparrow\! 1.7})$ & $1.2{\scriptstyle \pm 1.0}$ & $6.7{\scriptstyle \pm 2.6}({\color{mygreen}\uparrow\! 5.5})$ & $18.8{\scriptstyle \pm 7.8}$ & $35.9{\scriptstyle \pm 6.2}({\color{mygreen}\uparrow\! 17.1})$ \\
            \midrule
            Walker2d-m & $7.4{\scriptstyle \pm 13.1}$ & $41.1{\scriptstyle \pm 10.8}({\color{mygreen}\uparrow\! 33.7})$ & $21.2{\scriptstyle \pm 4.8}$ & $56.9{\scriptstyle \pm 4.3}({\color{mygreen}\uparrow\! 35.7})$ & $34.6{\scriptstyle \pm 9.5}$ & $45.0{\scriptstyle \pm 4.8}({\color{mygreen}\uparrow\! 10.4})$ & $50.6{\scriptstyle \pm 6.8}$ & $56.7{\scriptstyle \pm 4.7}({\color{mygreen}\uparrow\! 6.1})$ \\
            Walker2d-mr & $5.7{\scriptstyle \pm 5.8}$ & $23.2{\scriptstyle \pm 5.7}({\color{mygreen}\uparrow\! 17.5})$ & $10.0{\scriptstyle \pm 2.2}$ & $22.8{\scriptstyle \pm 5.5}({\color{mygreen}\uparrow\! 12.8})$ & $10.7{\scriptstyle \pm 11.9}$ & $17.6{\scriptstyle \pm 4.0}({\color{mygreen}\uparrow\! 6.9})$ & $17.5{\scriptstyle \pm 5.4}$ & $29.0{\scriptstyle \pm 2.8}({\color{mygreen}\uparrow\! 11.5})$ \\
            Walker2d-me & $7.9{\scriptstyle \pm 9.1}$ & $37.8{\scriptstyle \pm 12.3}({\color{mygreen}\uparrow\! 29.9})$ & $17.6{\scriptstyle \pm 8.6}$ & $58.9{\scriptstyle \pm 11.8}({\color{mygreen}\uparrow\! 41.3})$ & $29.1{\scriptstyle \pm 16.9}$ & $41.2{\scriptstyle \pm 9.2}({\color{mygreen}\uparrow\! 12.1})$ & $38.8{\scriptstyle \pm 3.7}$ & $57.6{\scriptstyle \pm 10.4}({\color{mygreen}\uparrow\! 18.8})$ \\
            Walker2d-e & $2.7{\scriptstyle \pm 3.1}$ & $82.3{\scriptstyle \pm 10.8}({\color{mygreen}\uparrow\! 79.5})$ & $23.6{\scriptstyle \pm 9.4}$ & $83.9{\scriptstyle \pm 6.8}({\color{mygreen}\uparrow\! 60.3})$ & $43.1{\scriptstyle \pm 6.0}$ & $84.8{\scriptstyle \pm 13.9}({\color{mygreen}\uparrow\! 41.7})$ & $95.2{\scriptstyle \pm 11.8}$ & $104.9{\scriptstyle \pm 3.7}({\color{mygreen}\uparrow\! 9.7})$ \\
            \midrule
            Average Score &$13.6$    &$39.9({\color{mygreen}\uparrow\! 26.3})$   &$21.8$   &$41.5({\color{mygreen}\uparrow\! 19.7})$   &$31.4$   &$42.6({\color{mygreen}\uparrow\! 10.2})$   &$44.0$   &$51.6({\color{mygreen}\uparrow\! 7.6})$ \\
            \bottomrule
            \end{tabular}
        }
    \end{center}
\end{table}

\textbf{Evaluation results on OGBench tasks.}
Table~\ref{tab:OGbench results} reports the average normalized scores on 50 diverse OGBench tasks, with complete per-task results provided in Appendix~\ref{Additional Evaluation Results}. Our first-order action supervision yields consistent performance gains across all three policy paradigms. On control-sensitive navigation tasks, these gains underline the importance of temporally consistent actions, since small action fluctuations can accumulate over time and easily lead to collisions or failed navigation. For instance, applying our scheme to ReBRAC raises the score from 26 to 61 on \texttt{antmaze-giant-navigate} and from 22 to 55 on \texttt{humanoidmaze-medium-navigate}, both revealing over 100\% relative improvement.

The performance gains are even more pronounced beyond navigation. On object-centric play tasks, applying our proposed scheme to IFQL improves \texttt{cube-double-play} from 14 to 42 (a 200\% relative improvement) and \texttt{scene-play} from 30 to 56 (an 87\% relative improvement). In particular, on several challenging sub-tasks where the IFQL baseline completely fails with a score of exactly 0, equipping it with our scheme lifts the score to clearly non-trivial levels, e.g., from 0 to 88 on \texttt{scene-play-singletask-task2-v0}, expanding the set of tasks on which offline policy learning is feasible at all (see Table~\ref{tab:OGbench full results} in Appendix~\ref{Additional Evaluation Results} for full per-task results).

\textbf{Improved OOD generalization under small-dataset settings.}
Our first-order action supervision is particularly beneficial in small-dataset regimes, where standard offline RL baselines often struggle to recover the expert's robust control behavior from sparse state-action samples, even though such samples still encode rich local variations along expert trajectories. Table~\ref{tab:d4rl_res_10k} reports the D4RL locomotion results on 10k datasets. By supervising the policy's first-order action derivative, our scheme enables the local dynamics and control structure of the expert behavior to be effectively recovered from an extremely limited dataset, leading to substantial performance gains. For instance, on \texttt{hopper-expert-v2}, IQL improves from $55.7$ to $99.5$, nearly recovering its full-dataset performance. More notably, on \texttt{walker2d-expert-v2}, TD3+BC collapses to a score of only $2.7$, indicating severe extrapolation failure under such a tiny-data regime; equipping it with our scheme recovers the score to $82.3$, an absolute gain of $79.5$ points. In addition to the score improvements, our scheme also yields noticeably smaller standard deviations across most tasks, indicating more stable and reproducible policy learning under small-dataset settings. Additional limited-data imitation learning results, as well as full-data D4RL results, are provided in Appendix~\ref{Additional Evaluation Results}.

\textbf{Benefits of utilizing first-order supervision.}
Figure~\ref{fig:compare_da} provides a mechanistic view of how our first-order action supervision improves performance. With only 10k transitions, deterministic BC produces a dispersed distribution of $||\dot{a}||$, reflecting erratic action changes during execution. In contrast, our scheme yields a compact distribution concentrated around small $||\dot{a}||$, indicating smoother and temporally consistent action evolution. This pattern is even more pronounced for TD3+BC in Figure~\ref{fig:compare_td3bc}, where its plain MSE penalty leaves the policy vulnerable to action jittering, whereas our scheme directly aligns action changes with the local expert dynamics. Overall, by constraining how actions evolve along trajectories, first-order supervision yields noticeably smoother and more robust policies. We conduct additional analysis results in Appendix~\ref{Additional Evaluation Results}.

\begin{figure}[t!]
    \centering
    \includegraphics[width=\linewidth]{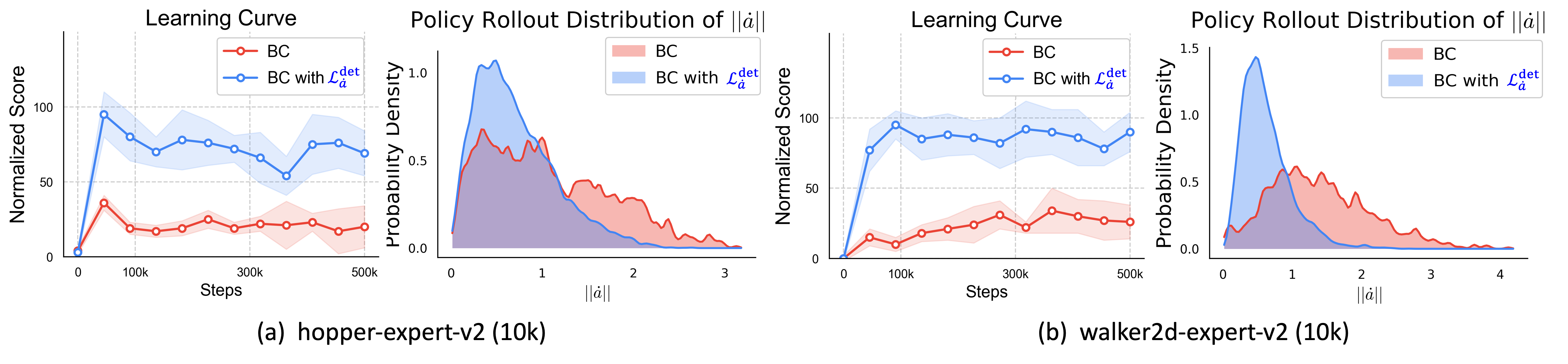}
    \vspace{-18pt}
    \caption{\small Comparison of vanilla deterministic BC and our $1^{\mathrm{st}}$-order action supervision on small-sample locomotion tasks. Scores are averaged over 5 random seeds. The policy rollout distributions of action variation $||\dot{a}||=||a'-a||$ ($a$ is joint torque) are aggregated from 10 trajectories.}
    \label{fig:compare_da}
    \vspace{-6pt}
\end{figure}
\begin{figure}[t!]
    \centering
    \includegraphics[width=\linewidth]{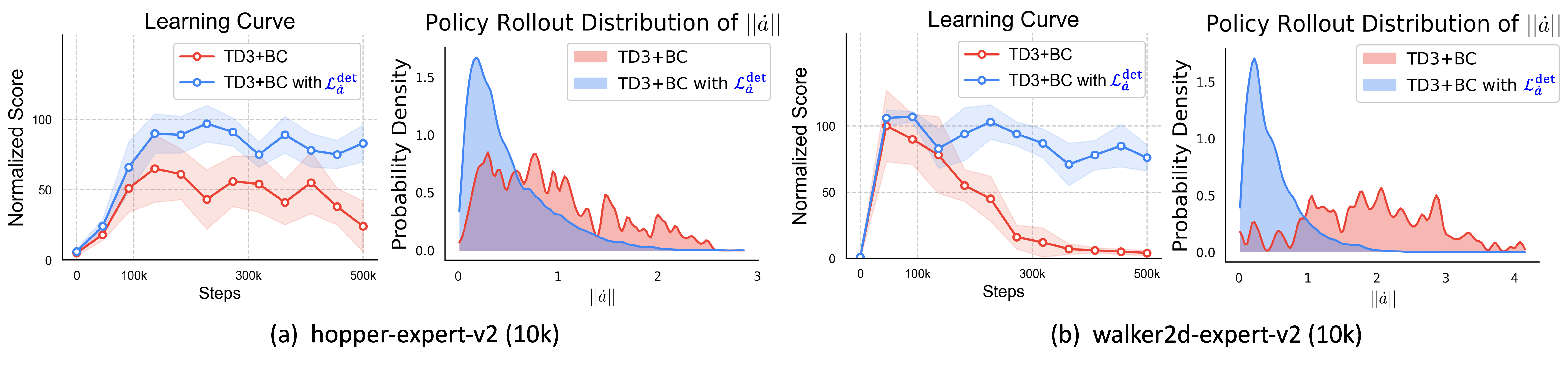}
    \vspace{-18pt}
    \caption{\small Comparison of TD3+BC and our $1^{\mathrm{st}}$-order action supervision on small-sample locomotion tasks. Scores are averaged over 5 random seeds. The policy rollout distributions of action variation $||\dot{a}||=||a'-a||$ ($a$ is joint torque) are aggregated from 10 trajectories.}
    \label{fig:compare_td3bc}
    % \vspace{-6pt}
\end{figure}

% The fundamental mechanism behind this enhanced robustness is explicitly revealed by the policy rollout distributions of the first-order action $||\dot{a}||$ shown in Figure~\ref{fig:compare_da}. Baseline algorithms exhibit a broad, long-tailed distribution of $||\dot{a}||$, which physically corresponds to noisy, temporally inconsistent, and bang-bang-like torque switching. By jointly supervising both zeroth- and first-order actions, our method heavily penalizes these high-frequency action fluctuations. This structural regularization naturally yields a much more compact, low-$||\dot{a}||$ distribution. Consequently, the learned policy is explicitly constrained to output smooth and stable transitions, mitigating closed-loop error amplification and ensuring reliable control even when the offline data density is severely impoverished.
% \vspace{-6pt}
\subsection{Ablation experiments}

\begin{table}[t]
    \centering
    \newcommand{\abup}[1]{({\color{mygreen}\uparrow\! #1})}
    \newcommand{\abminor}[1]{({\color{gray}\uparrow\! #1})}
    \newcommand{\abdown}[1]{({\color{red}\downarrow\! #1})}

    \caption{\small Comparison with action-smoothness approaches and explicit
    first-order action prediction on D4RL MuJoCo locomotion tasks with
    reduced-size (10k) datasets. All methods use TD3+BC as the shared backbone.
    Arrows show score differences relative to TD3+BC on the same task:
    ${\color{mygreen}\uparrow}$ indicates a gain of at least $1.0$,
    ${\color{gray}\uparrow}$ a smaller positive gain, and
    ${\color{red}\downarrow}$ a decrease. No arrow indicates no change.}
    \label{tab:ablation_baselines}

    \resizebox{\textwidth}{!}{%
        \begin{tabular}{lc|ccc|c|c}
            \toprule
            Task & TD3+BC & +CAPS & +L2C2 & +LipsNet
            & +Extra $\dot{a}$ head & TD3+BC+Ours \\
            \midrule
            Hopper-m
            & $40.1{\scriptstyle \pm 18.6}$
            & $42.6{\scriptstyle \pm 5.1}\abup{2.5}$
            & $42.7{\scriptstyle \pm 7.5}\abup{2.6}$
            & $43.2{\scriptstyle \pm 3.6}\abup{3.1}$
            & $41.5{\scriptstyle \pm 7.8}\abup{1.4}$
            & $44.1{\scriptstyle \pm 12.7}\abup{\textbf{4.0}}$ \\
            Hopper-me
            & $17.8{\scriptstyle \pm 7.9}$
            & $39.6{\scriptstyle \pm 8.2}\abup{21.8}$
            & $39.8{\scriptstyle \pm 8.3}\abup{22.0}$
            & $32.8{\scriptstyle \pm 18.5}\abup{15.0}$
            & $26.6{\scriptstyle \pm 8.6}\abup{8.8}$
            & $42.3{\scriptstyle \pm 10.8}\abup{\textbf{24.5}}$ \\
            Hopper-e
            & $23.2{\scriptstyle \pm 18.2}$
            & $57.1{\scriptstyle \pm 29.7}\abup{33.9}$
            & $55.2{\scriptstyle \pm 34.5}\abup{32.0}$
            & $16.9{\scriptstyle \pm 2.8}\abdown{6.3}$
            & $30.6{\scriptstyle \pm 15.1}\abup{7.4}$
            & $93.5{\scriptstyle \pm 13.6}\abup{\textbf{70.3}}$ \\
            \midrule
            HalfCheetah-m
            & $16.4{\scriptstyle \pm 10.2}$
            & $29.6{\scriptstyle \pm 5.6}\abup{13.2}$
            & $30.3{\scriptstyle \pm 7.1}\abup{13.9}$
            & $6.6{\scriptstyle \pm 3.0}\abdown{9.8}$
            & $28.1{\scriptstyle \pm 4.5}\abup{11.7}$
            & $31.5{\scriptstyle \pm 2.4}\abup{\textbf{15.1}}$ \\
            HalfCheetah-me
            & $15.4{\scriptstyle \pm 10.7}$
            & $19.8{\scriptstyle \pm 3.4}\abup{4.4}$
            & $28.6{\scriptstyle \pm 3.8}\abup{13.2}$
            & $3.3{\scriptstyle \pm 2.2}\abdown{12.1}$
            & $20.6{\scriptstyle \pm 3.5}\abup{5.2}$
            & $35.6{\scriptstyle \pm 2.4}\abup{\textbf{20.2}}$ \\
            HalfCheetah-e
            & $1.7{\scriptstyle \pm 3.3}$
            & $1.7{\scriptstyle \pm 1.6}$
            & $2.4{\scriptstyle \pm 1.9}\abminor{0.7}$
            & $0.8{\scriptstyle \pm 0.6}\abdown{0.9}$
            & $0.2{\scriptstyle \pm 1.8}\abdown{1.5}$
            & $3.9{\scriptstyle \pm 2.1}\abup{\textbf{2.2}}$ \\
            \midrule
            Walker2d-m
            & $7.4{\scriptstyle \pm 13.1}$
            & $24.5{\scriptstyle \pm 3.6}\abup{17.1}$
            & $20.4{\scriptstyle \pm 4.1}\abup{13.0}$
            & $8.9{\scriptstyle \pm 2.8}\abup{1.5}$
            & $8.6{\scriptstyle \pm 6.5}\abup{1.2}$
            & $41.1{\scriptstyle \pm 10.8}\abup{\textbf{33.7}}$ \\
            Walker2d-me
            & $7.9{\scriptstyle \pm 9.1}$
            & $17.8{\scriptstyle \pm 10.1}\abup{9.9}$
            & $23.3{\scriptstyle \pm 9.8}\abup{15.4}$
            & $5.9{\scriptstyle \pm 3.6}\abdown{2.0}$
            & $8.3{\scriptstyle \pm 6.1}\abminor{0.4}$
            & $37.8{\scriptstyle \pm 12.3}\abup{\textbf{29.9}}$ \\
            Walker2d-e
            & $2.7{\scriptstyle \pm 3.1}$
            & $3.9{\scriptstyle \pm 2.1}\abup{1.2}$
            & $7.4{\scriptstyle \pm 3.6}\abup{4.7}$
            & $13.8{\scriptstyle \pm 6.8}\abup{11.1}$
            & $3.9{\scriptstyle \pm 1.6}\abup{1.2}$
            & $82.3{\scriptstyle \pm 10.8}\abup{\textbf{79.5}}$ \\
            \midrule
            Average Score
            & $14.7$
            & $26.3\abup{11.6}$
            & $27.8\abup{13.1}$
            & $14.7$
            & $18.7\abup{4.0}$
            & $45.8\abup{\textbf{31.1}}$ \\
            \bottomrule
        \end{tabular}%
    }
\end{table}
\textbf{Comparing with alternative action-consistency approaches.}\quad
To further benchmark our method, we compare against two categories of representative alternatives, using TD3+BC as the shared backbone. The first category enforces action smoothness, including the penalty-based methods CAPS~\citep{mysore2021regularizing} and L2C2~\citep{kobayashi2022l2c2}, which encourage local action consistency on perturbed or interpolated states, and the architectural method LipsNet~\citep{song2023lipsnet}, which explicitly bounds the network's Lipschitz constant. The second category explicitly predicts first-order actions, for which we construct an auxiliary-head variant of TD3+BC, denoted as +Extra $\dot{a}$ head, where the actor is equipped with a dedicated prediction head, i.e., learning a policy $\pi(s) = (a, \dot{a})$, and the predicted $\dot{a}$ is also utilized during inference. All baselines follow their original configurations or exactly the same training configurations as our method.

For action-smoothness approaches, as reported in Table~\ref{tab:ablation_baselines} (+CAPS/+L2C2/+LipsNet columns), our method achieves consistent improvements across all tasks. Penalty-based methods were originally designed for online settings; in offline RL, enforcing smoothness on synthetic, unobserved states risks OOD extrapolation errors, and this degradation is particularly pronounced on high-quality (\emph{-e}) datasets, whose narrow and near-deterministic trajectories conflict with isotropic perturbations or uniform interpolations. Meanwhile, strictly constraining the network architecture (LipsNet) limits model expressivity, making it difficult to fit offline data from limited samples.

For explicit first-order action prediction, the auxiliary-head variant brings only marginal improvements over the backbone and substantially underperforms our method (+Extra $\dot{a}$ head column). We attribute this gap to two factors. First, predicting two distinct action outputs converts policy optimization into a multi-task learning problem, which introduces conflicting gradient directions between the zeroth-order and first-order action objectives and requires extra hyperparameter tuning to balance their trade-off. Second, the dedicated head doubles the output dimension and requires structural modification of the base network, losing compatibility with off-the-shelf policy architectures. In contrast, our method requires zero architectural changes, acting as a lightweight, plug-and-play module that preserves temporal consistency without sacrificing network capacity.

\begin{wrapfigure}[17]{r}{0.5\textwidth}
\vspace{-10pt}
\centering
\vspace{-4pt}
\includegraphics[width=\linewidth]{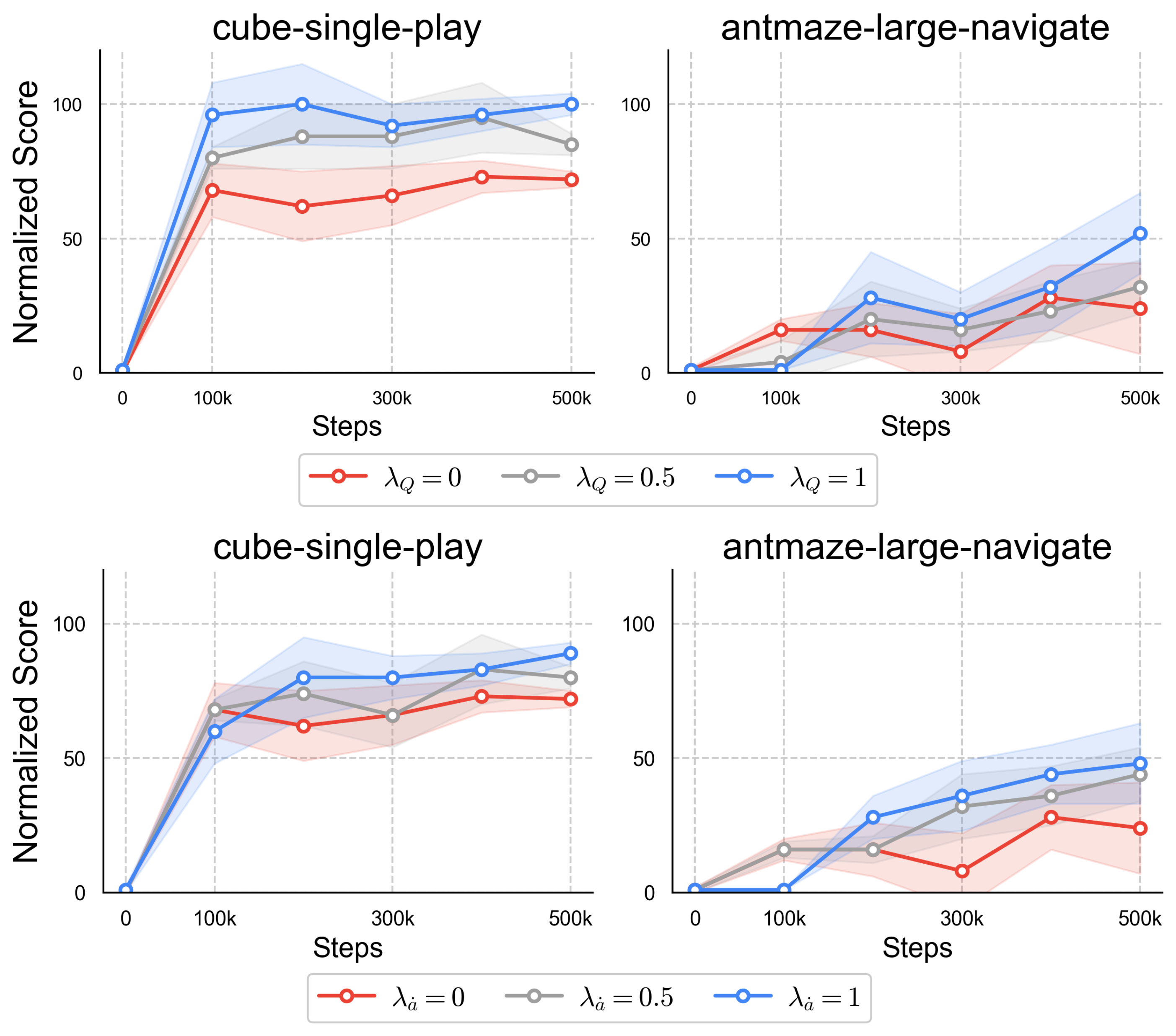}
\caption{\small Ablation study on value regularization weight $\lambda_Q$ and $1^\text{st}$-order action supervision weight $\lambda_{\dot{a}}$ of IFQL.}
\label{fig:ifql_ablation}
\vspace{-8pt}
\end{wrapfigure}

\textbf{Effectiveness of value and policy constraints.}
Figure~\ref{fig:ifql_ablation} isolates the respective contributions of the two components in our IFQL instantiation on OGBench by sweeping one weight while strictly setting the other to zero. 
The weight $\lambda_Q$ controls the first-order value-consistency regularizer; sweeping it over $\{0, 0.5, 1\}$ reveals steady performance gains, indicating that aligning the value landscape with the local state-action flow provides a highly beneficial training signal.

Similarly, increasing the first-order action supervision weight $\lambda_{\dot{a}}$ from the same grid $\{0, 0.5, 1\}$ consistently improves performance over standard IFQL. This confirms that first-order action supervision offers crucial guidance beyond mere zeroth-order action fitting, even when the value-side regularizer is completely disabled. Additional ablations are deferred to Appendix~\ref{Additional Ablation Results}.
\par

\section{Conclusion}
% \vspace{-4pt}
\label{sec:conclusion}
In this paper, we identified the neglect of higher-order action dynamics as a fundamental source of control instability and poor robustness in contemporary offline RL and IL methods. To bridge this gap, we proposed an elegant, plug-and-play learning scheme that enables simultaneous zeroth- and first-order action supervision for any off-the-shelf policy model, including deterministic, stochastic, and generative flow policies, without requiring any structural modifications. Supported by formal theoretical guarantees, our enhanced policy class also integrates seamlessly into the continuous-time HJB framework, providing additional first-order regularization on both value estimation and policy optimization. Extensive empirical evaluations confirm that our scheme yields substantial improvements in control smoothness, robustness, and out-of-distribution generalization, particularly under challenging small-dataset settings, making it a promising tool for real-world control problems.

% In this paper, we identify that the neglect of higher-order action dynamics is a fundamental source of control instability and poor robustness in contemporary offline RL and IL methods. To bridge this gap, we introduced an elegant, plug-and-play learning scheme that enables simultaneous $0^\text{th}$- and $1^\text{st}$-order action supervision for deterministic, stochastic, and generative flow policies. Supported by formal theoretical guarantees, our enhanced policy class seamlessly integrates into the continuous-time HJB framework to regularize both value estimation and policy optimization. Extensive empirical evaluations demonstrate that our method yields substantial improvements in control smoothness, performance, and out-of-distribution generalization, particularly in challenging low-data regimes.
\newpage
\section*{Acknowledgment}
This work was supported by the Wuxi Research Institute of Applied Technologies at Tsinghua University under the Grant 20242001120, Xiongan AI Institute, and Horizon Robotics.

\bibliography{neurips_2025}
\bibliographystyle{unsrt}
%%%%%%%%%%%%%%%%%%%%%%%%%%%%%%%%%%%%%%%%%%%%%%%%%%%%%%%%%%%%%%%%%%%%%%%%%%%%%%%
%%%%%%%%%%%%%%%%%%%%%%%%%%%%%%%%%%%%%%%%%%%%%%%%%%%%%%%%%%%%%%%%%%%%%%%%%%%%%%%
% Checklist
%%%%%%%%%%%%%%%%%%%%%%%%%%%%%%%%%%%%%%%%%%%%%%%%%%%%%%%%%%%%%%%%%%%%%%%%%%%%%%%
%%%%%%%%%%%%%%%%%%%%%%%%%%%%%%%%%%%%%%%%%%%%%%%%%%%%%%%%%%%%%%%%%%%%%%%%%%%%%%%

%%%%%%%%%%%%%%%%%%%%%%%%%%%%%%%%%%%%%%%%%%%%%%%%%%%%%%%%%%%%%%%%%%%%%%%%%%%%%%%
%%%%%%%%%%%%%%%%%%%%%%%%%%%%%%%%%%%%%%%%%%%%%%%%%%%%%%%%%%%%%%%%%%%%%%%%%%%%%%%
% APPENDIX
%%%%%%%%%%%%%%%%%%%%%%%%%%%%%%%%%%%%%%%%%%%%%%%%%%%%%%%%%%%%%%%%%%%%%%%%%%%%%%%
%%%%%%%%%%%%%%%%%%%%%%%%%%%%%%%%%%%%%%%%%%%%%%%%%%%%%%%%%%%%%%%%%%%%%%%%%%%%%%%
\newpage
\appendix
\vspace{-6pt}
\section{Limitations}
\label{limitaion}
\vspace{-8pt}
In this section, we summarize the limitations of our proposed method and outline 
potential directions for future research.
First, applying first-order supervision directly in raw pixel space can be computationally expensive and semantically noisy. The issue is not that higher-order supervision is incompatible with visual inputs, but that pixels are a poor coordinate system for measuring dynamics and control evolution. A practical remedy is to apply the same supervision after mapping observations into a compact state representation~\citep{hafner2020dream}, where derivatives reflect task-relevant changes rather than appearance variation. Pixel-level higher-order supervision therefore requires a good representation.
Second, our higher-order supervision relies on finite-difference estimates of local derivatives, i.e., $\dot{s}=s'-s$. With coarse sampling, these estimated tangents may provide poor approximations of the true local dynamics. This issue can be alleviated by reconstructing smoother trajectories via interpolation~\citep{deboor2001practical} or filtering~\citep{savitzky1964smoothing} before applying supervision.
\vspace{-8pt}
\section{Additional Results}
\vspace{-8pt}
\label{Additional Results}
In this section, we first present the detailed evaluation results on OGBench in Table~\ref{tab:OGbench full results}, followed by the performance scores on the D4RL benchmark under the full-dataset setting in Table~\ref{tab:d4rl_res_full}. Additionally, we report the imitation learning results in Table~\ref{tab:im_d4rl_res_10k}. Finally, Section~\ref{Additional Ablation Results} provides more ablation studies.
\vspace{-12pt}
\subsection{Additional Evaluation Results}
\label{Additional Evaluation Results}
\textbf{Full Evaluation results on OGBench.}\quad We present the comprehensive per-task results on OGBench in Table~\ref{tab:OGbench full results}. Note that the baseline performance of TD3+BC~\citep{fujimoto2021minimalist} is evaluated using the hyperparameters provided in Appendix~\ref{app:imp_details}, while the baseline results for ReBRAC, IQL, and IFQL are directly sourced from FQL~\citep{Park2025FlowQ}.
\begin{table}[H]
    \vspace{-10pt}
    \caption{\small Average normalized scores on 50 OGBench tasks over 5 random seeds. Values after $\pm$ denote standard deviations. The ${\color{mygreen}\uparrow}$ marks indicate improvements greater than $1.0$, whereas ${\color{gray}\uparrow}$ denotes minor improvements.}
    \label{tab:OGbench full results}
    \centering
        \resizebox{\textwidth}{!}{
            \begin{tabular}{lcccccccc}
                \toprule
                & \multicolumn{2}{c}{Det. Policy} & \multicolumn{2}{c}{Sto. Policy (mean sup.)} & \multicolumn{2}{c}{Sto. Policy (standard)} & \multicolumn{2}{c}{Flow Policy} \\
\cmidrule(lr){2-3} \cmidrule(lr){4-5} \cmidrule(lr){6-7} \cmidrule(lr){8-9}
                
                Task Category 
                & TD3+BC & TD3+BC+ours
                & ReBRAC & ReBRAC+ours & IQL 
                & IQL+ours & IFQL & IFQL+ours \\
                \midrule
                antmaze-large-navigate-singletask-task1-v0 ($\ast $) &$92{\scriptstyle \pm 8}$ &$99{\scriptstyle \pm 2}\,({\color{mygreen}\uparrow\! 7})$ &$91{\scriptstyle \pm 10}$ &$100{\scriptstyle \pm 1}\,({\color{mygreen}\uparrow\! 9})$  &$48{\scriptstyle \pm 9}$  &$80{\scriptstyle \pm 10}\,({\color{mygreen}\uparrow\! 32})$  &$24{\scriptstyle \pm 17}$  & $56{\scriptstyle \pm 14}\,({\color{mygreen}\uparrow\! 32})$\\
                antmaze-large-navigate-singletask-task2-v0 &$85{\scriptstyle \pm 3}$ & $91{\scriptstyle \pm 4}\,({\color{mygreen}\uparrow\! 6})$&$88{\scriptstyle \pm 4}$ &$92{\scriptstyle \pm 4}\,({\color{mygreen}\uparrow\! 4})$  &$42{\scriptstyle \pm 6}$  &$42{\scriptstyle \pm 15}$  &$8{\scriptstyle \pm 3}$  &$16{\scriptstyle \pm 11}\,({\color{mygreen}\uparrow\! 8})$  \\
                antmaze-large-navigate-singletask-task3-v0 &$72{\scriptstyle \pm 10}$ &$100{\scriptstyle \pm 0}\,({\color{mygreen}\uparrow\! 28})$ &$51{\scriptstyle \pm 18}$ &$100{\scriptstyle \pm 1}\,({\color{mygreen}\uparrow\! 49})$  &$72{\scriptstyle \pm 7}$  &$90{\scriptstyle \pm 6}\,({\color{mygreen}\uparrow\! 18})$  &$52{\scriptstyle \pm 17}$  &$72{\scriptstyle \pm 10}\,({\color{mygreen}\uparrow\! 20})$  \\
                antmaze-large-navigate-singletask-task4-v0 &$85{\scriptstyle \pm 5}$ &$99{\scriptstyle \pm 1}\,({\color{mygreen}\uparrow\! 14})$ &$84{\scriptstyle \pm 7}$ &$100{\scriptstyle \pm 1}\,({\color{mygreen}\uparrow\! 16})$  &$51{\scriptstyle \pm 9}$  &$52{\scriptstyle \pm 22}\,({\color{gray}\uparrow\! 1})$  &$18{\scriptstyle \pm 8}$  &$40{\scriptstyle \pm 17}\,({\color{mygreen}\uparrow\! 22})$  \\
                antmaze-large-navigate-singletask-task5-v0 &$82{\scriptstyle \pm 10}$ &$99{\scriptstyle \pm 0}\,({\color{mygreen}\uparrow\! 17})$ &$90{\scriptstyle \pm 2}$ &$99{\scriptstyle \pm 1}\,({\color{mygreen}\uparrow\! 9})$  &$54{\scriptstyle \pm 22}$  &$78{\scriptstyle \pm 9}\,({\color{mygreen}\uparrow\! 24})$  &$38{\scriptstyle \pm 18}$  & $48{\scriptstyle \pm 19}\,({\color{mygreen}\uparrow\! 10})$ \\
                \midrule
                antmaze-giant-navigate-singletask-task1-v0 ($\ast $) &$52{\scriptstyle \pm 20}$ &$57{\scriptstyle \pm 19}\,({\color{mygreen}\uparrow\! 5})$ &$27{\scriptstyle \pm 22}$  &$39{\scriptstyle \pm 21}\,({\color{mygreen}\uparrow\! 12})$  &$0{\scriptstyle \pm 0}$  &$0{\scriptstyle \pm 0}$  &$0{\scriptstyle \pm 0}$  & $0{\scriptstyle \pm 0}$ \\
                antmaze-giant-navigate-singletask-task2-v0 &$47{\scriptstyle \pm 30}$ &$71{\scriptstyle \pm 36}\,({\color{mygreen}\uparrow\! 24})$ &$16{\scriptstyle \pm 17}$  &$70{\scriptstyle \pm 26}\,({\color{mygreen}\uparrow\! 54})$  &$1{\scriptstyle \pm 1}$  &$2{\scriptstyle \pm 1}\,({\color{gray}\uparrow\! 1})$  &$0{\scriptstyle \pm 0}$  & $0{\scriptstyle \pm 0}$ \\
                antmaze-giant-navigate-singletask-task3-v0 &$44{\scriptstyle \pm 13}$ &$44{\scriptstyle \pm 12}$ &$34{\scriptstyle \pm 22}$  &$40{\scriptstyle \pm 15}\,({\color{mygreen}\uparrow\! 6})$  &$0{\scriptstyle \pm 0}$  &$0{\scriptstyle \pm 0}$  &$0{\scriptstyle \pm 0}$  &$0{\scriptstyle \pm 0}$  \\
                antmaze-giant-navigate-singletask-task4-v0 &$3{\scriptstyle \pm 4}$ &$63{\scriptstyle \pm 31}\,({\color{mygreen}\uparrow\! 60})$ &$5{\scriptstyle \pm 12}$  &$67{\scriptstyle \pm 33}\,({\color{mygreen}\uparrow\! 62})$  &$0{\scriptstyle \pm 0}$  &$1{\scriptstyle \pm 1}\,({\color{gray}\uparrow\! 1})$  &$0{\scriptstyle \pm 0}$  &$0{\scriptstyle \pm 0}$  \\
                antmaze-giant-navigate-singletask-task5-v0 &$65{\scriptstyle \pm 7}$ &$92{\scriptstyle \pm 11}\,({\color{mygreen}\uparrow\! 27})$ &$49{\scriptstyle \pm 22}$  &$96{\scriptstyle \pm 3}\,({\color{mygreen}\uparrow\! 47})$  &$19{\scriptstyle \pm 7}$  &$41{\scriptstyle \pm 6}\,({\color{mygreen}\uparrow\! 22})$  &$13{\scriptstyle \pm 9}$  & $21{\scriptstyle \pm 17}\,({\color{mygreen}\uparrow\! 8})$ \\
                \midrule
                humanoidmaze-medium-navigate-singletask-task1-v0 ($\ast $) &$24{\scriptstyle \pm 11}$ &$26{\scriptstyle \pm 4}\,({\color{mygreen}\uparrow\! 2})$ &$16{\scriptstyle \pm 9}$   &$36{\scriptstyle \pm 19}\,({\color{mygreen}\uparrow\! 20})$  &$32{\scriptstyle \pm 7}$  &$34{\scriptstyle \pm 5}\,({\color{mygreen}\uparrow\! 2})$  &$69{\scriptstyle \pm 19}$  & $71{\scriptstyle \pm 16}\,({\color{mygreen}\uparrow\! 2})$  \\
                humanoidmaze-medium-navigate-singletask-task2-v0 &$45{\scriptstyle \pm 13}$ &$49{\scriptstyle \pm 9}\,({\color{mygreen}\uparrow\! 4})$ &$18{\scriptstyle \pm 16}$   &$68{\scriptstyle \pm 24}\,({\color{mygreen}\uparrow\! 50})$  &$41{\scriptstyle \pm 9}$  &$51{\scriptstyle \pm 5}\,({\color{mygreen}\uparrow\! 10})$  &$85{\scriptstyle \pm 11}$  & $91{\scriptstyle \pm 17}\,({\color{mygreen}\uparrow\! 6})$ \\
                humanoidmaze-medium-navigate-singletask-task3-v0 &$40{\scriptstyle \pm 26}$ &$60{\scriptstyle \pm 19}\,({\color{mygreen}\uparrow\! 20})$ &$36{\scriptstyle \pm 13}$   &$66{\scriptstyle \pm 22}\,({\color{mygreen}\uparrow\! 30})$  &$25{\scriptstyle \pm 5}$  &$27{\scriptstyle \pm 6}\,({\color{mygreen}\uparrow\! 2})$  &$49{\scriptstyle \pm 49}$  & $56{\scriptstyle \pm 32}\,({\color{mygreen}\uparrow\! 7})$ \\
                humanoidmaze-medium-navigate-singletask-task4-v0 &$30{\scriptstyle \pm 20}$ &$44{\scriptstyle \pm 9}\,({\color{mygreen}\uparrow\! 14})$ &$15{\scriptstyle \pm 16}$   &$30{\scriptstyle \pm 21}\,({\color{mygreen}\uparrow\! 15})$  &$0{\scriptstyle \pm 1}$  &$0{\scriptstyle \pm 0}$  &$1{\scriptstyle \pm 1}$  & $1{\scriptstyle \pm 1}$ \\
                humanoidmaze-medium-navigate-singletask-task5-v0 &$31{\scriptstyle \pm 13}$ &$31{\scriptstyle \pm 7}$ &$24{\scriptstyle \pm 20}$   &$78{\scriptstyle \pm 16}\,({\color{mygreen}\uparrow\! 54})$  &$66{\scriptstyle \pm 4}$  &$66{\scriptstyle \pm 4}$  &$98{\scriptstyle \pm 2}$  & $100{\scriptstyle \pm 1}\,({\color{mygreen}\uparrow\! 2})$ \\
                \midrule
                humanoidmaze-large-navigate-singletask-task1-v0 ($\ast $) &$3{\scriptstyle \pm 3}$ &$3{\scriptstyle \pm 2}$ &$2{\scriptstyle \pm 1}$  &$7{\scriptstyle \pm 7}\,({\color{mygreen}\uparrow\! 5})$  &$3{\scriptstyle \pm 1}$  &$4{\scriptstyle \pm 1}\,({\color{gray}\uparrow\! 1})$  &$6{\scriptstyle \pm 2}$  & $8{\scriptstyle \pm 4}\,({\color{mygreen}\uparrow\! 2})$ \\
                humanoidmaze-large-navigate-singletask-task2-v0 &$0{\scriptstyle \pm 0}$ &$0{\scriptstyle \pm 0}$ &$0{\scriptstyle \pm 0}$  &$0{\scriptstyle \pm 0}$  &$0{\scriptstyle \pm 0}$  &$0{\scriptstyle \pm 0}$  &$0{\scriptstyle \pm 0}$  &$0{\scriptstyle \pm 0}$  \\
                humanoidmaze-large-navigate-singletask-task3-v0 &$8{\scriptstyle \pm 4}$ &$17{\scriptstyle \pm 6}\,({\color{mygreen}\uparrow\! 9})$ &$8{\scriptstyle \pm 4}$  &$11{\scriptstyle \pm 6}\,({\color{mygreen}\uparrow\! 3})$  &$7{\scriptstyle \pm 3}$  &$7{\scriptstyle \pm 1}$  &$48{\scriptstyle \pm 10}$  & $52{\scriptstyle \pm 16}\,({\color{mygreen}\uparrow\! 4})$ \\
                humanoidmaze-large-navigate-singletask-task4-v0 &$1{\scriptstyle \pm 1}$ &$2{\scriptstyle \pm 1}\,({\color{gray}\uparrow\! 1})$ &$1{\scriptstyle \pm 1}$  &$2{\scriptstyle \pm 1}\,({\color{gray}\uparrow\! 1})$  &$1{\scriptstyle \pm 0}$  &$2{\scriptstyle \pm 1}\,({\color{gray}\uparrow\! 1})$  &$1{\scriptstyle \pm 1}$  &$1{\scriptstyle \pm 1}$  \\
                humanoidmaze-large-navigate-singletask-task5-v0 &$0{\scriptstyle \pm 0}$ &$1{\scriptstyle \pm 1}\,({\color{gray}\uparrow\! 1})$ &$2{\scriptstyle \pm 2}$  &$2{\scriptstyle \pm 1}$  &$1{\scriptstyle \pm 1}$  &$3{\scriptstyle \pm 1}\,({\color{mygreen}\uparrow\! 2})$  &$0{\scriptstyle \pm 0}$  &$0{\scriptstyle \pm 0}$  \\
                \midrule
                antsoccer-arena-navigate-singletask-task1-v0 &$9{\scriptstyle \pm 3}$ &$11{\scriptstyle \pm 5}\,({\color{mygreen}\uparrow\! 2})$ &$0{\scriptstyle \pm 0}$  &$13{\scriptstyle \pm 6}\,({\color{mygreen}\uparrow\! 13})$  &$14{\scriptstyle \pm 5}$  &$36{\scriptstyle \pm 10}\,({\color{mygreen}\uparrow\! 22})$  &$61{\scriptstyle \pm 26}$  & $68{\scriptstyle \pm 21}\,({\color{mygreen}\uparrow\! 7})$ \\
                antsoccer-arena-navigate-singletask-task2-v0 &$19{\scriptstyle \pm 8}$ &$19{\scriptstyle \pm 2}$ &$0{\scriptstyle \pm 1}$  &$13{\scriptstyle \pm 4}\,({\color{mygreen}\uparrow\! 13})$  &$17{\scriptstyle \pm 7}$  &$37{\scriptstyle \pm 5}\,({\color{mygreen}\uparrow\! 20})$  &$75{\scriptstyle \pm 3}$  & $83{\scriptstyle \pm 12}\,({\color{mygreen}\uparrow\! 8})$ \\
                antsoccer-arena-navigate-singletask-task3-v0 &$25{\scriptstyle \pm 8}$ &$27{\scriptstyle \pm 10}\,({\color{mygreen}\uparrow\! 2})$ &$0{\scriptstyle \pm 0}$  &$32{\scriptstyle \pm 11}\,({\color{mygreen}\uparrow\! 32})$  &$6{\scriptstyle \pm 4}$  &$19{\scriptstyle \pm 7}\,({\color{mygreen}\uparrow\! 13})$  &$14{\scriptstyle \pm 22}$  &$22{\scriptstyle \pm 18}\,({\color{mygreen}\uparrow\! 8})$  \\
                antsoccer-arena-navigate-singletask-task4-v0 ($\ast $) &$1{\scriptstyle \pm 1}$ &$1{\scriptstyle \pm 1}$  &$0{\scriptstyle \pm 0}$  &$20{\scriptstyle \pm 4}\,({\color{mygreen}\uparrow\! 20})$  &$3{\scriptstyle \pm 2}$&$9{\scriptstyle \pm 3}\,({\color{mygreen}\uparrow\! 6})$  &$16{\scriptstyle \pm 9}$  &$16{\scriptstyle \pm 8}$  \\
                antsoccer-arena-navigate-singletask-task5-v0 &$12{\scriptstyle \pm 7}$ &$20{\scriptstyle \pm 9}\,({\color{mygreen}\uparrow\! 8})$ &$0{\scriptstyle \pm 0}$  &$6{\scriptstyle \pm 5}\,({\color{mygreen}\uparrow\! 6})$  &$2{\scriptstyle \pm 2}$  &$11{\scriptstyle \pm 7}\,({\color{mygreen}\uparrow\! 9})$  &$0{\scriptstyle \pm 1}$  & $8{\scriptstyle \pm 7}\,({\color{mygreen}\uparrow\! 8})$ \\
                \midrule
                cube-single-play-singletask-task1-v0 &$89{\scriptstyle \pm 3}$ &$94{\scriptstyle \pm 2}\,({\color{mygreen}\uparrow\! 5})$ &$89{\scriptstyle \pm 5}$  &$94{\scriptstyle \pm 2}\,({\color{mygreen}\uparrow\! 5})$  &$88{\scriptstyle \pm 3}$  &$90{\scriptstyle \pm 5}\,({\color{mygreen}\uparrow\! 2})$  &$79{\scriptstyle \pm 4}$  & $96{\scriptstyle \pm 9}\,({\color{mygreen}\uparrow\! 17})$ \\
                cube-single-play-singletask-task2-v0 ($\ast $) &$87{\scriptstyle \pm 5}$ &$88{\scriptstyle \pm 2}\,({\color{gray}\uparrow\! 1})$ &$92{\scriptstyle \pm 4}$  &$93{\scriptstyle \pm 4}\,({\color{gray}\uparrow\! 1})$  &$85{\scriptstyle \pm 8}$  &$93{\scriptstyle \pm 3}\,({\color{mygreen}\uparrow\! 8})$  &$73{\scriptstyle \pm 3}$  & $100{\scriptstyle \pm 1}\,({\color{mygreen}\uparrow\! 27})$ \\
                cube-single-play-singletask-task3-v0 &$93{\scriptstyle \pm 3}$ &$96{\scriptstyle \pm 3}\,({\color{mygreen}\uparrow\! 3})$ &$93{\scriptstyle \pm 3}$  &$94{\scriptstyle \pm 2}\,({\color{gray}\uparrow\! 1})$  &$91{\scriptstyle \pm 5}$  &$95{\scriptstyle \pm 5}\,({\color{mygreen}\uparrow\! 4})$  &$88{\scriptstyle \pm 4}$  & $100{\scriptstyle \pm 1}\,({\color{mygreen}\uparrow\! 12})$  \\
                cube-single-play-singletask-task4-v0 &$87{\scriptstyle \pm 4}$ &$89{\scriptstyle \pm 3}\,({\color{mygreen}\uparrow\! 2})$ &$92{\scriptstyle \pm 3}$  &$92{\scriptstyle \pm 4}$  &$73{\scriptstyle \pm 6}$  &$90{\scriptstyle \pm 5}\,({\color{mygreen}\uparrow\! 17})$  &$79{\scriptstyle \pm 6}$  & $88{\scriptstyle \pm 16}\,({\color{mygreen}\uparrow\! 9})$ \\
                cube-single-play-singletask-task5-v0 &$85{\scriptstyle \pm 5}$ &$85{\scriptstyle \pm 1}$ &$87{\scriptstyle \pm 8}$  &$90{\scriptstyle \pm 4}\,({\color{mygreen}\uparrow\! 3})$  &$78{\scriptstyle \pm 9}$  &$90{\scriptstyle \pm 6}\,({\color{mygreen}\uparrow\! 12})$  &$77{\scriptstyle \pm 7}$  & $92{\scriptstyle \pm 10}\,({\color{mygreen}\uparrow\! 15})$ \\
                \midrule
                cube-double-play-singletask-task1-v0 &$30{\scriptstyle \pm 4}$ &$46{\scriptstyle \pm 6}\,({\color{mygreen}\uparrow\! 16})$ &$45{\scriptstyle \pm 6}$  &$45{\scriptstyle \pm 6}$  &$27{\scriptstyle \pm 5}$  &$33{\scriptstyle \pm 7}\,({\color{mygreen}\uparrow\! 6})$  &$35{\scriptstyle \pm 9}$  & $76{\scriptstyle \pm 8}\,({\color{mygreen}\uparrow\! 41})$ \\
                cube-double-play-singletask-task2-v0 ($\ast $) &$6{\scriptstyle \pm 1}$ &$9{\scriptstyle \pm 1}\,({\color{mygreen}\uparrow\! 3})$ &$7{\scriptstyle \pm 3}$  &$32{\scriptstyle \pm 7}\,({\color{mygreen}\uparrow\! 25})$  &$1{\scriptstyle \pm 1}$  &$2{\scriptstyle \pm 1}\,({\color{gray}\uparrow\! 1})$  &$9{\scriptstyle \pm 5}$  & $40{\scriptstyle \pm 22}\,({\color{mygreen}\uparrow\! 31})$ \\
                cube-double-play-singletask-task3-v0 &$1{\scriptstyle \pm 1}$ &$2{\scriptstyle \pm 1}\,({\color{gray}\uparrow\! 1})$ &$4{\scriptstyle \pm 1}$  &$17{\scriptstyle \pm 5}\,({\color{mygreen}\uparrow\! 13})$  &$0{\scriptstyle \pm 0}$  &$1{\scriptstyle \pm 1}\,({\color{gray}\uparrow\! 1})$  &$8{\scriptstyle \pm 5}$  & $25{\scriptstyle \pm 13}\,({\color{mygreen}\uparrow\! 17})$ \\
                cube-double-play-singletask-task4-v0 &$0{\scriptstyle \pm 1}$ &$1{\scriptstyle \pm 1}\,({\color{gray}\uparrow\! 1})$ &$1{\scriptstyle \pm 1}$  &$3{\scriptstyle \pm 1}\,({\color{mygreen}\uparrow\! 2})$  &$0{\scriptstyle \pm 0}$  &$0{\scriptstyle \pm 0}$  &$1{\scriptstyle \pm 1}$  & $12{\scriptstyle \pm 10}\,({\color{mygreen}\uparrow\! 11})$ \\
                cube-double-play-singletask-task5-v0 &$1{\scriptstyle \pm 1}$ &$1{\scriptstyle \pm 1}$ &$4{\scriptstyle \pm 2}$  &$22{\scriptstyle \pm 7}\,({\color{mygreen}\uparrow\! 18})$  &$4{\scriptstyle \pm 3}$  &$4{\scriptstyle \pm 2}$  &$17{\scriptstyle \pm 6}$  & $55{\scriptstyle \pm 21}\,({\color{mygreen}\uparrow\! 38})$ \\
                \midrule
                scene-play-singletask-task1-v0 &$95{\scriptstyle \pm 3}$ &$96{\scriptstyle \pm 1}\,({\color{gray}\uparrow\! 1})$ &$95{\scriptstyle \pm 2}$  &$95{\scriptstyle \pm 3}$  &$94{\scriptstyle \pm 3}$  &$95{\scriptstyle \pm 2}\,({\color{gray}\uparrow\! 1})$  &$98{\scriptstyle \pm 3}$  & $99{\scriptstyle \pm 2}\,({\color{gray}\uparrow\! 1})$ \\
                scene-play-singletask-task2-v0 ($\ast $) &$54{\scriptstyle \pm 10}$ &$58{\scriptstyle \pm 4}\,({\color{mygreen}\uparrow\! 4})$ &$50{\scriptstyle \pm 13}$  &$60{\scriptstyle \pm 13}\,({\color{mygreen}\uparrow\! 10})$  &$12{\scriptstyle \pm 3}$  &$17{\scriptstyle \pm 6}\,({\color{mygreen}\uparrow\! 5})$  &$0{\scriptstyle \pm 0}$  & $88{\scriptstyle \pm 17}\,({\color{mygreen}\uparrow\! 88})$ \\
                scene-play-singletask-task3-v0 &$68{\scriptstyle \pm 10}$ &$78{\scriptstyle \pm 7}\,({\color{mygreen}\uparrow\! 10})$ &$55{\scriptstyle \pm 16}$  &$75{\scriptstyle \pm 10}\,({\color{mygreen}\uparrow\! 20})$  &$32{\scriptstyle \pm 7}$  &$52{\scriptstyle \pm 9}\,({\color{mygreen}\uparrow\! 20})$  &$54{\scriptstyle \pm 19}$  &$92{\scriptstyle \pm 17}\,({\color{mygreen}\uparrow\! 38})$  \\
                scene-play-singletask-task4-v0 &$4{\scriptstyle \pm 3}$ &$4{\scriptstyle \pm 2}$ &$3{\scriptstyle \pm 3}$  &$5{\scriptstyle \pm 3}\,({\color{mygreen}\uparrow\! 2})$  &$0{\scriptstyle \pm 1}$  &$1{\scriptstyle \pm 1}\,({\color{gray}\uparrow\! 1})$  &$0{\scriptstyle \pm 0}$  &$0{\scriptstyle \pm 0}$  \\
                scene-play-singletask-task5-v0 &$0{\scriptstyle \pm 0}$ &$0{\scriptstyle \pm 0}$ &$0{\scriptstyle \pm 0}$  &$0{\scriptstyle \pm 0}$  &$0{\scriptstyle \pm 0}$  &$0{\scriptstyle \pm 0}$  &$0{\scriptstyle \pm 0}$  &$0{\scriptstyle \pm 0}$  \\
                \midrule
                puzzle-3x3-play-singletask-task1-v0 &$96{\scriptstyle \pm 3}$ &$98{\scriptstyle \pm 3}\,({\color{mygreen}\uparrow\! 2})$ &$97{\scriptstyle \pm 4}$  &$98{\scriptstyle \pm 3}\,({\color{gray}\uparrow\! 1})$  &$33{\scriptstyle \pm 6}$  &$39{\scriptstyle \pm 17}\,({\color{mygreen}\uparrow\! 6})$  &$94{\scriptstyle \pm 3}$  &$100{\scriptstyle \pm 1}\,({\color{mygreen}\uparrow\! 6})$  \\
                puzzle-3x3-play-singletask-task2-v0 &$13{\scriptstyle \pm 9}$ &$14{\scriptstyle \pm 4}\,({\color{gray}\uparrow\! 1})$ &$1{\scriptstyle \pm 1}$  &$4{\scriptstyle \pm 2}\,({\color{mygreen}\uparrow\! 3})$  &$4{\scriptstyle \pm 3}$  &$9{\scriptstyle \pm 6}\,({\color{mygreen}\uparrow\! 5})$  &$1{\scriptstyle \pm 2}$  & $60{\scriptstyle \pm 32}\,({\color{mygreen}\uparrow\! 59})$ \\
                puzzle-3x3-play-singletask-task3-v0 &$5{\scriptstyle \pm 6}$ &$6{\scriptstyle \pm 1}\,({\color{gray}\uparrow\! 1})$ &$3{\scriptstyle \pm 1}$  &$4{\scriptstyle \pm 4}\,({\color{gray}\uparrow\! 1})$  &$3{\scriptstyle \pm 2}$  &$3{\scriptstyle \pm 1}$  &$0{\scriptstyle \pm 0}$  & $28{\scriptstyle \pm 17}\,({\color{mygreen}\uparrow\! 28})$ \\
                puzzle-3x3-play-singletask-task4-v0 ($\ast $) &$7{\scriptstyle \pm 4}$ &$11{\scriptstyle \pm 4}\,({\color{mygreen}\uparrow\! 4})$ &$2{\scriptstyle \pm 1}$  &$4{\scriptstyle \pm 2}\,({\color{mygreen}\uparrow\! 2})$  &$2{\scriptstyle \pm 1}$  &$2{\scriptstyle \pm 1}$  &$0{\scriptstyle \pm 0}$  & $16{\scriptstyle \pm 12}\,({\color{mygreen}\uparrow\! 16})$ \\
                puzzle-3x3-play-singletask-task5-v0 &$14{\scriptstyle \pm 7}$ &$15{\scriptstyle \pm 2}\,({\color{gray}\uparrow\! 1})$ &$5{\scriptstyle \pm 3}$  &$11{\scriptstyle \pm 6}\,({\color{mygreen}\uparrow\! 6})$  &$3{\scriptstyle \pm 2}$  &$4{\scriptstyle \pm 2}\,({\color{gray}\uparrow\! 1})$  &$0{\scriptstyle \pm 0}$  & $12{\scriptstyle \pm 11}\,({\color{mygreen}\uparrow\! 12})$ \\
                \midrule
                puzzle-4x4-play-singletask-task1-v0 &$1{\scriptstyle \pm 2}$ &$14{\scriptstyle \pm 2}\,({\color{mygreen}\uparrow\! 13})$ &$26{\scriptstyle \pm 4}$  &$33{\scriptstyle \pm 9}\,({\color{mygreen}\uparrow\! 7})$  &$12{\scriptstyle \pm 2}$  &$20{\scriptstyle \pm 6}\,({\color{mygreen}\uparrow\! 8})$  &$49{\scriptstyle \pm 9}$  &$53{\scriptstyle \pm 6}\,({\color{mygreen}\uparrow\! 4})$  \\
                puzzle-4x4-play-singletask-task2-v0 &$7{\scriptstyle \pm 3}$ &$9{\scriptstyle \pm 2}\,({\color{mygreen}\uparrow\! 2})$ &$12{\scriptstyle \pm 4}$  &$13{\scriptstyle \pm 7}\,({\color{gray}\uparrow\! 1})$  &$7{\scriptstyle \pm 4}$  &$10{\scriptstyle \pm 3}\,({\color{mygreen}\uparrow\! 3})$  &$4{\scriptstyle \pm 4}$  & $12{\scriptstyle \pm 9}\,({\color{mygreen}\uparrow\! 8})$ \\
                puzzle-4x4-play-singletask-task3-v0 &$3{\scriptstyle \pm 2}$ &$8{\scriptstyle \pm 2}\,({\color{mygreen}\uparrow\! 5})$ &$15{\scriptstyle \pm 3}$  &$19{\scriptstyle \pm 6}\,({\color{mygreen}\uparrow\! 4})$  &$9{\scriptstyle \pm 3}$  &$14{\scriptstyle \pm 6}\,({\color{mygreen}\uparrow\! 5})$  &$50{\scriptstyle \pm 14}$  & $84{\scriptstyle \pm 16}\,({\color{mygreen}\uparrow\! 34})$ \\
                puzzle-4x4-play-singletask-task4-v0 ($\ast $) &$5{\scriptstyle \pm 2}$ &$7{\scriptstyle \pm 2}\,({\color{mygreen}\uparrow\! 2})$ &$10{\scriptstyle \pm 3}$  &$10{\scriptstyle \pm 4}$  &$5{\scriptstyle \pm 2}$  &$10{\scriptstyle \pm 2}\,({\color{mygreen}\uparrow\! 5})$  &$21{\scriptstyle \pm 11}$  & $30{\scriptstyle \pm 17}\,({\color{mygreen}\uparrow\! 9})$ \\
                puzzle-4x4-play-singletask-task5-v0 &$6{\scriptstyle \pm 1}$ &$6{\scriptstyle \pm 1}$ &$7{\scriptstyle \pm 3}$  &$8{\scriptstyle \pm 4}\,({\color{gray}\uparrow\! 1})$  &$4{\scriptstyle \pm 1}$  &$8{\scriptstyle \pm 5}\,({\color{mygreen}\uparrow\! 4})$  &$2{\scriptstyle \pm 2}$  & $21{\scriptstyle \pm 14}\,({\color{mygreen}\uparrow\! 19})$ \\
                \bottomrule
            \end{tabular}
            }
\vspace{-20pt}
\end{table}

The empirical results show that integrating our first-order action supervision yields consistent performance improvements across a diverse range of offline policy optimization paradigms. Notably, our scheme can improve policy learning in cases where baseline algorithms fundamentally fail. On several complex, sparse-reward tasks, baselines collapse and obtain a score of exactly $0$, indicating that they cannot extract any meaningful learning signal from the offline dataset. Equipping the same algorithm with our first-order supervision lifts these scores from $0$ to clearly non-trivial levels. For example, on \texttt{scene-play-singletask-task2-v0}, IFQL completely fails ($0$), whereas IFQL with our first-order supervision reaches $88$, a near-expert level. The same breakthrough is observed on the \texttt{puzzle-3x3-play} domain (e.g., \texttt{task3-v0} and \texttt{task4-v0}), where the baselines obtain no useful learning signal (scoring $0$), while the augmented policies acquire the temporal consistency required to achieve non-trivial success rates. Beyond raising scores on already-learnable tasks, our scheme thus expands the set of tasks on which offline policy learning is feasible at all.

The improvements are also consistent across all three policy paradigms. On high-dimensional dexterous manipulation tasks such as \texttt{cube-double-play-singletask-task2-v0}, our scheme raises the score of IFQL from $9$ to $40$. On locomotion and navigation tasks such as \texttt{antmaze-giant-navigate-singletask-task5-v0}, integrating our scheme into ReBRAC further raises the score from $49$ to $96$, approaching expert-level proficiency with noticeably reduced seed-level variance. Together, these results highlight the broad applicability of our scheme and indicate that supervising higher-order action dynamics is crucial for stable offline policy learning and reliable closed-loop execution.

\begin{table}[t]
    \caption{\small Average normalized scores on D4RL MuJoCo locomotion tasks under full-datasize. Standard deviations are reported after the $\pm$ sign. The ${\color{mygreen}\uparrow}$ symbol indicates a performance increase greater than $1.0$, whereas ${\color{gray}\uparrow}$ denotes minor improvements. All results and standard deviations ($\pm$) are averaged across 5 random seeds.}
\label{tab:d4rl_res_full}
\centering
\resizebox{\textwidth}{!}{%
\begin{tabular}{lcccccccc}
\toprule
& \multicolumn{2}{c}{Det. Policy} & \multicolumn{2}{c}{Sto. Policy (mean sup.)} & \multicolumn{2}{c}{Sto. Policy (standard)} & \multicolumn{2}{c}{Flow Policy} \\
\cmidrule(lr){2-3} \cmidrule(lr){4-5} \cmidrule(lr){6-7} \cmidrule(lr){8-9}
Task Category  & TD3+BC        & TD3+BC+ours          & ReBRAC            & ReBRAC+ours        & IQL            & IQL+ours                  & IFQL & IFQL+ours             \\ 
\midrule
Hopper-m & $55.5{\scriptstyle \pm 7.7}$ & $58.9{\scriptstyle \pm 7.4}({\color{mygreen}\uparrow\! 3.4})$ & $84.9{\scriptstyle \pm 2.6}$ & $85.7{\scriptstyle \pm 1.3}({\color{gray}\uparrow\! 0.8})$  & $67.3{\scriptstyle \pm 5.7}$ & $68.8{\scriptstyle \pm 5.5}({\color{mygreen}\uparrow\! 1.5})$ & $89.5{\scriptstyle \pm 5.7}$ & $94.2{\scriptstyle \pm 7.2}({\color{mygreen}\uparrow\! 4.7})$ \\
Hopper-mr & $57.9{\scriptstyle \pm 17.6}$ & $58.4{\scriptstyle \pm 9.2}({\color{gray}\uparrow\! 0.5})$ & $81.2{\scriptstyle \pm 25.1}$ & $94.3{\scriptstyle \pm 10.1}({\color{mygreen}\uparrow\! 13.1})$ & $91.7{\scriptstyle \pm 8.6}$ & $94.2{\scriptstyle \pm 5.5}({\color{mygreen}\uparrow\! 2.5})$ & $96.2{\scriptstyle \pm 4.7}$ & $96.3{\scriptstyle \pm 2.3}({\color{gray}\uparrow\! 0.1})$ \\
Hopper-me & $89.8{\scriptstyle \pm 15.2}$ & $89.9{\scriptstyle \pm 8.4}({\color{gray}\uparrow\! 0.1})$ & $106.4{\scriptstyle \pm 10.7}$ & $106.6{\scriptstyle \pm 5.4}({\color{gray}\uparrow\! 0.2})$ & $91.5{\scriptstyle \pm 14.3}$ & $109.2{\scriptstyle \pm 9.8}({\color{mygreen}\uparrow\! 17.7})$ & $77.1{\scriptstyle \pm 25.5}$ & $89.8{\scriptstyle \pm 8.7}({\color{mygreen}\uparrow\! 12.7})$ \\
Hopper-e & $108.3{\scriptstyle \pm 6.7}$ & $110.5{\scriptstyle \pm 0.2}({\color{mygreen}\uparrow\! 2.2})$ & $108.5{\scriptstyle \pm 4.1}$ & $108.6{\scriptstyle \pm 3.8}({\color{gray}\uparrow\! 0.1})$ & $99.3{\scriptstyle \pm 9.2}$ & $106.5{\scriptstyle \pm 5.1}({\color{mygreen}\uparrow\! 7.2})$ & $96.8{\scriptstyle \pm 12.2}$ & $106.2{\scriptstyle \pm 8.6}({\color{mygreen}\uparrow\! 9.4})$ \\
\midrule
Halfcheetah-m & $47.8{\scriptstyle \pm 0.4}$ & $47.9{\scriptstyle \pm 0.4}({\color{gray}\uparrow\! 0.1})$ & $55.4{\scriptstyle \pm 1.0}$ & $56.9{\scriptstyle \pm 1.1}({\color{mygreen}\uparrow\! 1.5})$ & $47.4{\scriptstyle \pm 0.4}$ & $48.6{\scriptstyle \pm 0.2}({\color{mygreen}\uparrow\! 1.2})$ & $51.1{\scriptstyle \pm 0.6}$ & $52.8{\scriptstyle \pm 0.3}({\color{mygreen}\uparrow\! 1.7})$ \\
Halfcheetah-mr & $41.6{\scriptstyle \pm 0.3}$ & $42.3{\scriptstyle \pm 0.2}({\color{gray}\uparrow\! 0.7})$ & $46.6{\scriptstyle \pm 0.3}$ & $46.7{\scriptstyle \pm 0.4}({\color{gray}\uparrow\! 0.1})$ & $41.3{\scriptstyle \pm 2.3}$ & $41.4{\scriptstyle \pm 2.0}({\color{gray}\uparrow\! 0.1})$ & $42.5{\scriptstyle \pm 0.4}$ & $43.6{\scriptstyle \pm 1.3}({\color{mygreen}\uparrow\! 1.1})$ \\
Halfcheetah-me & $88.1{\scriptstyle \pm 5.1}$ & $89.5{\scriptstyle \pm 4.7}({\color{mygreen}\uparrow\! 1.4})$ & $96.2{\scriptstyle \pm 1.6}$ & $97.4{\scriptstyle \pm 1.4}({\color{mygreen}\uparrow\! 1.2})$ & $86.7{\scriptstyle \pm 3.4}$ & $87.4{\scriptstyle \pm 2.6}({\color{gray}\uparrow\! 0.7})$ & $88.4{\scriptstyle \pm 2.2}$ & $89.5{\scriptstyle \pm 1.6}({\color{mygreen}\uparrow\! 1.1})$ \\
Halfcheetah-e & $90.1{\scriptstyle \pm 0.3}$ & $93.4{\scriptstyle \pm 0.4}({\color{mygreen}\uparrow\! 3.3})$ & $91.3{\scriptstyle \pm 1.0}$ &$91.7{\scriptstyle \pm 0.3}({\color{gray}\uparrow\! 0.4})$  & $89.6{\scriptstyle \pm 7.8}$ & $91.4{\scriptstyle \pm 1.2}({\color{mygreen}\uparrow\! 1.8})$ & $91.2{\scriptstyle \pm 1.5}$ & $91.6{\scriptstyle \pm 1.7}({\color{gray}\uparrow\! 0.4})$ \\
\midrule
Walker2d-m & $79.7{\scriptstyle \pm 4.1}$ & $80.1{\scriptstyle \pm 1.6}({\color{gray}\uparrow\! 0.4})$ & $76.7{\scriptstyle \pm 9.9}$ & $82.6{\scriptstyle \pm 2.2}({\color{mygreen}\uparrow\! 5.9})$ & $71.5{\scriptstyle \pm 10.8}$ & $78.8{\scriptstyle \pm 5.0}({\color{mygreen}\uparrow\! 7.3})$ & $79.5{\scriptstyle \pm 4.1}$ & $82.6{\scriptstyle \pm 4.4}({\color{mygreen}\uparrow\! 3.1})$ \\
Walker2d-mr & $67.2{\scriptstyle \pm 16.4}$ & $73.3{\scriptstyle \pm 4.1}({\color{mygreen}\uparrow\! 6.1})$ & $70.2{\scriptstyle \pm 11.5}$ & $83.6{\scriptstyle \pm 7.1}({\color{mygreen}\uparrow\! 13.4})$ & $71.5{\scriptstyle \pm 8.2}$ & $73.9{\scriptstyle \pm 5.5}({\color{mygreen}\uparrow\! 2.4})$ & $72.5{\scriptstyle \pm 19.1}$ & $83.1{\scriptstyle \pm 7.1}({\color{mygreen}\uparrow\! 10.6})$ \\
Walker2d-me & $108.8{\scriptstyle \pm 0.5}$ & $108.9{\scriptstyle \pm 0.3}({\color{gray}\uparrow\! 0.1})$ & $110.3{\scriptstyle \pm 1.4}$ & $110.4{\scriptstyle \pm 0.1}({\color{gray}\uparrow\! 0.1})$ & $108.4{\scriptstyle \pm 0.7}$ & $108.5{\scriptstyle \pm 0.5}({\color{gray}\uparrow\! 0.1})$ & $106.1{\scriptstyle \pm 1.0}$ & $107.5{\scriptstyle \pm 0.7}({\color{mygreen}\uparrow\! 1.4})$ \\
Walker2d-e & $108.6{\scriptstyle \pm 0.2}$ & $109.3{\scriptstyle \pm 0.1}({\color{gray}\uparrow\! 0.7})$ & $102.2{\scriptstyle \pm 1.5}$ & $109.7{\scriptstyle \pm 0.5}({\color{mygreen}\uparrow\! 7.5})$ & $110.3{\scriptstyle \pm 1.7}$ & $110.4{\scriptstyle \pm 0.5}({\color{gray}\uparrow\! 0.1})$ & $102.7{\scriptstyle \pm 11.5}$ & $106.4{\scriptstyle \pm 0.5}({\color{mygreen}\uparrow\! 3.7})$ \\
\bottomrule
\end{tabular}
}
\end{table}
\begin{table}[t]
    \vspace{-13pt}
    \caption{\small Imitation learning performance on D4RL MuJoCo locomotion tasks with 10k datasets. Average normalized scores across deterministic, stochastic, and flow-based policy paradigms. The ${\color{mygreen}\uparrow}$ indicates a performance increase $> 1.0$, whereas ${\color{gray}\uparrow}$ represents marginal gains. All results and standard deviations ($\pm$) are averaged across 5 random seeds.}
\label{tab:im_d4rl_res_10k}
\centering
\resizebox{0.85\textwidth}{!}{% 稍微放大了一点比例到 0.75，防止拥挤
\begin{tabular}{lccccccc}
\toprule
& \multicolumn{2}{c}{Det. Policy} & \multicolumn{2}{c}{Sto. Policy (standard)} & \multicolumn{2}{c}{Flow Policy} \\
\cmidrule(lr){2-3} \cmidrule(lr){4-5} \cmidrule(lr){6-7}
Task Category  & base        & base + $\mathcal{L}_{\dot{a}}^{\text{det}}$         & base            & base + $\mathcal{L}_{\dot{a}}^{\text{sto}}$                  & base & base + $\mathcal{L}_{\dot{a}}^{\text{flow}}$             \\ 
\midrule
Hopper-e      & $21.5{\scriptstyle \pm 13.5}$ & $67.3{\scriptstyle \pm 14.3}({\color{mygreen}\uparrow\! 45.8})$ & $57.5{\scriptstyle \pm 15.2}$ & $93.1{\scriptstyle \pm 10.7}({\color{mygreen}\uparrow\! 35.6})$ & $72.2{\scriptstyle \pm 9.2}$  & $97.2{\scriptstyle \pm 12.3}({\color{mygreen}\uparrow\! 25.0})$ \\
\midrule
Halfcheetah-e & $2.4{\scriptstyle \pm 1.2}$   & $5.7{\scriptstyle \pm 1.5}({\color{mygreen}\uparrow\! 3.3})$    & $2.8{\scriptstyle \pm 1.6}$   & $16.2{\scriptstyle \pm 10.1}({\color{mygreen}\uparrow\! 13.4})$ & $11.4{\scriptstyle \pm 4.5}$  & $17.9{\scriptstyle \pm 6.9}({\color{mygreen}\uparrow\! 6.5})$  \\
\midrule
Walker2d-e    & $26.7{\scriptstyle \pm 12.2}$ & $91.1{\scriptstyle \pm 14.7}({\color{mygreen}\uparrow\! 64.4})$ & $46.4{\scriptstyle \pm 15.2}$ & $91.6{\scriptstyle \pm 9.5}({\color{mygreen}\uparrow\! 45.2})$  & $89.8{\scriptstyle \pm 14.6}$ & $91.5{\scriptstyle \pm 6.3}({\color{mygreen}\uparrow\! 1.7})$   \\
\bottomrule
\end{tabular}
}
\end{table}

\textbf{Evaluation results on D4RL under the full-dataset setting.}\quad
To further evaluate the effectiveness of our scheme, we additionally conduct experiments on the D4RL MuJoCo locomotion tasks under the standard full-size dataset setting. As reported in Table~\ref{tab:d4rl_res_full}, integrating our first-order action supervision yields consistent performance improvements across all four base algorithms, which further confirms the broad applicability of our scheme, showing it reliably stabilizes and improves offline policy learning across varying dataset sizes.

\textbf{Evaluation results of imitation learning methods on the D4RL 10k setting.}\quad
Within each policy paradigm, ``base'' denotes the standard behavior cloning baseline natively implemented under the corresponding framework, while ``base + $\mathcal{L}_{\dot{a}}^{(\cdot)}$'' denotes the same baseline equipped with our first-order action supervision loss. As shown in Table~\ref{tab:im_d4rl_res_10k}, our first-order supervision consistently improves all three behavior cloning baselines, demonstrating its standalone effectiveness even without any reward signal or RL-based optimization objective. In particular, on the deterministic \texttt{Walker2d-e} task, adding $\mathcal{L}_{\dot{a}}^{\text{det}}$ raises the standard BC score from $26.7$ to $91.1$, an absolute improvement of $64.4$. Such a substantial gain reveals that, under small-dataset settings, standard BC is highly susceptible to compounding errors along the rollout trajectory. By directly supervising higher-order action dynamics, our scheme provides the missing temporal regularization that aligns rollout actions with expert dynamics and effectively recovers the expert behavior. More detailed ablation studies that further validate the mechanism of first-order supervision are provided in the following subsections.
% \vspace{-0.1pt}

\textbf{Learning curves and policy rollout distributions of $||\dot{a}||$.}\quad
We additionally present learning curves and rollout distributions of $||\dot{a}||$ for several baseline policies and their first-order action supervised variants, covering both imitation learning and offline RL settings. These results extend the analysis in Figure~\ref{fig:compare_da} and Figure~\ref{fig:compare_td3bc} to additional algorithms and tasks, and they consistently show that adding our first-order supervision yields smoother learning curves and more concentrated $||\dot{a}||$ distributions, confirming that our scheme produces temporally consistent action evolution beyond the cases highlighted in the main text.
\begin{figure}[t]
    \centering
    \includegraphics[width=\textwidth]{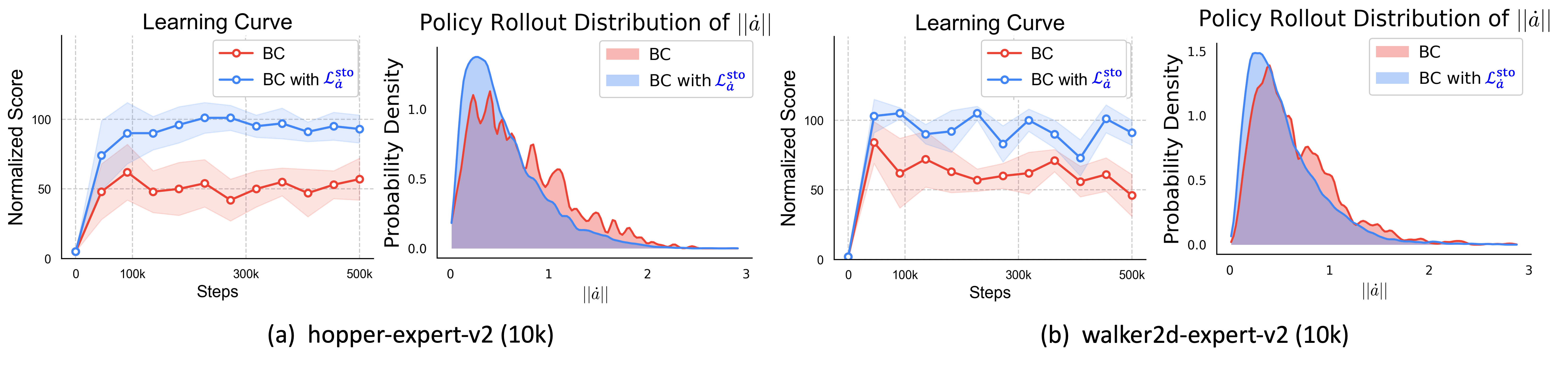}
    \caption{\small Comparison of vanilla stochastic BC and our $1^{\mathrm{st}}$-order action supervision on small-sample locomotion tasks. Scores are averaged over 5 random seeds. The policy rollout distributions of action variation $||\dot{a}||=||a'-a||$ ($a$ is joint torque) are aggregated from 10 trajectories.}
    \label{fig:compare_da_sto}
\end{figure}

\begin{figure}[t]
    \centering
    \includegraphics[width=\textwidth]{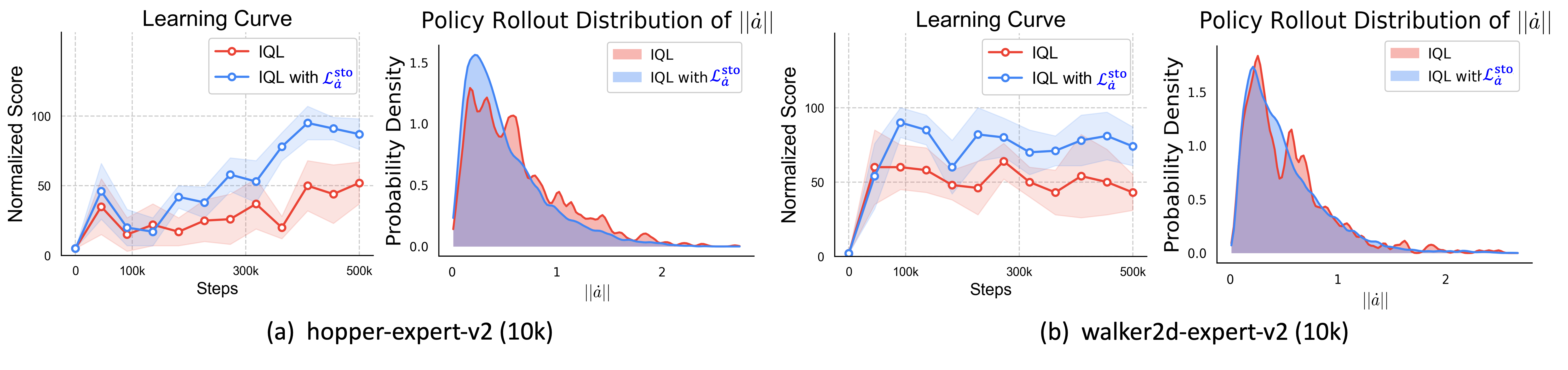}
    \caption{\small Comparison of vanilla IQL and our $1^{\mathrm{st}}$-order action supervision on small-sample locomotion tasks. Scores are averaged over 5 random seeds. The policy rollout distributions of action variation $||\dot{a}||=||a'-a||$ ($a$ is joint torque) are aggregated from 10 trajectories.}
    \label{fig:compare_da_iql}
\end{figure}

\textbf{Training time comparison.}\quad
Table~\ref{tab:train_time} compares the training time of each base algorithm against its variant equipped with our first-order action supervision on \texttt{hopper-medium-v2} (D4RL) and \texttt{cube-double} (OGBench). Across all four policy parameterizations, integrating our scheme only moderately increases the training time, indicating that our scheme is computationally efficient and practical for real-world use. All algorithms are trained on a workstation equipped with an Intel Xeon Platinum 8336C CPU @ 2.30 GHz and NVIDIA A100-SXM4-80GB GPUs, running Ubuntu 22.04.5 LTS 64-bit.
\begin{table}[H]
    \vspace{-15pt}
    \caption{\small Training time comparison between each base algorithm and its variant equipped with our first-order action supervision. All runs use the same number of training steps and are conducted on a single GPU.}
\label{tab:train_time}
\centering
\resizebox{\textwidth}{!}{%
\begin{tabular}{lcccccccc}
\toprule
& \multicolumn{2}{c}{Det. Policy} & \multicolumn{2}{c}{Sto. Policy (mean sup.)} & \multicolumn{2}{c}{Sto. Policy (standard)} & \multicolumn{2}{c}{Flow Policy} \\
\cmidrule(lr){2-3} \cmidrule(lr){4-5} \cmidrule(lr){6-7} \cmidrule(lr){8-9}
Task Category  & TD3+BC        & TD3+BC+ours          & ReBRAC            & ReBRAC+ours        & IQL            & IQL+ours                  & IFQL & IFQL+ours             \\ 
\midrule
Training time (D4RL) &28 min 30 s & 33 min 6 s & 25 min 8 s & 30 min 34 s & 34 min 44 s & 39 min 56 s & 30 min 49 s & 35 min 52 s\\
Training time (OGBench) &28 min 23 s & 38 min 56 s & 25 min 11 s & 31 min 54 s & 35 min 7 s & 42 min 5 s & 32 min 0 s & 36 min 51 s\\
\bottomrule
\end{tabular}
}
\end{table}

\textbf{Generalizing to online RL.}\quad
To evaluate the versatility of our approach beyond the offline setting, we integrate both the first-order action supervision and the continuous-time HJB value regularization into the standard SAC~\citep{haarnoja2018soft} algorithm, and evaluate on three MuJoCo benchmarks over 1M training steps. We keep SAC's original hyperparameters intact, and set $\lambda_{\dot{a}} = \lambda_Q = 0.1$ for Hopper-v2 and Walker2d-v2, and $\lambda_{\dot{a}} = 0.01, \lambda_Q = 0.1$ for HalfCheetah-v2.

As reported in Table~\ref{tab:online_rl}, incorporating our first-order supervision leads to noticeably faster convergence, which is particularly evident at 300k steps, alongside reduced performance variance across seeds. These results confirm that our lightweight, plug-and-play module generalizes effectively to the online RL paradigm, improving sample efficiency and training stability without modifying the base algorithm.

\begin{table}[t]
    \caption{\small Performance snapshots of SAC and SAC+Ours on online MuJoCo benchmarks over 1M training steps. Results are averaged over 5 random seeds, with standard deviations reported after the $\pm$ sign. \textbf{Bold} indicates the better result at each snapshot.}
    \label{tab:online_rl}
    \begin{center}
        \resizebox{\textwidth}{!}{
            \begin{tabular}{llccccc}
            \toprule
            Environment & Method & 100k & 300k & 500k & 700k & 1M \\
            \midrule
            Hopper-v2 & SAC & $1767.1{\scriptstyle \pm 1005.9}$ & $2399.1{\scriptstyle \pm 1391.8}$ & $3076.8{\scriptstyle \pm 405.9}$ & $\bm{3389.6{\scriptstyle \pm 162.2}}$ & $3530.4{\scriptstyle \pm 32.6}$ \\
            & SAC + Ours & $\bm{2375.8{\scriptstyle \pm 1036.4}}$ & $\bm{3273.9{\scriptstyle \pm 111.6}}$ & $\bm{3226.4{\scriptstyle \pm 101.1}}$ & $3359.2{\scriptstyle \pm 116.4}$ & $\bm{3531.8{\scriptstyle \pm 33.2}}$ \\
            \midrule
            HalfCheetah-v2 & SAC & $5659.2{\scriptstyle \pm 676.4}$ & $8564.9{\scriptstyle \pm 1699.7}$ & $9750.7{\scriptstyle \pm 1228.6}$ & $10805.1{\scriptstyle \pm 804.7}$ & $11557.4{\scriptstyle \pm 493.4}$ \\
            & SAC + Ours & $\bm{5790.9{\scriptstyle \pm 553.5}}$ & $\bm{8614.4{\scriptstyle \pm 1423.9}}$ & $\bm{9954.8{\scriptstyle \pm 1126.3}}$ & $\bm{10893.5{\scriptstyle \pm 612.1}}$ & $\bm{11858.2{\scriptstyle \pm 414.1}}$ \\
            \midrule
            Walker2d-v2 & SAC & $702.9{\scriptstyle \pm 343.5}$ & $2413.7{\scriptstyle \pm 1150.3}$ & $2989.8{\scriptstyle \pm 1820.9}$ & $4291.2{\scriptstyle \pm 950.5}$ & $4768.9{\scriptstyle \pm 393.1}$ \\
            & SAC + Ours & $\bm{869.9{\scriptstyle \pm 310.3}}$ & $\bm{3300.3{\scriptstyle \pm 879.6}}$ & $\bm{4141.4{\scriptstyle \pm 572.9}}$ & $\bm{4805.8{\scriptstyle \pm 184.5}}$ & $\bm{4991.1{\scriptstyle \pm 264.5}}$ \\
            \bottomrule
            \end{tabular}
        }
    \end{center}
\end{table}

\subsection{Additional Ablation Results}
\label{Additional Ablation Results}
To verify that the benefits of our scheme are not specific to a single
algorithmic instantiation, we conduct ablation studies on TD3+BC
(deterministic policy) and IQL (stochastic policy), evaluated on the
 D4RL \texttt{hopper-expert-v2} and \texttt{walker2d-expert-v2} tasks under the 10k dataset setting.
We sweep one of the two weights over $\{0, 0.5, 1\}$ while strictly setting
the other to zero, so that the contribution of each component can be
attributed independently. All results are averaged over 5 random seeds.
\begin{figure}[t]
    \centering
    % 第一个图
    \begin{minipage}{0.48\textwidth}
        \centering
        \includegraphics[width=\linewidth]{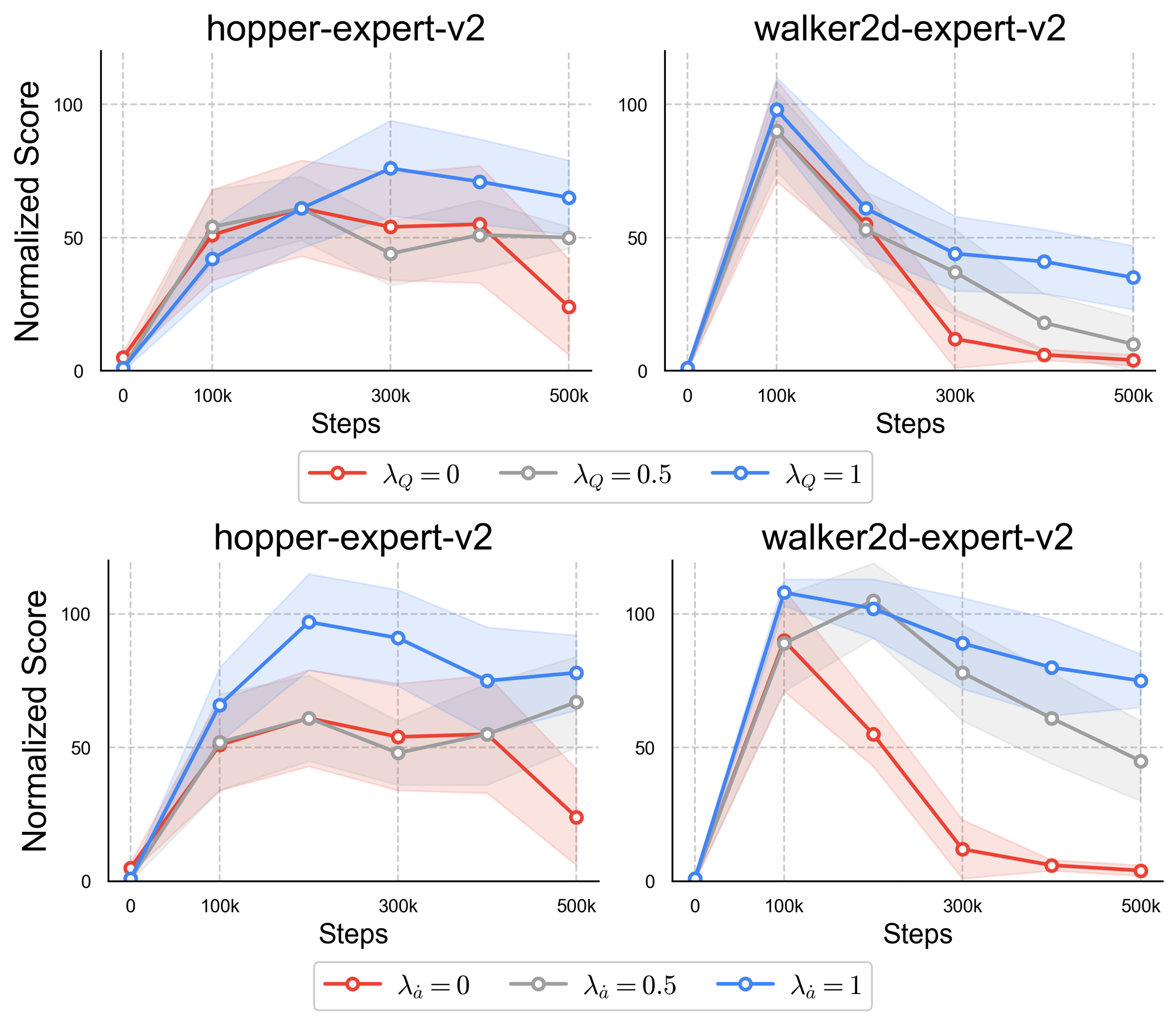}
        \caption{\small Ablation study on value regularization weight $\lambda_Q$ and $1^\text{st}$-order action supervision weight $\lambda_{\dot{a}}$ of TD3+BC. }
        \label{fig:td3bc_lambda_abla}
    \end{minipage}
    \hspace{0.05cm}
    % 第二个图
    \begin{minipage}{0.48\textwidth}
        \centering
        \includegraphics[width=\linewidth]{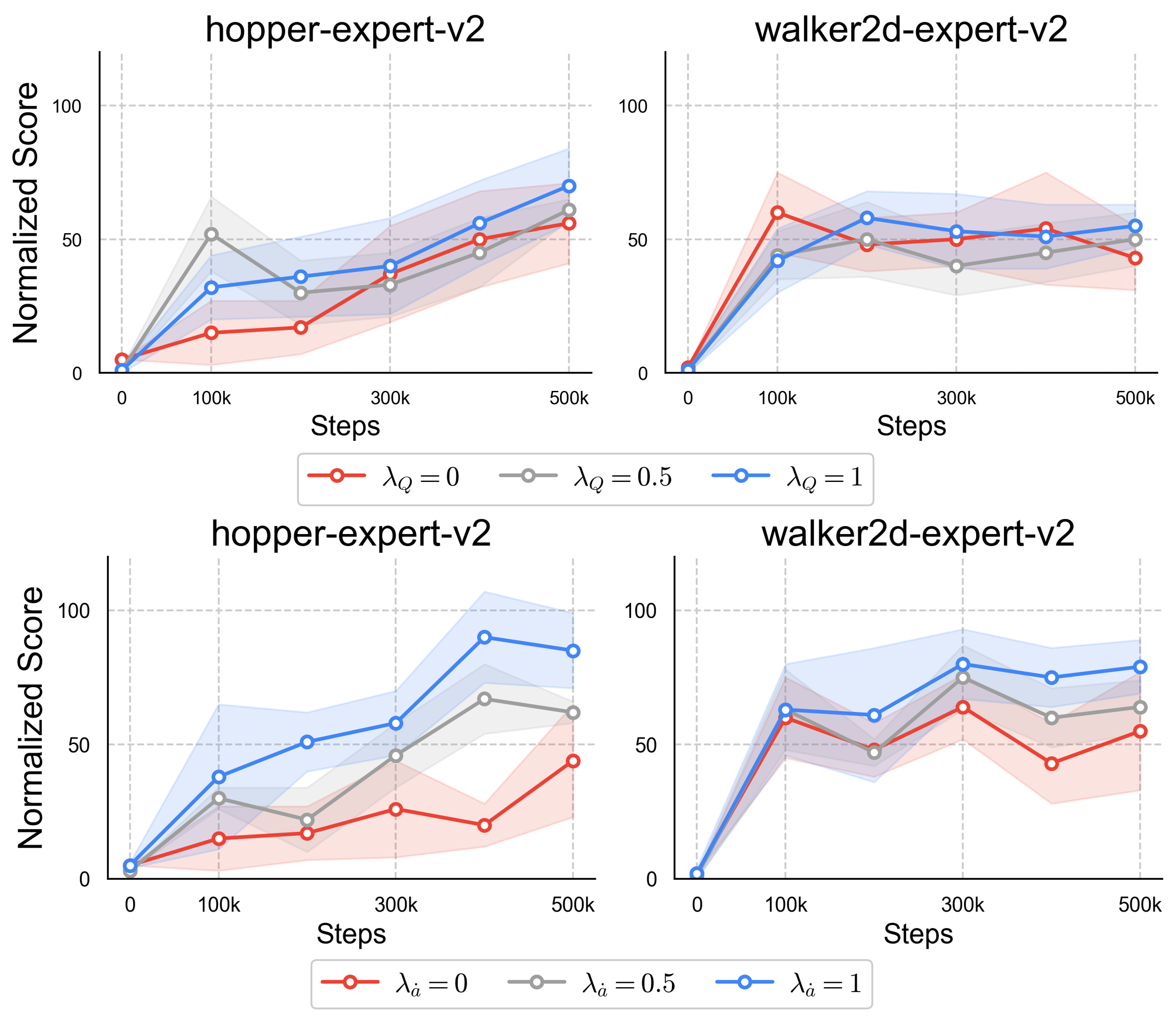}
        \caption{\small Ablation study on value regularization weight $\lambda_Q$ and $1^\text{st}$-order action supervision weight $\lambda_{\dot{a}}$ of IQL}
        \label{fig:iql_lambda_abla}
    \end{minipage}
\end{figure}

\begin{table}[t]
    \caption{\small Performance on OGBench environments. Results are reported as mean $\pm$ standard deviation. \textbf{Bold} indicates the highest mean in each environment.}
    \label{tab:ogbench}
    \begin{center}
        \resizebox{\textwidth}{!}{
            \begin{tabular}{lccccc}
                \toprule
                Environment (OGBENCH) & IFQL & +Ours & +AK ($\lambda_a=0.5$) & +AK ($\lambda_a=0.1$) & +AK ($\lambda_a=1$) \\
                \midrule
                humanoidmaze-medium-navigate-singletask-task1-v0
                & $69{\scriptstyle \pm 19}$ & $71{\scriptstyle \pm 16}$
                & $\bm{73.2{\scriptstyle \pm 7.0}}$ & $51.2{\scriptstyle \pm 32.4}$ & $67.6{\scriptstyle \pm 7.1}$ \\
                humanoidmaze-medium-navigate-singletask-task4-v0
                & $\bm{1{\scriptstyle \pm 1}}$ & $\bm{1{\scriptstyle \pm 1}}$
                & $0.0{\scriptstyle \pm 0.0}$ & $0.4{\scriptstyle \pm 0.8}$ & $0.0{\scriptstyle \pm 0.0}$ \\
                \midrule
                antsoccer-arena-navigate-singletask-task2-v0
                & $75{\scriptstyle \pm 3}$ & $\bm{83{\scriptstyle \pm 12}}$
                & $66.8{\scriptstyle \pm 15.3}$ & $40.4{\scriptstyle \pm 33.6}$ & $35.2{\scriptstyle \pm 20.4}$ \\
                antsoccer-arena-navigate-singletask-task4-v0
                & $16{\scriptstyle \pm 9}$ & $16{\scriptstyle \pm 8}$
                & $13.6{\scriptstyle \pm 9.6}$ & $\bm{20.4{\scriptstyle \pm 20.8}}$ & $6.4{\scriptstyle \pm 7.3}$ \\
                \midrule
                cube-single-play-singletask-task2-v0
                & $73{\scriptstyle \pm 3}$ & $\bm{100{\scriptstyle \pm 1}}$
                & $90.0{\scriptstyle \pm 7.0}$ & $27.2{\scriptstyle \pm 33.5}$ & $94.0{\scriptstyle \pm 2.8}$ \\
                cube-single-play-singletask-task4-v0
                & $79{\scriptstyle \pm 6}$ & $88{\scriptstyle \pm 16}$
                & $\bm{93.6{\scriptstyle \pm 2.3}}$ & $26.8{\scriptstyle \pm 34.2}$ & $92.0{\scriptstyle \pm 4.0}$ \\
                \midrule
                cube-double-play-singletask-task1-v0
                & $35{\scriptstyle \pm 9}$ & $\bm{76{\scriptstyle \pm 8}}$
                & $23.6{\scriptstyle \pm 7.5}$ & $29.6{\scriptstyle \pm 5.6}$ & $25.6{\scriptstyle \pm 10.8}$ \\
                cube-double-play-singletask-task4-v0
                & $1{\scriptstyle \pm 1}$ & $\bm{12{\scriptstyle \pm 10}}$
                & $1.2{\scriptstyle \pm 1.0}$ & $1.2{\scriptstyle \pm 1.0}$ & $1.2{\scriptstyle \pm 1.0}$ \\
                \midrule
                puzzle-4x4-play-singletask-task2-v0
                & $4{\scriptstyle \pm 4}$ & $\bm{12{\scriptstyle \pm 9}}$
                & $2.8{\scriptstyle \pm 5.6}$ & $0.0{\scriptstyle \pm 0.0}$ & $8.4{\scriptstyle \pm 4.3}$ \\
                puzzle-4x4-play-singletask-task3-v0
                & $50{\scriptstyle \pm 14}$ & $\bm{84{\scriptstyle \pm 16}}$
                & $5.6{\scriptstyle \pm 10.2}$ & $0.0{\scriptstyle \pm 0.0}$ & $37.2{\scriptstyle \pm 7.8}$ \\
                \bottomrule
            \end{tabular}
        }
    \end{center}
\end{table}
\begin{table}[t]
    \caption{\small Computational cost and GPU memory usage of Ours and AK. Additional overhead is measured relative to Ours.}
    \label{tab:computational_overhead}
    \begin{center}
        \resizebox{\textwidth}{!}{
            \begin{tabular}{lccc}
                \toprule
                Metric & Ours & AK & Additional overhead \\
                \midrule
                Computational cost (per update)
                & 2.498 GFLOPs & 4.978 GFLOPs
                & \textbf{+2.480 GFLOPs (+99.3\%)} \\
                GPU memory
                & 1.141 GiB & 1.266 GiB
                & \textbf{+0.125 GiB (+11.0\%)} \\
                \bottomrule
            \end{tabular}
        }
    \end{center}
\end{table}

As shown in Figures~\ref{fig:td3bc_lambda_abla} and~\ref{fig:iql_lambda_abla}, both algorithms exhibit the same trend:
increasing either the first-order action supervision weight $\lambda_{\dot{a}}$
or the value regularization weight $\lambda_Q$ from 0 to 1 yields steady
performance gains. This
confirms that the policy-side and value-side first-order regularizers
are robustly beneficial across both deterministic and stochastic policy
classes.

To evaluate the action-dependent term omitted from
Eq.~(\ref{eq:flow_policy}), we compare Ours with an ablation denoted as AK. AK adds
$\frac{\partial v(k,s,a^k)}{\partial a^k}\dot a^k$
from Eq.~(\ref{flow_da}) to the state-directional term, 
uses the same objective as Eq.~(\ref{eq:flow_policy}),
replacing $\mathcal{L}_{\dot a}^{\mathrm{flow}}$ with
\begin{equation}
\label{eq:ak_loss}
\mathcal{L}_{\dot a}^{\mathrm{AK}}
=
\left\|
\frac{\partial v(k,s,a^k)}{\partial s}\dot s
+
\frac{\partial v(k,s,a^k)}{\partial a^k}\dot a^k
-
\dot a^1
\right\|.
\end{equation}
Here $\dot a^k=k\dot a^1$ on the linear interpolation path.

Overall, Table~\ref{tab:ogbench} shows that Ours performs more
consistently across OGBench environments. It matches or improves
upon IFQL on every task and generally outperforms AK across the
tested values of $\lambda_a$. Although AK benefits a few tasks,
it produces substantial drops on others, particularly
\texttt{cube-double} and \texttt{puzzle-4x4}. Thus, including
the action-dependent term does not yield a reliable performance
benefit.

Table~\ref{tab:computational_overhead} further shows that AK
nearly doubles computation per update (+99.3\%) and increases
GPU memory usage by 11.0\% relative to Ours. Together, these
results support our empirical choice of the state-directional
approximation.

\section{Derivation of the Higher-Order Supervision}
This section provides the derivations of the higher-order policy as demonstrated in section~\ref{sec:policy_class}. Specifically, Appendix~\ref{app:sto_policy_derivation} derives the action differential for stochastic Gaussian policies, while Appendix~\ref{app:flow_policy_derivation} derives the time derivative of the generated action for flow-based policies.

\subsection{First-Order Stochastic Gaussian Policies Derivation in Eq. (\ref{stochastic_da})}
\label{app:sto_policy_derivation}

Consider the Gaussian policy with diagonal covariance
\begin{equation}
\pi(a|s)
=
\mathcal{N}\!\left(
a;\mu(s),\operatorname{diag}(\sigma^2(s))
\right),
\end{equation}
where $a,\mu(s),\sigma(s)\in\mathbb{R}^d$, and $\sigma^2(s)$ denotes the
element-wise variance. We use $\sigma(s)$ to denote the element-wise standard
deviation, i.e., the element-wise square root of the variance output.

We represent the stochastic component of the sampled action by
\begin{equation}
\label{sto_at}
a
=
\mu(s)+\sigma(s)\odot X_t,
\end{equation}
where $\odot$ denotes element-wise multiplication, and $X_t\in\mathbb{R}^d$ is
assumed to follow the stationary vector OU process \citep{uhlenbeck1930theory}
\begin{equation}
\label{OU}
\mathrm{d}X_t
=
-\lambda X_t\,\mathrm{d}t
+
\sqrt{2\lambda}\,\mathrm{d}W_t,
\qquad
X_0\sim\mathcal{N}(0,I).
\end{equation}
Here $W_t$ is a $d$-dimensional standard Wiener process. By stationarity,
$X_t\sim\mathcal{N}(0,I)$ for all $t$. Under the diagonal covariance assumption,
the stochastic perturbation is applied independently across action dimensions.
Therefore,
\[
a=\mu(s)+\sigma(s)\odot X_t,
\qquad
X_t\sim\mathcal{N}(0,I)
\quad\Longrightarrow\quad
a\sim
\mathcal{N}\!\left(
\mu(s),\operatorname{diag}(\sigma^2(s))
\right).
\]

We condition on $(s,\dot{s})$ and treat $s=s(t)$ i.e.,
$\mathrm{d}s = \dot{s}\,\mathrm{d}t$
as a differentiable trajectory.
For notational simplicity, we omit the explicit dependence of $s$ on $t$.
Define
\[
F(t,X_t)
=
\mu(s)+\sigma(s)\odot X_t,
\qquad
a=F(t,X_t).
\]
For each action dimension $i$, define
\begin{equation}
\label{F_i}
a_i=F_i(t,X_t)
=
\mu_i(s)+\sigma_i(s)X_{t,i}.
\end{equation}
Since $X_t$ is a vector Itô process, applying the multi-dimensional Itô formula
\citep{oksendal2013stochastic} to $F_i(t,X_t)$ and substituting the OU
dynamics of $X_t$ gives,
\begin{align}
\mathrm{d}a_i
&=
\Bigg[
\frac{\partial F_i}{\partial t}
+
\nabla_X F_i(t,X_t)^\top
\left(
-\lambda X_t
\right)
\nonumber
+
\frac{1}{2}
\operatorname{Tr}
\left(
(\sqrt{2\lambda}I)^\top
H_XF_i(t,X_t)
(\sqrt{2\lambda}I)
\right)
\Bigg]\mathrm{d}t
\nonumber\\
&\qquad
+
\nabla_XF_i(t,X_t)^\top
\sqrt{2\lambda}I\,\mathrm{d}W_t
\nonumber\\
&=
\left[
\frac{\partial F_i}{\partial t}
-
\lambda
\nabla_XF_i(t,X_t)^\top X_t
+
\lambda
\operatorname{Tr}
\left(
H_XF_i(t,X_t)
\right)
\right]\mathrm{d}t
\nonumber\\
&\qquad
+
\sqrt{2\lambda}\,
\nabla_XF_i(t,X_t)^\top
\mathrm{d}W_t .
\end{align}
From Eq.~(\ref{F_i}), $F_i$ is linear with respect to $X_t$.
\begin{equation}
    H_XF_i(t,X_t)=0.
\end{equation}
Moreover,
\begin{equation}
\frac{\partial F_i}{\partial t}
=
\dot{\mu}_i(s)+\dot{\sigma}_i(s)X_{t,i},
\qquad
\nabla_XF_i(t,X_t)
=
\sigma_i(s)e_i,
\end{equation}
where $e_i$ is the $i$-th standard basis vector. Substituting these identities
into the previous expression gives
\begin{equation}
\mathrm{d}a_i
=
\left[
\dot{\mu}_i(s)
+
\left(
\dot{\sigma}_i(s)-\lambda\sigma_i(s)
\right)X_{t,i}
\right]\mathrm{d}t
+
\sqrt{2\lambda}\,\sigma_i(s)\mathrm{d}W_{t,i}.
\end{equation}
Stacking all action dimensions yields
\begin{equation}
\mathrm{d}a
=
\left[
\dot{\mu}(s)
+
\left(
\dot{\sigma}(s)-\lambda\sigma(s)
\right)\odot X_t
\right]\mathrm{d}t
+
\sqrt{2\lambda}\,\sigma(s)\odot \mathrm{d}W_t .
\end{equation}

Using Eq.~(\ref{sto_at}), we obtain
\begin{equation}
X_t
=
\left(a-\mu(s)\right)\oslash\sigma(s),
\end{equation}
where $\oslash$ denotes element-wise division. Hence
\begin{equation}
\label{stochastic_da_app}
\mathrm{d}a
=
\left[
\dot{\mu}(s)
+
\left(
\dot{\sigma}(s)-\lambda\sigma(s)
\right)\odot
\left(
\left(a-\mu(s)\right)\oslash\sigma(s)
\right)
\right]\mathrm{d}t
+
\sqrt{2\lambda}\,\sigma(s)\odot \mathrm{d}W_t .
\end{equation}

Since the infinitesimal Wiener increment satisfies
$\mathrm{d}W_t \sim \mathcal{N}(0,I\mathrm{d}t)$,
Eq.~(\ref{stochastic_da_app}) implies that the infinitesimal action increment
conditioned on $(s,a,\dot{s})$ follows
\begin{equation}
\mathrm{d}a
\sim
\mathcal{N}\!\left(
\left[
\dot{\mu}(s)
+
\left(
\dot{\sigma}(s)-\lambda\sigma(s)
\right)
\odot
\left(
\left(a-\mu(s)\right)\oslash\sigma(s)
\right)
\right]\mathrm{d}t,\,
2\lambda\operatorname{diag}(\sigma^2(s))\mathrm{d}t
\right).
\end{equation}

As mentioned in Section~\ref{notations}, in our discrete-time implementation, we use the finite-difference approximation
$\dot{a}\approx (a'-a)/\Delta t$ with $\Delta t=1$.
Under this unit-step finite-difference approximation, the induced first-order policy distribution over the action derivative is
\begin{equation}
\dot{\pi}(\dot a|s,a,\dot{s})
=
\mathcal{N}\!\left(
\dot a;\,
\dot{\mu}(s)
+
\left(
\dot{\sigma}(s)-\lambda\sigma(s)
\right)
\odot
\left(
\left(a-\mu(s)\right)\oslash\sigma(s)
\right),
\,
2\lambda\operatorname{diag}(\sigma^2(s))
\right).
\end{equation}

\subsection{First-Order Flow Policies Derivation in Eq. (\ref{flow_da})}
\label{app:flow_policy_derivation}
% For an ideal flow policy, 
Given the state $s$ and initial noise sample $a^0$, the generated action is obtained by solving
\begin{equation}
\frac{d a^k}{d k}
=
v(k,s,a^k),
\qquad
a^{k=0}=a^0,
\end{equation}
and taking $a^1=\pi(s)$,
\begin{equation}
\label{app:a1}
a^1
=
a^0+\int_0^1 v(k,s,a^k)\,dk .
\end{equation}

Assuming sufficient smoothness, differentiating $a^1$ with respect to physical time $t$ and applying the Leibniz integral rule gives
\begin{align}
\dot{a}^1
&=
\dot a^0 +
\frac{d}{dt}
\int_0^1 v(k,s,a^k)\,dk \notag \\
&=
\dot a^0 +
\int_0^1
\frac{d}{dt}
v(k,s,a^k)\,dk .
\label{dot_a_1}
\end{align}
Since $v(k,s,a^k)$ depends on $t$ through both $s$ and $a^k$, the chain rule gives
\begin{equation}
\label{chain}
\frac{d}{dt}v(k,s,a^k)
=
\frac{\partial v(k,s,a^k)}{\partial s}\dot{s}
+
\frac{\partial v(k,s,a^k)}{\partial a^k}\dot a^k.
\end{equation}
Conditioning on the sampled noise $a^0$ when taking the
physical-time derivative gives $\dot a^0=0$. Combining
Eqs.~(\ref{dot_a_1}) and~(\ref{chain}) yields
\begin{equation}
\label{flow_da_app}
\dot a^1
=
\int_0^1 \dot v(k,s,a^k)\,dk
=
\int_0^1
\left(
\frac{\partial v(k,s,a^k)}{\partial s}\dot s
+
\frac{\partial v(k,s,a^k)}{\partial a^k}\dot a^k
\right)dk.
\end{equation}

We consider the linear interpolation path used in flow matching,
\begin{equation}
a^k=(1-k)a^0+ka^1,\qquad k\in[0,1].
\end{equation}
Taking its physical-time derivative while holding $a^0$ fixed gives
\begin{equation}
\dot a^k=(1-k)\dot a^0+k\dot a^1=k\dot a^1.
\end{equation}
Equation~(\ref{flow_da_app}) therefore takes the following form
along this path,
\begin{equation}
\dot a^1
=
\int_0^1
\left(
\frac{\partial v(k,s,a^k)}{\partial s}\dot s
+
k\frac{\partial v(k,s,a^k)}{\partial a^k}\dot a^1
\right)dk.
\end{equation}
As in the zeroth-order action generation process in
Eq.~(\ref{app:a1}), the first-order action is obtained by
integrating a vector field over flow time $k$. For the linear
path, the sample-wise flow-matching target is
$\frac{d a^k}{dk}=a^1-a^0$. Taking its physical-time
derivative while holding $a^0$ fixed gives
$\frac{d}{dt}(a^1-a^0)=\dot a^1$. Thus,
$\dot a^1=a'-a^1$ provides a pointwise target for
$\dot v(k,s,a^k)$. The shared integral structure motivates
supervising this first-order field in a flow-matching manner.

Based on the computational cost and performance results in
Appendix~\ref{Additional Ablation Results}, we approximate
$\dot v(k,s,a^k)$ by
$\frac{\partial v(k,s,a^k)}{\partial s}\dot s$.
Accordingly, Eq.~(\ref{eq:flow_policy})
supervises this approximation at sampled values of $k$;
the integral describes action generation rather than the
training loss.

\section{Theoretical Analysis}
\label{app:theory}
We first introduce the formal notation, assumptions, and auxiliary lemmas used in the analysis. Then we provide the proofs of Theorems~\ref{theorem2} and~\ref{theorem1}.
% We follow the notation introduced in Section~\ref{theoretical analysis}. 
% This appendix briefly recaps the notation used in the analysis, states the regularity assumptions, and provides the proofs of Theorems~\ref{theorem2} and~\ref{theorem1}.

\subsection{Notation}
\label{app:notation}
We briefly recap the formal notation used in Section~\ref{theoretical analysis}, with the time dependence made explicit for the analysis below.
Let $\pi$ and $\pi^*$ denote the learned and expert policies, and let $s(t)$ and $s^*(t)$ denote their corresponding trajectories. 
The closed-loop dynamics are $\dot{s}(t)=f(s(t),\pi(s(t)))$ and $\dot{s}^*(t)=f(s^*(t),\pi^*(s^*(t)))$. 
The policy error and state deviation are denoted by $e(s)=\pi(s)-\pi^*(s)$ and $\delta(t)=s(t)-s^*(t)$, respectively. 
For each time $t$, define $T(t)=\{s^*(t)+\tau(s(t)-s^*(t)):\tau\in[0,1]\}$.

\subsection{Assumptions}
\label{assumptions}
\begin{assumption}[Domain and regularity]
\label{ass:domain}
There exists a convex compact set $\Omega \subset \mathbb{R}^d$ such that $s(t),s^*(t)\in\Omega$ for all $t$ under consideration. 
The policies $\pi$ and $\pi^*$ are continuously differentiable on $\Omega$, and the policy error $e$ admits the Jacobian $\nabla_s e$ on $\Omega$.
\end{assumption}

\begin{assumption}[Lipschitz dynamics]\label{ass:lipschitz_dynamics}
For any $s_1, s_2 \in \Omega$ and any $a \in \mathbb{R}^m$, the dynamics $f$ is Lipschitz continuous in the state variable with constant $L_s$, i.e.,
\[
\|f(s_1, a) - f(s_2, a)\| \le L_s \|s_1 - s_2\|.
\]
For any $s \in \Omega$ and any $a_1, a_2 \in \mathbb{R}^m$, the dynamics $f$ is Lipschitz continuous in the action variable with constant $L_a$, i.e.,
\[
\|f(s, a_1) - f(s, a_2)\| \le L_a \|a_1 - a_2\|.
\]
\end{assumption}

\begin{assumption}[Lipschitz expert policy]\label{ass:lipschitz_expert}
The expert policy $\pi^*$ is Lipschitz continuous with constant $L_{\pi^*}$ on $\Omega$, i.e., for any $s_1, s_2 \in \Omega$,
\[
\|\pi^*(s_1) - \pi^*(s_2)\| \le L_{\pi^*} \|s_1 - s_2\|.
\]
\end{assumption}

\begin{assumption}[Bounded $\nabla_s e$]
\label{ass:bounded_jacobian}
For all $t$ under consideration, the Jacobian of the policy error is bounded on $T(t)$ by $L_e$. That is, for any $z\in T(t)$,
\[
\left\|\nabla_s e(z)\right\|\le L_e.
\]
\end{assumption}

\subsection{Auxiliary Lemmas}
\begin{lemma}
\label{lem:policy_error_bound}
Under Assumption~\ref{ass:domain}, for any \(s_1,s_2\in\Omega\), letting
\(\Delta=s_1-s_2\) and \(\gamma(\tau)=s_2+\tau\Delta\) for \(\tau\in[0,1]\), we have
\[
e(s_1)-e(s_2)
=
\int_0^1 
\nabla_s e(\gamma(\tau))
\,\Delta\,d\tau.
\]
Consequently, for any time \(t\), along the segment \(T(t)\), we have
\[
\|e(s(t))\|
\le
\|e(s^*(t))\|
+
\sup_{z \in T(t)}
\left\|
\nabla_s e(z)
\right\|
\, \|s(t)-s^*(t)\|.
\]
\end{lemma}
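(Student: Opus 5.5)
The plan is to apply the fundamental theorem of calculus along the straight segment joining $s_2$ to $s_1$, and then to specialize to the segment $T(t)$.

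\textbf{Integral identity.} Since $\Omega$ is convex, the path $\gamma(\tau)=s_2+\tau\Delta$ stays in $\Omega$ for all $\tau\in[0,1]$. By Assumption~\ref{ass:domain}, $\pi$ and $\pi^*$ are continuously differentiable on $\Omega$. Hence $e=\pi-\pi^*$ is $C^1$ there, with Jacobian $\nabla_s e=\nabla_s\pi-\nabla_s\pi^*$.

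Consider the scalar-parameter map $g(\tau)=e(\gamma(\tau))$. It is continuously differentiable on $[0,1]$, and the chain rule gives $g'(\tau)=\nabla_s e(\gamma(\tau))\,\Delta$. Apply the fundamental theorem of calculus componentwise, which is legitimate because $g'$ is continuous:
\[
e(s_1)-e(s_2)=g(1)-g(0)=\int_0^1\nabla_s e(\gamma(\tau))\,\Delta\,d\tau .
\]
One mild obstacle: if $\Omega$ has empty interior or $\gamma$ touches the boundary, differentiability "on $\Omega$" must be read as one-sided or extension-based. I will simply interpret Assumption~\ref{ass:domain} as providing a $C^1$ extension of $e$ to a neighborhood of $\Omega$, so the chain rule holds up to the endpoints.

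\textbf{Specializing to $T(t)$.} Take $s_1=s(t)$ and $s_2=s^*(t)$. Both lie in $\Omega$ by Assumption~\ref{ass:domain}, and the image of $\gamma$ is exactly $T(t)$. The triangle inequality gives $\|e(s(t))\|\le\|e(s^*(t))\|+\|e(s(t))-e(s^*(t))\|$. Bounding the norm of the integral by the integral of the norm, and using the operator-norm inequality $\|\nabla_s e(z)\Delta\|\le\|\nabla_s e(z)\|\,\|\Delta\|$, we get
\[
\Big\|\int_0^1\nabla_s e(\gamma(\tau))\Delta\,d\tau\Big\|\le\sup_{z\in T(t)}\|\nabla_s e(z)\|\,\|s(t)-s^*(t)\| .
\]
This is the second claim.

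\textbf{Finiteness of the supremum.} $T(t)$ is compact and $\nabla_s e$ is continuous, so the supremum is finite. The bound is therefore meaningful even without Assumption~\ref{ass:bounded_jacobian}, which only later supplies the explicit constant $L_e$.

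None of these steps is deep. The only point needing care is the boundary-regularity interpretation noted in the first step.
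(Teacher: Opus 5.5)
Your proposal is correct and follows essentially the same route as the paper: set $g(\tau)=e(\gamma(\tau))$, use convexity of $\Omega$ and the chain rule to get $g'(\tau)=\nabla_s e(\gamma(\tau))\Delta$, apply the fundamental theorem of calculus, then take $s_1=s(t)$, $s_2=s^*(t)$ and bound the integral with the triangle and operator-norm inequalities. Your extra remarks on boundary regularity (via a $C^1$ extension) and on finiteness of the supremum over the compact segment $T(t)$ are sound refinements that the paper leaves implicit.
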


\begin{proof}
Fix \(s_1,s_2\in\Omega\), and define
\(\Delta=s_1-s_2\) and \(\gamma(\tau)=s_2+\tau\Delta\).
By Assumption~\ref{ass:domain}, \(\gamma(\tau)\in\Omega\) for all \(\tau\in[0,1]\).
Let \(g(\tau)=e(\gamma(\tau))\). Since \(e\) is continuously differentiable by
Assumption~\ref{ass:domain}, the chain rule gives
\[
g'(\tau)
=
\nabla_s e(\gamma(\tau))
\,\Delta.
\]
By the fundamental theorem of calculus,
\[
e(s_1)-e(s_2)=g(1)-g(0)=\int_0^1 g'(\tau)\,d\tau
=
\int_0^1 
\nabla_s e(\gamma(\tau))
\,\Delta\,d\tau.
\]

Now take \(s_1=s(t)\) and \(s_2=s^*(t)\). By the definitions of \(\delta(t)\) and \(T(t)\),
we have \(\Delta=\delta(t)\) and \(\gamma(\tau)\in T(t)\). Therefore,
\[
e(s(t))-e(s^*(t))
=
\int_0^1 
\nabla_s e(\gamma(\tau))
\,\delta(t)\,d\tau.
\]
Taking norms yields
\[
\begin{aligned}
\|e(s(t))\|
&\le
\|e(s^*(t))\|
+
\int_0^1 
\left\|
\nabla_s e(\gamma(\tau))
\,\delta(t)
\right\|\,d\tau \\
&\le
\|e(s^*(t))\|
+
\int_0^1 
\left\|
\nabla_s e(\gamma(\tau))
\right\|
\,\|\delta(t)\|\,d\tau \\
&\le
\|e(s^*(t))\|
+
\sup_{z\in T(t)}
\left\|
\nabla_s e(z)
\right\|
\,\|\delta(t)\|.
\end{aligned}
\]
Using \(\delta(t)=s(t)-s^*(t)\), we obtain
\[
\|e(s(t))\|
\le
\|e(s^*(t))\|
+
\sup_{z\in T(t)}
\left\|
\nabla_s e(z)
\right\|
\,\|s(t)-s^*(t)\|.
\]
This completes the proof.
\end{proof}

\begin{lemma}[Gronwall's lemma]\label{lem:gronwall} \citep{emmrich_1999}
Let $T\in(0,\infty)$. Assume $u$ is absolutely continuous on $[0,T)$ and satisfies the differential inequality
\[
\dot u(t) \le a(t)\,u(t) + b(t), \qquad \text{for a.e. } t\in(0,T),
\]
for some $a,b\in L^1(0,T)$. Then $u$ satisfies the pointwise estimate
\[
u(t) \le e^{A(t)}u(0) + \int_0^t b(s)\,e^{A(t)-A(s)}\,ds,
\qquad \forall\, t\in[0,T),
\]
where we define
\[
A(t) = \int_0^t a(s)\,ds.
\]
This is the differential form of Gronwall's inequality, see Proposition~2.2 of \citep{emmrich_1999} for a proof.
\end{lemma}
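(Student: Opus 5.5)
The plan is the standard integrating-factor argument. Fix $t\in[0,T)$ and work on the compact interval $[0,t]$. Since $a\in L^1(0,T)$, the function $A(r)=\int_0^r a(s)\,ds$ is absolutely continuous on $[0,t]$, with $A'(r)=a(r)$ for a.e.\ $r$. The integrating factor is then $w(r)=e^{-A(r)}u(r)$, and the whole argument reduces to showing that $w$ is absolutely continuous with $w'\le e^{-A}b$ almost everywhere.

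\textbf{Regularity of $w$.} The map $r\mapsto e^{-A(r)}$ is the composition of the Lipschitz function $x\mapsto e^{-x}$, restricted to the bounded range of $A$ on $[0,t]$, with the absolutely continuous function $A$. Hence it is absolutely continuous on $[0,t]$ and
\[
\frac{d}{dr}e^{-A(r)}=-a(r)e^{-A(r)}\quad\text{for a.e. } r.
\]
The function $u$ is absolutely continuous on $[0,t]$ by hypothesis. A product of two absolutely continuous functions on a compact interval is absolutely continuous, since both factors are bounded there. So $w$ is absolutely continuous on $[0,t]$, and the product rule holds almost everywhere:
\[
w'(r)=e^{-A(r)}\big(\dot u(r)-a(r)u(r)\big)\quad\text{for a.e. } r\in(0,t).
\]

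\textbf{Differential inequality for $w$.} Insert the hypothesis $\dot u\le au+b$ and use that $e^{-A(r)}>0$. This gives
\[
w'(r)\le e^{-A(r)}b(r)\quad\text{for a.e. } r\in(0,t).
\]
The right-hand side is integrable on $[0,t]$, because $e^{-A}$ is bounded there and $b\in L^1$.

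\textbf{Integration and conclusion.} Since $w$ is absolutely continuous, the fundamental theorem of calculus for absolutely continuous functions applies:
\[
w(t)-w(0)=\int_0^t w'(r)\,dr\le\int_0^t e^{-A(r)}b(r)\,dr.
\]
Because $A(0)=0$, we have $w(0)=u(0)$. Multiplying both sides by $e^{A(t)}>0$ then gives
\[
u(t)\le e^{A(t)}u(0)+\int_0^t b(s)e^{A(t)-A(s)}\,ds,
\]
which is the claim. As $t\in[0,T)$ was arbitrary, the estimate holds on all of $[0,T)$.

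The only step that needs care is the regularity bookkeeping in the first step. Since $a$ is merely $L^1$, we must verify that $e^{-A}$ is absolutely continuous and that the product rule holds almost everywhere for the product of absolutely continuous functions. Both facts are standard once we restrict to $[0,t]$ with $t<T$, which is why the argument is carried out on compact subintervals rather than on $[0,T)$ directly. No hypothesis on the sign of $a$ or $b$ is needed, because the only inequality used is multiplication by the strictly positive factors $e^{\mp A}$.
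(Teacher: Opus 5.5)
Your proof is correct: the integrating factor $w=e^{-A}u$ on each compact $[0,t]$, together with the absolute-continuity bookkeeping and the fundamental theorem of calculus for absolutely continuous functions, gives exactly the stated estimate. The paper does not prove this lemma itself and only cites Emmrich's Proposition~2.2, which is the standard differential form of Gronwall's inequality and is presumably proved by this same integrating-factor argument, so your route is the expected one.
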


\subsection{Proof of Theorem~\ref{theorem2}}
\label{app:theorem2_proof}
\begin{theoremonefull}
\label{theorem2_full}
Under Assumption~\ref{ass:domain}, the expert-trajectory policy error $e(s^*(t))$ satisfies:
\[
\|e(s^*(t))\|
\le
\|e(s^*(0))\|
+
t \,
\max_{0 \le \tau \le t}
\bigg\|
\nabla_s\pi(s^*(\tau))\dot{s}^*(\tau)
-
\dot{\pi}^*(s^*(\tau))
\bigg\|.
\]
\end{theoremonefull}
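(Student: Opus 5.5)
The plan is to apply the fundamental theorem of calculus to the scalar-time function $g(t)=e(s^*(t))=\pi(s^*(t))-\pi^*(s^*(t))$ along the expert trajectory, and then take norms. By Assumption~\ref{ass:domain}, $\pi$ and $\pi^*$ are continuously differentiable on $\Omega$, and $s^*(t)\in\Omega$. The expert trajectory solves $\dot s^*(t)=f(s^*(t),\pi^*(s^*(t)))$, which I take to be a $C^1$ (at least absolutely continuous) curve. Hence $g$ is differentiable in $t$, and the chain rule, exactly as in Eq.~(\ref{eq:policy_derivative}), gives
\[
\dot g(\tau)=\nabla_s\pi(s^*(\tau))\,\dot s^*(\tau)-\nabla_s\pi^*(s^*(\tau))\,\dot s^*(\tau)=\nabla_s\pi(s^*(\tau))\,\dot s^*(\tau)-\dot\pi^*(s^*(\tau)).
\]
Here $\dot\pi^*(s^*(\tau))$ is read as the time derivative of the expert action along its own trajectory, matching the notation in the statement.

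Next I integrate from $0$ to $t$ to write $e(s^*(t))=e(s^*(0))+\int_0^t \dot g(\tau)\,d\tau$. The triangle inequality then gives
\[
\|e(s^*(t))\|\le\|e(s^*(0))\|+\int_0^t\|\dot g(\tau)\|\,d\tau.
\]
Finally I bound the integrand by its maximum over $[0,t]$. That maximum exists because $\dot g$ is continuous on the compact interval $[0,t]$: $\nabla_s\pi$ is continuous on $\Omega$, and $\dot s^*$ is continuous. This yields the factor $t\max_{0\le\tau\le t}\|\cdot\|$ and completes the argument. An alternative route uses the differential form of Gronwall's inequality (Lemma~\ref{lem:gronwall}) with $a\equiv0$, $u(t)=\|e(s^*(t))\|$ and $b=\|\dot g\|$. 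That route needs $u$ to be absolutely continuous with $\dot u\le\|\dot g\|$ a.e., which holds because $u$ is a composition of a norm with a $C^1$ function. The direct integration is cleaner, so I will use it.

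The main obstacle is justifying regularity rather than any estimate. Specifically, I must ensure that $s^*$ is differentiable, so that the chain rule applies, and that the maximum is attained. If $f$ is only assumed continuous, I would instead work with absolute continuity of $s^*$ and replace the max by an essential supremum. Since Theorem~\ref{theorem2} assumes only Assumption~\ref{ass:domain}, I will state explicitly that $s^*$ is a $C^1$ solution of the expert closed-loop ODE. This is implicit in writing $\dot s^*$ in the statement, and with it the maximum is well defined.
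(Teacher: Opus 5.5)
Your proposal is correct and matches the paper's proof step for step: the chain rule for $\frac{d}{dt}e(s^*(t))$, integration from $0$ to $t$, the triangle inequality, and bounding the integral by $t$ times the maximum of the integrand. The paper simply cites Assumption~\ref{ass:domain} and the chain rule, so your explicit regularity caveat (that $s^*$ is $C^1$, so the chain rule applies and the maximum is attained) is a small but genuine improvement.
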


\begin{proof}
Recall the definition of the expert-trajectory policy error:
\[
e(s^*(t)) = \pi(s^*(t)) - \pi^*(s^*(t)).
\]
Define the expert action derivative along the expert trajectory as
\[
\dot{\pi}^*(s^*(t))
=
\frac{d}{dt} \pi^*(s^*(t))
=
\nabla_s \pi^*(s^*(t))\,\dot{s}^*(t).
\]
By Assumption~\ref{ass:domain} and the chain rule, the time derivative of the expert-trajectory policy error is
\[
\frac{d}{dt}e(s^*(t))
=
\nabla_s \pi (s^*(t))\,\dot{s}^*(t)
-
\dot \pi^*(s^*(t)).
\]

Integrating over time yields:
\[
e(s^*(t))
=
e(s^*(0))
+
\int_0^t
\bigg(
\nabla_s \pi (s^*(\tau))\,\dot{s}^*(\tau)
-
\dot{\pi}^*(s^*(\tau))
\bigg)
\, d\tau .
\]
Taking norms:
\[
\|e(s^*(t))\|
\le
\|e(s^*(0))\|
+
\int_0^t
\bigg\|
\nabla_s \pi (s^*(\tau))\,\dot{s}^*(\tau)
-
\dot{\pi}^*(s^*(\tau))
\bigg\|
\, d\tau .
\]

Therefore,
\[
\|e(s^*(t))\|
\le
\|e(s^*(0))\|
+
t \,
\max_{0 \le \tau \le t}
\bigg\|
\nabla_s \pi (s^*(\tau))\,\dot{s}^*(\tau)
-
\dot{\pi}^*(s^*(\tau))
\bigg\|.
\]
This completes the proof.
\end{proof}

\subsection{Proof of Theorem~\ref{theorem1}}
\label{app:theorem1_proof}

\begin{theoremtwofull}
\label{theorem2_full}
Under Assumptions~\ref{ass:domain}--\ref{ass:bounded_jacobian}, for $\delta(0)=0$, the state deviation $\delta(t)=s(t)-s^*(t)$ satisfies
\[
\|\delta(t)\|
\le
\frac{L_a}{\beta}\big(e^{\beta t}-1\big)
\max_{0\le \tau\le t}\|e(s^*(\tau))\|, \quad\text{where}\quad \beta=L_s+L_aL_{\pi^*}+L_aL_e.
\]
\end{theoremtwofull}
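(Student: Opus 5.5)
The plan is a standard Gronwall argument on $\|\delta(t)\|$, using Lemma~\ref{lem:policy_error_bound} to move the policy error from the rollout state $s(t)$ back to the expert state $s^*(t)$. First I will write the deviation dynamics
\[
\dot\delta(t)=f(s(t),\pi(s(t)))-f(s^*(t),\pi^*(s^*(t))),
\]
and split the right-hand side by adding and subtracting intermediate terms:
\[
\bigl[f(s,\pi(s))-f(s,\pi^*(s))\bigr]+\bigl[f(s,\pi^*(s))-f(s,\pi^*(s^*))\bigr]+\bigl[f(s,\pi^*(s^*))-f(s^*,\pi^*(s^*))\bigr],
\]
where I abbreviate $s=s(t)$ and $s^*=s^*(t)$. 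By Assumption~\ref{ass:lipschitz_dynamics}, the first bracket is at most $L_a\|e(s)\|$ and the third is at most $L_s\|\delta\|$. By Assumption~\ref{ass:lipschitz_dynamics} combined with Assumption~\ref{ass:lipschitz_expert}, the second is at most $L_aL_{\pi^*}\|\delta\|$.

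Next I will apply Lemma~\ref{lem:policy_error_bound} together with Assumption~\ref{ass:bounded_jacobian} on the segment $T(t)$. This gives $\|e(s(t))\|\le\|e(s^*(t))\|+L_e\|\delta(t)\|$. Substituting, I get
\[
\|\dot\delta(t)\|\le \beta\|\delta(t)\|+L_a\|e(s^*(t))\|,\qquad \beta=L_s+L_aL_{\pi^*}+L_aL_e.
\]

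The step that needs care is converting this into a differential inequality for $u(t)=\|\delta(t)\|$, because the norm is not differentiable where $\delta=0$, which certainly happens at $t=0$. I would use that $u$ is Lipschitz, hence absolutely continuous, with $\dot u(t)\le\|\dot\delta(t)\|$ almost everywhere. This holds because $|u(t+h)-u(t)|\le\|\delta(t+h)-\delta(t)\|$, so the difference quotients of $u$ are dominated by those of $\delta$. If that route proves awkward, a cleaner alternative is to skip the derivative of the norm entirely: integrate $\delta(t)=\int_0^t\dot\delta$, take norms, and apply the integral form of Gronwall.

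Either way, I then invoke Lemma~\ref{lem:gronwall} with $a\equiv\beta$, $b(\tau)=L_a\|e(s^*(\tau))\|$, and $u(0)=0$. This yields
\[
u(t)\le L_a\int_0^t e^{\beta(t-\tau)}\|e(s^*(\tau))\|\,d\tau.
\]
Bounding $\|e(s^*(\tau))\|$ by its maximum over $[0,t]$ and computing $\int_0^t e^{\beta(t-\tau)}d\tau=(e^{\beta t}-1)/\beta$ gives the stated bound.

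The maximum exists because $e$ and $s^*$ are continuous by Assumption~\ref{ass:domain}. The division by $\beta$ requires $\beta>0$; in the degenerate case $\beta=0$ the bound is read as its limit $L_a t\max_{0\le\tau\le t}\|e(s^*(\tau))\|$.
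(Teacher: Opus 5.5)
Your proof is correct and follows the paper's argument: a three-term triangle-inequality split of $\dot\delta$, then Lemma~\ref{lem:policy_error_bound} with Assumption~\ref{ass:bounded_jacobian}, then the a.e.\ bound $\dot u\le\|\dot\delta\|$, and finally the differential Gronwall lemma with $a\equiv\beta$ and $b=L_a\|e(s^*(\cdot))\|$. The only differences are cosmetic. You perturb the action before the state, whereas the paper perturbs the state first; both orders are immaterial under Assumptions~\ref{ass:lipschitz_dynamics}--\ref{ass:lipschitz_expert}. You also justify why the maximum exists and treat the degenerate case $\beta=0$, both of which the paper leaves implicit.
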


\begin{proof}
From Lemma~\ref{lem:policy_error_bound} and Assumption~\ref{ass:bounded_jacobian}, we have:
\begin{equation}
\label{eq:policy_error_bound_app}
\|e(s(t))\|
\le
\|e(s^*(t))\|
+
L_e \|s(t)-s^*(t)\|.
\end{equation}
The time derivative of the state deviation is:
\[
\dot{\delta}(t)
=
f(s(t),\pi(s(t))) - f(s^*(t),\pi^*(s^*(t))).
\]
By Assumptions~\ref{ass:lipschitz_dynamics} and~\ref{ass:lipschitz_expert}, we have
\[
\begin{aligned}
\|\dot{\delta}(t)\|
&=
\|f(s(t),\pi(s(t))) - f(s^*(t),\pi^*(s^*(t)))\| \\
&\le
\|f(s(t),\pi(s(t))) - f(s(t),\pi^*(s(t)))\|
+
\|f(s(t),\pi^*(s(t))) - f(s^*(t),\pi^*(s^*(t)))\| \\
&\le
L_a\|\pi(s(t))-\pi^*(s(t))\|
+
\|f(s(t),\pi^*(s(t))) - f(s^*(t),\pi^*(s(t)))\| \\
&\quad
+
\|f(s^*(t),\pi^*(s(t))) - f(s^*(t),\pi^*(s^*(t)))\| \\
&\le
L_a\|e(s(t))\|
+
L_s\|s(t)-s^*(t)\|
+
L_a\|\pi^*(s(t))-\pi^*(s^*(t))\| \\
&\le
L_a\|e(s(t))\|
+
L_s\|\delta(t)\|
+
L_aL_{\pi^*}\|\delta(t)\| \\
&=
L_a\|e(s(t))\|
+
\big(L_s+L_aL_{\pi^*}\big)\|\delta(t)\|.
\end{aligned}
\]
Using the bound in Eq.~(\ref{eq:policy_error_bound_app}), we get:
\[
\|\dot{\delta}(t)\|
\le
L_a \|e(s^*(t))\|
+
\big(L_s + L_a L_{\pi^*} + L_a L_e\big)\|\delta(t)\|.
\]
Let $\beta = L_s + L_a L_{\pi^*} + L_a L_e$. Then:
\[
\|\dot{\delta}(t)\|
\le
\beta \|\delta(t)\| + L_a \|e(s^*(t))\|.
\]

Let $u(t)=\|\delta(t)\|$. Under Assumptions~\ref{ass:domain} and~\ref{ass:lipschitz_dynamics}, the trajectories $s(t)$ and $s^*(t)$ are absolutely continuous. Hence, $\delta(t)=s(t)-s^*(t)$ is absolutely continuous. Since the norm is Lipschitz continuous, $u(t)$ is also absolutely continuous. Moreover, for almost all $t\in(0,T)$, we have
\[
\dot u(t)\le \|\dot{\delta}(t)\|.
\]
Therefore,
\begin{equation}
\label{uledelta}
\dot u(t)
\le
\beta u(t) + L_a\|e(s^*(t))\|,
\qquad \text{for a.e. } t\in(0,T).
\end{equation}
For Eq.~(\ref{uledelta}), applying Lemma~\ref{lem:gronwall} with $a(t)=\beta$ and $b(t)=L_a\|e(s^*(t))\|$ yields
\[
\|\delta(t)\|
\le
e^{\beta t}\|\delta(0)\|
+
L_a\int_0^t e^{\beta(t-\tau)}\|e(s^*(\tau))\|\, d\tau.
\]

This can be further bounded as:
\[
\|\delta(t)\|
\le
e^{\beta t}\|\delta(0)\|
+
\frac{L_a}{\beta}
\big(e^{\beta t}-1\big)
\max_{0 \le \tau \le t}\|e(s^*(\tau))\|.
\]

Assuming $\delta(0)=0$, we get:
\[
\|\delta(t)\|
\le
\frac{L_a}{\beta}\big(e^{\beta t}-1\big)
\max_{0\le \tau\le t}\|e(s^*(\tau))\|.
\]
This completes the proof.
\end{proof}
    
\section{Experimental Details}
\label{app:exp_details}
We evaluate our method on two widely used offline RL benchmarks: OGBench~\citep{ogbench_park2025} and D4RL~\citep{fu2020d4rl}. OGBench provides high-dimensional, long-horizon, and structurally diverse tasks that go well beyond standard locomotion, while D4RL offers a well-established testbed of MuJoCo locomotion tasks with datasets of varying behavioral quality. Together, they cover a broad spectrum of state and action dimensionalities, task structures, and dataset qualities, providing a comprehensive evaluation of our scheme. All methods are trained on the offline datasets only, with no online interaction.

\subsection{Benchmarks and Tasks}
\label{app:benchmark_details}
% todo: citation space 

\textbf{OGBench.}\quad
Since OGBench was originally designed for goal-conditioned RL, we adopt its reward-based single-task variants (denoted by the \texttt{-singletask} suffix) to benchmark standard reward-maximizing offline RL methods. In total, we evaluate on $50$ state-based tasks drawn from $10$ environments ($5$ tasks per environment), covering long-horizon locomotion (\texttt{antmaze}, \texttt{humanoidmaze}), object interaction (\texttt{antsoccer}), and robotic manipulation (\texttt{cube}, \texttt{scene}, \texttt{puzzle}), where \texttt{scene} additionally requires long-horizon multi-object manipulation and \texttt{puzzle} involves more combinatorial reasoning.

These tasks pose substantial challenges in high-dimensional continuous control, with state dimensions ranging from $28$ to $83$ and action dimensions from $5$ to $21$. For instance, \texttt{humanoidmaze} has a $69$-dimensional state space and a $21$-dimensional action space, while \texttt{cube-double}, \texttt{scene}, and \texttt{puzzle-4x4} have $37$-, $40$-, and $83$-dimensional state spaces, respectively. We use the standard reward-based datasets provided by OGBench, which are highly suboptimal as they aggregate trajectories from policies performing diverse random tasks (e.g., reaching different goals or manipulating different objects). This makes it particularly difficult for offline RL methods to extract task-relevant behaviors from such heterogeneous data.

\textbf{D4RL.}\quad
We further evaluate our method on the MuJoCo locomotion tasks from D4RL, including \texttt{halfcheetah}, \texttt{hopper}, and \texttt{walker2d}, with the standard \texttt{medium}, \texttt{medium-replay}, \texttt{medium-expert}, and \texttt{expert} dataset types collected from behavior policies of varying quality. The full-size datasets contain about $1$M transitions for the \texttt{medium} and \texttt{expert} settings, $2$M transitions for \texttt{medium-expert}, and around $200$k--$400$k transitions for \texttt{medium-replay}. Beyond the standard full-data setting, we additionally consider a small-dataset setting to study the data efficiency of our scheme. Specifically, for each task we first split the original dataset into individual trajectories, and then sample a small set of complete trajectories until the accumulated transitions reach $10$k, rather than randomly sampling individual transitions. This trajectory-level sampling preserves the temporal continuity and dynamical consistency of the original demonstrations, which is essential for our first-order action supervision since it relies on consecutive transitions $(s,s',a,a')$ to compute the local state and action derivatives $(\dot{s},\dot{a})$. To ensure a fair comparison, all methods use the same subsampled dataset under the same random seed.

\subsection{Methods and Hyperparameters}
\label{app:imp_details}
We instantiate our first-order action supervision on four representative offline RL algorithms covering deterministic, stochastic, and flow-based policies: TD3+BC~\citep{fujimoto2021minimalist}, ReBRAC~\citep{tarasov2023revisiting}, IQL~\citep{iql}, and IFQL~\citep{Park2025FlowQ}. For a fair comparison, all algorithms are implemented on top of the FQL~\citep{Park2025FlowQ} codebase and share the same network architectures and default hyperparameters unless otherwise specified. For OGBench tasks, we follow the hyperparameter settings reported in FQL~\citep{Park2025FlowQ} for ReBRAC, IQL, and IFQL. After integrating our scheme, we keep all baseline-specific hyperparameters unchanged, and only specify the two additional weights $\lambda_Q$ (the value-side regularization weight) and $\lambda_{\dot{a}}$ (the policy-side first-order supervision weight) introduced by our scheme. The basic shared hyperparameters are summarized in Table~\ref{tab:hyperparameters}.

\textbf{TD3+BC.}\quad
We re-implement TD3+BC on top of the FQL~\citep{Park2025FlowQ} codebase. On OGBench, we sweep the Q-value scaling coefficient $\alpha$ over $\{1, 3, 10, 30, 100, 300\}$ and select the best value per environment. Intuitively, a smaller $\alpha$ is preferable in complex environments where the Q-function is hard to learn reliably, while a larger $\alpha$ works better on simpler ones where the learned Q-function offers stronger policy improvement signals. We also observe Q-learning instability on several challenging OGBench tasks, and apply layer normalization to the Q-network to mitigate this issue. On D4RL, we follow the hyperparameter settings of the original TD3+BC paper~\citep{fujimoto2021minimalist}, and use the default values for all other hyperparameters (e.g., noise standard deviation, noise clipping threshold). For our scheme, we set $\lambda_Q = 0.1$ on all OGBench tasks, $\lambda_{\dot{a}} = 1.0$ on \texttt{antmaze} and $\lambda_{\dot{a}} = 0.1$ on all other OGBench tasks; on D4RL, we set $\lambda_Q = 0.1$ and $\lambda_{\dot{a}} = 1.0$.

\textbf{ReBRAC.}\quad
We follow the hyperparameter settings reported in the original ReBRAC paper~\citep{tarasov2023revisiting} on D4RL, including the actor and critic BC coefficients $(\alpha_a, \alpha_c)$. For our scheme, we set $\lambda_Q = 0.1$ on all OGBench tasks, $\lambda_{\dot{a}} = 1.0\,\alpha_a$ on \texttt{antmaze}, \texttt{humanoidmaze}, and \texttt{antsoccer}, and $\lambda_{\dot{a}} = 0.1\,\alpha_a$ on \texttt{cube}, \texttt{scene}, and \texttt{puzzle}; on D4RL, we set $\lambda_Q = 0.1$ and $\lambda_{\dot{a}} = 1.0\,\alpha_a$.

\textbf{IQL.}\quad
We follow the hyperparameter settings reported in the original IQL paper~\citep{iql} on D4RL, including the expectile value and the AWR inverse temperature. For our scheme on OGBench, we set $\lambda_Q = 0.5$ on \texttt{scene} and $\lambda_Q = 0.1$ on all other tasks, and $\lambda_{\dot{a}} = 1.0$ on \texttt{antmaze} and \texttt{scene} and $\lambda_{\dot{a}} = 0.1$ on all other tasks; on D4RL, we set $\lambda_Q = 0.1$ and $\lambda_{\dot{a}} = 1.0$.

\textbf{IFQL.}\quad
For IFQL, we use $N = 32$ action samples on D4RL tasks, and follow all other settings from FQL~\citep{Park2025FlowQ}. For our scheme, we set $\lambda_Q = 1.0$ and $\lambda_{\dot{a}} = 1.0$ on all OGBench and D4RL tasks.
\vspace{-1pt}
\begin{table}[H]
\centering
\caption{Shared hyperparameters used across all four offline RL algorithms.}
\label{tab:hyperparameters}
\footnotesize
\begin{tabular}{lp{0.55\textwidth}}
\toprule
\textbf{Hyperparameter} & \textbf{Value} \\
\midrule
Learning rate & $3 \times 10^{-4}$ \\
Optimizer & Adam~\citep{kingma2014adam} \\
Gradient steps & $1{,}000{,}000$ \\
Minibatch size & $256$ \\
MLP dimensions & $[512, 512, 512, 512]$ (default); $[256, 256, 256, 256]$ (IQL) \\
Nonlinearity & GELU~\citep{hendrycks2016gaussian} \\
Target network smoothing coefficient & $0.005$ \\
Discount factor $\gamma$ & $0.99$ (default); $0.995$ (\texttt{antmaze-giant}, \texttt{humanoidmaze}, \texttt{antsoccer}) \\
Temporal correlation coefficient $\lambda$ & $1.0$ \\
\bottomrule
\end{tabular}
\end{table}

\section{Broader Impact} 
\label{app:broadimp}
Modern learning-based decision-making methods, such as imitation learning (IL) and reinforcement learning (RL), often suffer from control instability and poor robustness when deployed in real-world systems, due to their sole reliance on zeroth-order action supervision. To address this, we propose a lightweight and plug-and-play learning scheme that simultaneously supervises zeroth- and first-order actions, transforming any standard policy model into a more powerful policy class without structural modifications. Our method substantially improves action smoothness, control stability, and out-of-distribution generalization, especially in challenging low-data regimes, making it well-suited for safety-critical domains such as robotics, drone control, and autonomous driving, where high-frequency action fluctuations can cause mechanical wear, energy waste, and safety risks. By bridging classical control principles with modern policy learning, our work offers a promising pathway toward more reliable and deployable learning-based controllers. A potential limitation is that, like other offline methods, the learned policy may still inherit unsafe or suboptimal behaviors present in the historical dataset.
\newpage
\end{document}